\documentclass{article}

\PassOptionsToPackage{numbers, compress}{natbib}
\usepackage[preprint]{neurips_2026}

\makeatletter
\renewcommand{\@noticestring}{Preprint. Under review.}
\makeatother

\usepackage{graphicx}
\usepackage[utf8]{inputenc}
\usepackage[T1]{fontenc}
\usepackage{hyperref}
\usepackage{url}
\usepackage{booktabs}
\usepackage{amsfonts}
\usepackage{nicefrac}
\usepackage{microtype}
\usepackage{xcolor}

\usepackage{amsmath}
\usepackage{amssymb}
\usepackage{mathtools}
\usepackage{amsthm}

\usepackage{amsmath,amsfonts,bm}

\def\eqref#1{equation~\ref{#1}}
\def\1{\bm{1}}

\DeclareMathAlphabet{\mathsfit}{\encodingdefault}{\sfdefault}{m}{sl}
\SetMathAlphabet{\mathsfit}{bold}{\encodingdefault}{\sfdefault}{bx}{n}

\DeclareMathOperator*{\argmin}{arg\,min}

\usepackage{multicol}
\usepackage{wrapfig,lipsum}
\usepackage{framed}
\usepackage{enumitem}
\usepackage{kotex}

\usepackage{multirow, array}
\usepackage{algorithm}
\usepackage{algpseudocode}
\usepackage{cleveref}
\Crefname{appsec}{Appendix}{Appendices}
\crefname{appsec}{Appendix}{Appendices}
\Crefname{appsubsec}{Appendix}{Appendices}
\crefname{appsubsec}{Appendix}{Appendices}
\Crefname{appsubsubsec}{Appendix}{Appendices}
\crefname{appsubsubsec}{Appendix}{Appendices}
\crefname{equation}{eq.}{eqs.}
\Crefname{equation}{Eq.}{Eqs.}

\definecolor{cadmiumgreen}{rgb}{0.0, 0.42, 0.24}
\definecolor{cornellred}{rgb}{0.7, 0.11, 0.11}
\definecolor{Gray}{gray}{0.9}

\hypersetup{citecolor=MidnightBlue, linkcolor=BrickRed}
\hypersetup{
  linkcolor = cornellred,
  citecolor  = cadmiumgreen,
  colorlinks = true,
}

\title{\textsc{COVA}riance-Induced Fairness Gap Penalty for Subgroup-Fair Clustering}

\author{
  \textbf{Kyungseon Lee}$^{1,}$\thanks{Equal contribution.}
  \quad
  \textbf{Hankyo Jeong}$^{1,}$\footnotemark[1]
  \quad
  \textbf{Kunwoong Kim}$^{2,}$\footnotemark[1]\\[-0.1em]
  \textbf{Kwanho Lee}$^{1}$
  \quad
  \textbf{Yongdai Kim}$^{1,}$\thanks{Corresponding author.}\\[0.45em]
  $^{1}$Department of Statistics, Seoul National University 
  $^{2}$KAIST AI\\[0.35em]
  \texttt{ppleeqq@snu.ac.kr \quad
          hankyo2273@snu.ac.kr \quad
          kwkim.online@gmail.com}\\[-0.05em]
  \texttt{khlee0527@snu.ac.kr \quad
          ydkim903@snu.ac.kr}
}

\theoremstyle{plain}
\newtheorem{theorem}{Theorem}[section]

\theoremstyle{definition}
\newtheorem{definition}{Definition}[section]
 
\theoremstyle{remark}
\newtheorem{remark}[theorem]{Remark}

\begin{document}

\maketitle

\begin{abstract}
    Fair clustering aims to make cluster assignments independent of sensitive attributes, but this goal becomes challenging when multiple sensitive attributes jointly define many subgroups.
    In such settings, directly extending existing fair clustering algorithms is computationally expensive or numerically unstable, especially when the number of subgroups grows exponentially and some subgroups contain only a few instances.
    To address these challenges, we define a subgroup-fairness gap for clustering and derive a covariance-based surrogate that exactly matches this gap.
    We then introduce a continuous relaxation of the surrogate, enabling efficient gradient-based optimization and yielding our proposed algorithm, \textsc{Cova-FC}.
    We also show that subgroup fairness alone does not imply marginal fairness, and extend our framework to capture a subgroup-marginal-fairness gap.
    Experiments on benchmark datasets show that \textsc{Cova-FC} achieves competitive cost-fairness trade-offs and improves computational efficiency over existing baselines in both subgroup and higher-order marginal settings.
\end{abstract}

\section{Introduction}
Algorithmic fairness is a critical requirement for decision-making systems that affect access to opportunities, resources, or services, as systematic biases and disparities can harm protected and underrepresented populations.
Accordingly, regulations and policies increasingly require trustworthy and non-discriminatory algorithms, making fair learning important both ethically and practically \citep{goodman2017european, mehrabi2021survey}.
Fairness in machine learning has been studied across various tasks, including classification \citep{pmlr-v80-agarwal18a, donini2018empirical, goh2016satisfying, KIM2022441, kim2025fairness}, regression \citep{ADW19, NEURIPS2020_51cdbd26}, generation \citep{friedrich2023fair, xu2018fairgan}, and clustering \citep{bera2019fair, chierichetti2017fair, kleindessner2019guarantees}.

Among these, \textit{fair clustering} \citep{chierichetti2017fair} aims to construct clusters whose memberships are not strongly associated with sensitive attributes.
% Concretely, it seeks solutions in which no cluster disproportionately concentrates a sensitive attribute combination, and the cluster assignment is not predictive of sensitive attributes.
The technical goal is to achieve a desirable trade-off between clustering cost and fairness, since improving fairness typically comes with an increase in cost \citep{10.1287/opre.1100.0865, 9541160}.
% Various learning algorithms for fair clustering have been developed by \cite{backurs2019scalable, chierichetti2017fair, esmaeili2021fcbc, kim2025fair, zeng2023deep, ziko2021variational}.

However, most existing fair clustering algorithms were developed mainly for a single sensitive attribute \citep{backurs2019scalable, chierichetti2017fair, esmaeili2021fcbc, kim2025fair, ziko2021variational}.
In many modern applications, however, individuals are associated with multiple sensitive attributes, whose intersections can reveal fairness violations even though a given algorithm is fair for each sensitive attribute.
For example, a lending model may equalize approval rates across gender and race overall, yet the subgroup ``female \& minority race'' may still receive markedly lower approval rates.
This motivates the need for subgroup fairness, where
a \emph{subgroup} is defined by a specific combination of sensitive attribute values.
Although there is a large number of studies for subgroup-fair classification \citep{blum_et_al:LIPIcs.ITCS.2020.55,Hu_2024,kearns2018empirical, kim2025draf, Lai2025fairicp,molina2023boundingapproximatingintersectionalfairness}, not much work has been done for subgroup-fair clustering, which is the main theme of this paper.

Existing fair clustering algorithms could be extended to this setting by treating multiple sensitive attributes as a single sensitive attribute with multiple categories,
but this naive extension creates substantial computational and optimization challenges.
Given \(q\) binary sensitive attributes, the number of subgroups becomes \(2^q\), and many subgroups can be very sparse.
Algorithms using linear programming (LP), such as \citep{bera2019fair, esmaeili2021fcbc, kim2025fair}, become computationally expensive as the number of subgroup-dependent constraints increases.
Penalty-based methods such as VFC \citep{ziko2021variational} and FairKM \citep{abraham2020fairkm} would be promising but they have their own limitations.
VFC becomes numerically unstable when very small subgroups exist because their fairness penalties depend on subgroup sizes. 
% FairKM relies on hard-assignment updates that limit flexibility in gradient-based optimization \citep{abraham2020fairkm}.
% FairKM only considers hard cluster assignments and thus subgroup fairness could not be achieved in particularly when
% there are tiny subgroups, but its modification for soft cluster assignments would not be easy. 
FairKM relies on hard cluster assignment, making it difficult to finely match the desired subgroup proportions across clusters when some subgroups contain only a few instances, 
but its modification for soft cluster assignment would not be easy. 
% See Appendix \ref{app:vfc_deep_compare} for details.
Thus, subgroup-fair clustering remains largely underexplored, especially in settings where the number of subgroups is large.

In this paper, we develop a learning algorithm for subgroup-fair clustering with multiple sensitive attributes.
Formally, assume that we are given $q$ binary sensitive attributes, and let the $i$-th instance have a sensitive attribute vector $s_i \in \{0,1\}^q,$ which induces $2^{q}$ subgroups.

%%%%%%%%%%%%%%%%%%%%%%%%%%%%%%%%%%%%
%%%%%%%%%%%%%%%%%%%%%%%%%%%%%%%%%%%%
%%%%%%%%%%%%%%%%%%%%%%%%%%%%%%%%%%%%
\paragraph{Our approach.}
% Inspired by \cite{kearns2018preventing}, we first define a subgroup-fairness gap
% and propose a smooth surrogate of the subgroup-fairness gap 
% called the covariance-based fairness gap,
% and develop a learning algorithm which minimizes the clustering cost under the fairness constraint by use of a gradient-descent optimization. 
% The resulting assignment updates are parallelizable across instances, enabling scalable optimization.
% Experiments on benchmark datasets show that our proposed algorithm 
% \textsc{Cova-FC} (\textit{\textbf{COVA}riance-based \textbf{F}air \textbf{C}lustering}) achieves superior cost-fairness trade-offs together with computational efficiency and numerical stability.
Inspired by \cite{kearns2018preventing}, we first define a subgroup-fairness gap for clustering and derive a smooth covariance-based surrogate for this gap.
The surrogate leads to a gradient-based algorithm that optimizes the clustering cost with a fairness penalty and supports parallel per-instance assignment updates, making the method scalable to large data with many subgroups.
We call the proposed algorithm \textsc{Cova-FC} (\textit{\textbf{COVA}riance-based \textbf{F}air \textbf{C}lustering}).

\paragraph{Additional marginal control.}
% In practice, we would want not only subgroup fairness but also marginal fairness: fairness for each sensitive attribute.
% Unfortunately, subgroup fairness does not imply marginal fairness and so we need an additional tool to ensure marginal fairness.
% We propose a modification of \textsc{Cova-FC} that can ensure subgroup and marginal fairness simultaneously without much additional computation.
Beyond subgroup fairness, marginal fairness (fairness for each sensitive attribute) may also be required in practice, since subgroup fairness alone may fail to control marginal fairness.
To handle such cases, we further extend \textsc{Cova-FC} to jointly control subgroup and marginal fairness with little additional computation.

The main contributions of this work are:
\begin{itemize}[noitemsep=1pt, topsep=1pt]
    \item[$\diamond$]
     We develop a subgroup-fair clustering algorithm under multiple sensitive attributes
     that is numerically stable and easily scalable to achieve competitive cost-fairness trade-offs.

    \item[$\diamond$]
    We propose a way to ensure subgroup and marginal fairness simultaneously without much additional computation burden.
    
    \item[$\diamond$]
    We conduct extensive numerical studies to illustrate the advantages of \textsc{Cova-FC} over existing fair clustering algorithms.
\end{itemize}

%%%%%%%%%%%%%%%%%%%%%%%%%%%%%%%%%%%%
\section{Subgroup-fairness gap and its relaxation}\label{sec:partial}

Suppose we are given a set of instances $\{(x_i, s_i)\}_{i=1}^n$, where each $x_i \in \mathbb{R}^d$ is a $d$-dimensional feature vector and $s_i=(s_{i}^{(1)}, \ldots, s_{i}^{(q)})^\top\in\{0,1\}^q$ is a vector of $q$ binary sensitive attributes.
For each $s\in\{0,1\}^q$, let $n_s := \sum_{i=1}^n \mathbb{I}(s_i=s).$
For $N\in\mathbb{N}$, we write $[N]=\{1,\ldots,N\}$.
We consider only binary sensitive attributes for notational simplicity but our algorithm can be extended to multi-valued sensitive attributes.

\paragraph{Clustering objective.}

Following prior works for fair clustering \citep{bera2019fair, esmaeili2021fcbc,  kim2025fair, li2020deep, zeng2023deep, ziko2021variational}, we consider the $K$-means clustering problem.
For a given number of clusters $K \in \mathbb{N},$ let $\mathbf{\mu} = ( \mu_{1}, \ldots, \mu_{K} )^{\top}$ with $\mu_{k} \in \mathbb{R}^{d}, k \in [K]$ denote the cluster centroids, 
and let $\mathbf{A} = (A_{1}, \ldots, A_{n})$ with $A_{i} \in [K], i \in [n]$ denote the (hard) cluster assignment vector.
Later, we overload $\mathbf A$ to also denote a soft cluster assignment $A_i \in \mathbb S^K$ where $\mathbb{S}^K$ is the simplex on $\mathbb{R}^K$ (e.g., \Cref{sec:cont_relax}), whenever the meaning is clear from context.
The standard $K$-means clustering problem is to find the cluster centroids $\mu$ and cluster assignments $\mathbf{A}$
that minimizes the \textit{clustering cost}, i.e., the average of within-cluster distances:
$ \min_{\mathbf{\mu}, \mathbf{A}} \frac{1}{n} \sum_{i=1}^{n} L_{i}(\mathbf{\mu}, \mathbf{A}), $
where $L_{i}(\mathbf{\mu}, \mathbf{A}) := \sum_{k=1}^{K} \mathbb{I}(A_{i} = k) \Vert x_{i} - \mu_{k} \Vert^{2}$ is the cost for the $i$-th instance.
% Note that this objective is equivalent to solving $\min_{\mathbf{\mu}} \frac{1}{n} \sum_{i=1}^{n} \min_{k \in [K]} \Vert x_{i} - \mu_{k} \Vert^{2}.$

%%%%%%%%%%%%%%%%%%%%%%%%%%%%%%%%%%
%%%%%%%%%%%%%%%%%%%%%%%%%%%%%%%%%%
\subsection{Subgroup-fairness gap}\label{sec:gap-def}

Our primary goal is to find the cluster centroids and cluster assignment $(\mathbf{\mu},\mathbf{A})$ that achieve (i) not only a low clustering cost $\frac{1}{n}\sum_{i} L_{i} (\mathbf{\mu}, \mathbf{A}),$ (ii) but also fairness with respect to multiple sensitive attributes. In this subsection, we propose a measure to quantify the level of subgroup unfairness.

For each $k \in [K],$ define the overall assignment probability as
$
p_{k}(\mathbf A) := \frac{1}{n} \sum_{i=1}^{n} \mathbb{I}(A_{i} = k).
$
For each subgroup $s \in \{0,1\}^q$ with $n_s > 0,$ define the conditional assignment probability in subgroup $s$ as
$
p_{k\mid s}(\mathbf A) := \frac{1}{n_{s}}\sum_{i:s_i=s}\mathbb I(A_i=k).
$
Let $\pi_s := n_s/n$ denote the subgroup proportion.
Inspired by the idea of subgroup fairness in \cite{kearns2018preventing}, we define the following subgroup-fairness gap for clustering.
    
\begin{definition}[Subgroup-fairness gap]\label{def:Delta_subgroup}
The \textit{subgroup-fairness gap} of $\mathbf A$ is defined as
    \begin{equation}\label{eq:Delta_subgroup}
        \Delta(\mathbf A)
        :=
        \max_{s \in \{0,1\}^q,\, n_s>0}
        \pi_s
        \sum_{k=1}^K
        \left|p_k(\mathbf A)-p_{k\mid s}(\mathbf A)\right|.
    \end{equation}
\end{definition}
We say $\mathbf{A}$ is \textit{subgroup-fair at level $\varepsilon$} if $\Delta(\mathbf A) \le \varepsilon.$
The weight $\pi_s$ reflects the size of subgroup $s,$ and prevents tiny subgroups from dominating the gap $\Delta.$
Without this weight, we cannot reduce the subgroup-fairness gap under a certain level that depends on the sizes of tiny subgroups.
See \Cref{sec:weight_necessity} for an example.

%%%%%%%%%%%%%%%%%%%%%%%%%%%%%%%%%%
%%%%%%%%%%%%%%%%%%%%%%%%%%%%%%%%%%
\paragraph{Objective of learning subgroup-fair clusters.}

Based on the subgroup-fairness gap defined in \Cref{def:Delta_subgroup}, a natural goal is to minimize the clustering cost under the subgroup fairness constraint:
\begin{equation}\label{eq:subgroup-fair-problem}
    \min_{\mathbf{\mu}, \mathbf A}\ \frac{1}{n}\sum_{i=1}^n L_i(\mathbf \mu, \mathbf A)
    \quad
    \text{subject to}
    \quad
    \Delta(\mathbf A) \le \varepsilon.
\end{equation}
A common way to handle such a constrained optimization is to solve a penalized problem, i.e., minimize $\frac{1}{n}\sum_i L_i(\mathbf \mu, \mathbf A) + \lambda \Delta(\mathbf A)$ where $\lambda>0$ is the Lagrangian multiplier.

%%%%%%%%%%%%%%%%%%%%%%

%%%%%%%%%%%%%%%%%%%%%%
\subsection{$\textup{CR}(\mathbf A)$: a covariance-based surrogate for \(\Delta(\mathbf A)\)}\label{sec:proposed_pen_cr}

\paragraph{Motivation: challenges in using \(\Delta\) as the fairness constraint.}

Directly using \(\Delta(\mathbf A)\) as the fairness penalty poses a critical computational challenge.
This is because, unlike the standard \(K\)-means objective, the penalized objective is not separable in \(\mathbf A\).
In \(K\)-means, the clustering cost \(\frac{1}{n} \sum_{i=1}^n L_i(\mathbf\mu,\mathbf A)\) decomposes over \(i\in[n]\), so each assignment \(A_i\) can be updated in parallel given \(\mathbf\mu\).
In contrast, \(\Delta(\mathbf A)\) is not separable because it involves the absolute difference between the overall assignment proportion \(p_k(\mathbf A)=\frac{1}{n}\sum_{i=1}^n \mathbb I(A_i=k)\) and the subgroup-conditional assignment proportion \(p_{k\mid s}(\mathbf A)=\frac{1}{n_s}\sum_{i:s_i=s}\mathbb I(A_i=k)\), both of which depend on sums of assignments over certain instances.
As a result, changing a single assignment \(A_i\) affects these proportions as well as $\Delta(\mathbf{A})$, making parallel updates of $A_i$ difficult.
To address the problem, we introduce in this subsection a surrogate that rewrites \(\Delta(\mathbf A)\) in a more optimization-friendly form.

\paragraph{Proposed surrogate fairness gap.}
% To resolve the issue, we introduce a \textit{covariance-based} fairness gap as a surrogate for \(\Delta(\mathbf A)\), which is a key contribution of this work.
We now define the \textit{covariance-based} surrogate.
This idea comes from the insight that subgroup unfairness can be understood as how well subgroup memberships predict the cluster assignments, and so subgroup unfairness can be viewed as statistical dependence between the cluster assignment and the subgroup membership.

For each subgroup membership \(s \in \{0,1\}^q\), define $c_{is} := 2\mathbb I(s_i=s)-1$ and $\bar c_s := \frac{1}{n}\sum_{i=1}^n c_{is} = 2\pi_s - 1.$
Let
$ \textup{C}(\mathbf A,s,\beta) := \frac{1}{2n}\sum_{i=1}^n \left(\sum_{k=1}^K \beta_k \mathbb I(A_i=k)\right) (c_{is}-\bar c_s), $
and define the \textit{\textbf{C}ova\textbf{R}iance-based (CR) subgroup-fairness gap} as
\begin{equation}\label{eq:CR_def}
    \textup{CR}(\mathbf A)
    :=
    \max_{s\in\{0,1\}^q:\,n_s>0}
    \max_{\beta\in\mathcal B_\infty}
    \textup{C}(\mathbf A,s,\beta),
    \quad
    \textup{where}
    \quad
    \mathcal B_\infty := \{\beta\in\mathbb R^K : \|\beta\|_\infty \le 1\}.
\end{equation}
% where \(\mathcal B_\infty := \{\beta\in\mathbb R^K : \|\beta\|_\infty \le 1\}\).

For a given subgroup membership \(s \in \{0, 1 \}^{q} \), the term \((c_{is}-\bar c_s)\) is the centered subgroup sign, and \(\sum_{k=1}^K \beta_k \mathbb I(A_i=k)\) is a signed score assigned to the cluster label \(A_i\) with \(\beta_k\in[-1,1]\).
Hence, \(\textup{C}(\mathbf A,s,\beta)\) is the empirical covariance between the signed cluster assignment and the centered membership for subgroup \(s\).
Maximizing over \(\beta\in\mathcal B_\infty\) chooses the sign pattern that aligns best with the imbalance in subgroups, and maximizing over \(s\) selects the most violated subgroup.
Therefore, \(\textup{CR}(\mathbf A)\) can also be understood as a tool to quantify the largest linear dependence between the cluster assignment \(\mathbf A\) and the subgroup membership.
The following \Cref{thm:Delta_CR_equal} provides a theoretical support for this interpretation, whose proof is deferred to \Cref{sec:appen-proofs}.
% The proof is deferred to \Cref{sec:appen-proofs}.

\begin{theorem}[Equivalence between \(\Delta\) and \(\textup{CR}\)]\label{thm:Delta_CR_equal}
    For any hard cluster assignment \(\mathbf A\), we have $ \Delta(\mathbf A) = \textup{CR}(\mathbf A). $
\end{theorem}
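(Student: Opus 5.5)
Since both $\Delta(\mathbf A)$ and $\textup{CR}(\mathbf A)$ are maxima over the same index set $\{s\in\{0,1\}^q : n_s>0\}$, the plan is to prove the stronger subgroup-wise identity
\[
\max_{\beta\in\mathcal B_\infty}\textup{C}(\mathbf A,s,\beta)=\pi_s\sum_{k=1}^K\left|p_k(\mathbf A)-p_{k\mid s}(\mathbf A)\right|
\]
for each fixed $s$ with $n_s>0$. Taking the maximum over $s$ on both sides then gives the statement.

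\textbf{Step 1: rewrite $\textup{C}$ linearly in $\beta$.} Exchanging the sums over $i$ and $k$ gives
\[
\textup{C}(\mathbf A,s,\beta)=\sum_{k=1}^K\beta_k\, g_k,\qquad g_k:=\frac{1}{2n}\sum_{i=1}^n\mathbb I(A_i=k)(c_{is}-\bar c_s).
\]
Next substitute $c_{is}-\bar c_s=2\big(\mathbb I(s_i=s)-\pi_s\big)$. This yields
\[
g_k=\frac1n\sum_{i:s_i=s}\mathbb I(A_i=k)-\pi_s\,\frac1n\sum_{i=1}^n\mathbb I(A_i=k)=\pi_s\,p_{k\mid s}(\mathbf A)-\pi_s\,p_k(\mathbf A).
\]
The first term uses $\frac{1}{n}\sum_{i:s_i=s}\mathbb I(A_i=k)=\frac{n_s}{n}p_{k\mid s}=\pi_s p_{k\mid s}$. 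Hence $g_k=\pi_s\big(p_{k\mid s}(\mathbf A)-p_k(\mathbf A)\big)$.

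\textbf{Step 2: maximize over the $\ell_\infty$ ball.} Because $\textup{C}$ is linear in $\beta$, its maximum over $\mathcal B_\infty$ is the dual norm $\|g\|_1=\sum_k|g_k|$. Choosing $\beta_k=\mathrm{sign}(g_k)$ attains this value, and Hölder's inequality $\sum_k\beta_k g_k\le\|\beta\|_\infty\|g\|_1$ gives the matching upper bound. Since $\pi_s\ge 0$, we get $\|g\|_1=\pi_s\sum_k|p_k(\mathbf A)-p_{k\mid s}(\mathbf A)|$, which is the subgroup-wise identity.

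There is no real obstacle here. The only step needing care is the centering algebra in Step 1: one must check that $\bar c_s=2\pi_s-1$ makes the constant parts of $c_{is}$ cancel, and that the factor $\tfrac{1}{2n}$ exactly absorbs the factor $2$. The argument uses hardness of $\mathbf A$ only through the definitions of $p_k$ and $p_{k\mid s}$. The same computation would therefore go through for soft assignments, with $\mathbb I(A_i=k)$ replaced by $A_{ik}$.
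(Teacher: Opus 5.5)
Your proof is correct and follows essentially the same route as the paper. The paper specializes the general $\mathcal W$ version (Theorem~\ref{thm:Delta_ms_equiv_hard}), proves the per-subgroup identity $\pi_s\bigl(p_k(\mathbf A)-p_{k\mid s}(\mathbf A)\bigr)=-\frac{1}{2n}\sum_{i}\mathbb I(A_i=k)(c_{is}-\bar c_s)$, and then applies $\ell_\infty/\ell_1$ duality before maximizing over $s$. The one difference favors you: the paper reaches the identity through within-cluster proportions $q_k$, which divides by cluster sizes and needs a nonempty-cluster assumption, whereas your direct expansion $c_{is}-\bar c_s=2(\mathbb I(s_i=s)-\pi_s)$ needs no such assumption.
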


Notably, \(\textup{CR}(\mathbf A)\) is not an approximate surrogate, but an exact surrogate for \(\Delta(\mathbf A)\). 

\subsection{Optimization challenges of \(\textup{CR}(\mathbf A)\) and a continuous relaxation}\label{sec:cont_relax}

Although \(\textup{CR}(\mathbf A)\) is exactly equivalent to \(\Delta(\mathbf A)\), it is still difficult to be minimized directly in practice.
The difficulty comes from two discrete components: the subgroup membership \(s \in \{0,1\}^q\) and the hard cluster assignment \(\mathbf A \in [K]^n\). 
Even a small change in \(\mathbf A\) can switch the most violated subgroup abruptly, which often makes the optimization unstable.
To improve optimization stability, we relax both the hard cluster assignment and the discrete maximization over subgroups into continuous ones.

\paragraph{(1) Relaxation of the cluster assignment \(\mathbf A\).}

We first replace the hard cluster assignment vector with soft cluster assignment vector.
Specifically, we let \(\mathbf A = (A_1,\ldots,A_n)^\top \in [0,1]^{n\times K}\), where each row \(A_i=(A_{i1},\ldots,A_{iK})^\top\) lies in the simplex \(\mathbb S^K\).
Here, \(A_{ik}\) represents the assignment probability of \(x_i\) to cluster \(k\).
Hard assignment is recovered as the special case in which each \(A_i\) is one-hot.
% Thus, any statement for soft $\mathbf A$ can be applied to the hard case directly.

Under this relaxation, we replace \(\mathbb I(A_i=k)\) by \(A_{ik}\) in the covariance term, and rewrite the clustering cost as \(L_i(\mathbf{\mu}, \mathbf{A}) = \sum_{k=1}^{K} A_{ik}\Vert x_i-\mu_k\Vert^2\).
This makes both the clustering loss and the covariance term differentiable with respect to \(\mathbf A\), which enables gradient-based updates.

\paragraph{(2) Relaxation of the worst-case subgroup.}
We next relax the hard maximization over subgroups.
Let \(\mathcal S_+ := \{s\in\{0,1\}^q : n_s>0\}\) be the set of observed subgroups and let \(M_+ := |\mathcal S_+|\).
Since empty subgroups do not contribute to the empirical fairness gap, we optimize only over \(\mathcal S_+\).
Instead of selecting a single observed subgroup, we allow a convex combination of subgroup indicators through a weight vector \(\mathbf v \in \mathbb S^{M_+}\).
This yields the continuous surrogate
\begin{equation}\label{eq:J_simplex_def_refined}
\overline{\textup{CR}}(\mathbf A)
:=
\max_{\mathbf v \in \mathbb S^{M_+}} \max_{\beta\in\mathcal B_\infty}
\frac{1}{2n}\sum_{i=1}^{n}
\left(\sum_{k=1}^K \beta_k A_{ik}\right)
 \mathbf v ^\top (c_{i}-\bar c),
\end{equation}
where $c_i := (c_{is})_{s\in \mathcal S_+}$ and $\bar c := \frac{1}{n}\sum_{i=1}^n c_i.$
Optimizing over \(\mathbf v\) avoids abrupt switching of the worst-case subgroup and leads to a smoother and continuous objective.
The following theorem shows that $\overline{\textup{CR}}(\mathbf A)$ is a continuous relaxation of $\Delta(\mathbf{A}),$ whose proof is deferred to Appendix \ref{sec:appen-proofs}. 

\begin{theorem}[Relationship between \(\Delta\) and \(\overline{\textup{CR}}\)]\label{thm:Delta_CRbar_equal}
    For any hard cluster assignment \(\mathbf A\), we have
    $ \Delta(\mathbf A)=\overline{\textup{CR}}(\mathbf A). $
\end{theorem}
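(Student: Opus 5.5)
By \Cref{thm:Delta_CR_equal} it is enough to show $\overline{\textup{CR}}(\mathbf A)=\textup{CR}(\mathbf A)$ for hard $\mathbf A$. For hard assignments $A_{ik}=\mathbb I(A_i=k)$, so the inner sum in \eqref{eq:J_simplex_def_refined} is linear in $\mathbf v$:
\[
\frac{1}{2n}\sum_{i=1}^n\Bigl(\sum_k\beta_kA_{ik}\Bigr)\mathbf v^\top(c_i-\bar c)=\sum_{s\in\mathcal S_+}v_s\,\textup{C}(\mathbf A,s,\beta).
\]
I will exchange the two maxima, writing $\overline{\textup{CR}}(\mathbf A)=\max_{\beta\in\mathcal B_\infty}\max_{\mathbf v\in\mathbb S^{M_+}}\sum_s v_s\textup{C}(\mathbf A,s,\beta)$. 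This is legitimate because both sets are compact and a sup–sup can always be swapped.

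\textbf{Inner maximization over $\mathbf v$.} For fixed $\beta$ the objective is a linear function on the simplex. Its maximum is therefore attained at a vertex and equals $\max_{s\in\mathcal S_+}\textup{C}(\mathbf A,s,\beta)$. Substituting this gives
\[
\overline{\textup{CR}}(\mathbf A)=\max_{\beta}\max_{s\in\mathcal S_+}\textup{C}(\mathbf A,s,\beta)=\max_{s\in\mathcal S_+}\max_\beta \textup{C}(\mathbf A,s,\beta)=\textup{CR}(\mathbf A),
\]
where the middle step swaps the maxima back. This uses only that $\mathcal S_+$ is exactly the index set $\{s:n_s>0\}$ appearing in \eqref{eq:CR_def}.

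\textbf{Self-contained check (optional).} If I want the argument not to depend on \Cref{thm:Delta_CR_equal}, I will verify $\max_\beta\textup{C}(\mathbf A,s,\beta)=\pi_s\sum_k|p_k-p_{k|s}|$ directly. Since $c_{is}-\bar c_s=2(\mathbb I(s_i=s)-\pi_s)$, we get
\[
\textup{C}(\mathbf A,s,\beta)=\sum_k\beta_k\bigl(\pi_s p_{k|s}-\pi_s p_k\bigr).
\]
Maximizing over $\|\beta\|_\infty\le1$ gives the $\ell_1$ norm $\pi_s\sum_k|p_{k|s}-p_k|$, with $\beta_k=\mathrm{sign}(p_{k|s}-p_k)$.

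\textbf{Main obstacle.} There is no real difficulty here. The only care needed is the bookkeeping that the $\mathbf v$-objective is linear once $\beta$ is fixed, which requires swapping the max order first. I would not attempt a minimax argument with $\mathbf v$ innermost-but-one, since it is unnecessary.
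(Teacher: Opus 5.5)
Your proposal is correct and takes essentially the same route as the paper. The paper specializes \Cref{thm:Delta_ms_equiv_hard} to singleton subgroups: it writes each weighted subgroup gap as $\max_{\beta\in\mathcal B_\infty}$ of the covariance via $\ell_1$/$\ell_\infty$ duality, swaps $\max_{\mathbf v}$ and $\max_{\beta}$, and uses that a linear function on $\mathbb S^{M_+}$ is maximized at a vertex.

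Your direct identity $\textup{C}(\mathbf A,s,\beta)=\pi_s\sum_k\beta_k(p_{k\mid s}-p_k)$ is slightly cleaner than the paper's derivation. It avoids the cluster-conditional ratios $q_k(m)$, so it does not need the paper's standing assumption that no cluster is empty.
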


\section{Considering marginal fairness}
\label{sec:marg_sub}

The subgroup-fairness gap introduced in the previous section provides a tractable objective for handling multiple sensitive attributes.
However, it evaluates fairness only at the finest subgroup level and may therefore provide loose control over marginal fairness.
This can be socially unacceptable in applications, even when subgroup fairness is achieved.
Indeed, a small subgroup-fairness gap does not necessarily imply a small marginal-fairness gap.
See \Cref{sec:subgroup_marginal_counterexample} for a counterexample and \Cref{sec:large-number-subgroup} for empirical evidence.
To address this limitation, we modify the subgroup-fairness gap to enforce subgroup and marginal fairness simultaneously.

\subsection{Modification of the subgroup-fairness gap for marginal fairness}\label{sec:subgroup-not-marginal} 

For \(j\in[q]\) and \(a\in\{0,1\}\), define the marginal subgroup as
\(
    W_{j,a}:=\{s\in\{0,1\}^q:s_j=a\}.
\)
This set contains all of the subgroups whose \(j\)-th sensitive attribute takes the value \(a\).
We call $W_{j,0}$ and $W_{j,1}$ the \emph{marginal subgroups} corresponding to the sensitive attribute $s_j.$
Let
\(
    n_{W_{j,a}}:=\sum_{i=1}^n \mathbb I(s_i\in W_{j,a}),\
    \pi_{W_{j,a}}:=\frac{n_{W_{j,a}}}{n},
\)
and define
\(
    p_{k\mid W_{j,a}}(\mathbf A)
    :=
    \frac{1}{n_{W_{j,a}}}
    \sum_{i:s_i\in W_{j,a}}\mathbb I(A_i=k).
\)
We say that \textit{$\mathbf{A}$ is marginally fair} with respect to $s_j$ if $\Delta_j(\mathbf{A})$ is small where
$$\Delta_j(\mathbf{A})=\max_{a\in\{0,1\} }
\pi_{W_{j,a}}  \sum_{k=1}^K
    \left|p_k(\mathbf A)-p_{k\mid W_{j,a}}(\mathbf A)\right|.$$
In turn, we say that $\mathbf{A}$ is marginally fair if 
$\Delta_{\textup{marg}}(\mathbf A)=\max_{j\in [q]} \Delta_j(\mathbf{A}),$ 
which we call the marginal-fairness gap, is small. 
Finally, we propose a modification of the subgroup-fairness gap in \Cref{def:Delta_sub_marg} that ensures subgroup fairness and marginal fairness simultaneously.
\begin{definition}[Subgroup-marginal-fairness gap]\label{def:Delta_sub_marg}
    The subgroup-marginal-fairness gap of $\mathbf{A}$ is defined as
    $
    \Delta_{\textup{sub-marg}}(\mathbf A)
    :=
    \max\left\{
    \Delta(\mathbf A),
    \Delta_{\textup{marg}}(\mathbf A)
    \right\}.
    $
\end{definition}

\subsection{Higher-order marginal fairness}
\label{sec:higher_order_marginal}

Beyond marginal fairness, one may also consider $r$-th order marginal fairness whose fairness gap is defined as follows.
Let $\theta$ be a subset of $[q]$ with $|\theta|=r$ and define $s_\theta=(s_j, j\in \theta)$.
For $a\in \{0,1\}^r$, define $W_{s_\theta, a}=\{s: s_\theta=a\}$.
We say that
$\mathbf{A}$ is \textit{$r$-th order marginally fair} with respect to $s_\theta$ if $\Delta_{s_\theta}(\mathbf{A})$ is small, where
$$
\Delta_{s_\theta}(\mathbf{A}) := \max_{a\in\{0,1\}^r }
\pi_{W_{s_\theta,a}}  \sum_{k=1}^K
\left|p_k(\mathbf A)-p_{k\mid W_{s_\theta,a}}(\mathbf A)\right|.
$$
Here $\pi_{W_{s_\theta,a}}$ and $p_{k\mid W_{s_\theta,a}}(\mathbf A)$
are defined similarly to $\pi_{W_{j,a}}$ and $p_{k\mid W_{j,a}}(\mathbf A).$    
In turn, we say that $\mathbf{A}$ is $r$-th order marginally fair if 
$\Delta_{\textup{marg}, r}(\mathbf A)=\max_{\theta \subset [q]: |\theta|=r} \Delta_{s_\theta}(\mathbf{A}),$ 
which we call the $r$-th order marginal-fairness gap, is small. 
Similar to \Cref{def:Delta_sub_marg}, we define subgroup-marginal-fairness gap up to the $r$-th order as the following.
\begin{definition}[Subgroup-marginal-fairness gap up to the $r$-th order]\label{def:Delta_sub_marg_k}
    The subgroup-marginal-fairness gap of $\mathbf{A}$ up to the $r$-th order is defined as
    $
    \Delta_{\textup{sub-marg}, r}(\mathbf A)
    :=
    \max\left\{
    \Delta(\mathbf A), 
    \max_{l\in [r]} \Delta_{\textup{marg}, l}(\mathbf A)
    \right\}.
    $
\end{definition}

\subsection{Surrogate subgroup-marginal-fairness gap}
\label{sec:sub-mar_gap}

In this subsection, we show that the surrogate subgroup-fairness gap introduced in Section \ref{sec:cont_relax}
can be modified easily for the subgroup-marginal-fairness gap up to the $r$-th order.
For notational simplicity, we introduce a notion of subgroup-subsets.
A \textit{subgroup-subset} $W$ is nothing but a subset of $\{0,1\}^q.$
Note that any $r$-th order marginal subgroups as well as subgroups themselves are subgroup-subsets.w

Let $\mathcal{W}$ be the set of all subgroup-subsets involving the subgroup-marginal-fairness gap, which includes
$W_{s_\theta,a}$ for all $\theta\subset [q]$ with $1\le|\theta|\le r$ and $a\in \{0,1\}^{|\theta|}$ as well
as $\{s\}$ for all $s\in \{0,1\}^q.$ Then, we have 
$\Delta_{\textup{sub-marg}, r}(\mathbf A)=\Delta(\mathbf{A}; \mathcal{W}),$ where
\begin{equation}
\label{eq:gap-W}
    \Delta(\mathbf A;\mathcal W)
    :=
    \max_{W\in \mathcal{W}:\, n_{W}>0}
    \pi_W
    \sum_{k=1}^K
    \left|p_k(\mathbf A)-p_{k\mid W}(\mathbf A)\right|.
\end{equation}
By choosing $\mathcal{W}$ accordingly, we can make $\Delta(\mathbf A;\mathcal W)$
equal either the subgroup-fairness gap or the subgroup-marginal-fairness gap (up to the $r$-th order).
We are going to develop a fair clustering algorithm under the fairness constraint on $\Delta(\mathbf A;\mathcal W)$
for a given $\mathcal{W}.$

% Let $\mathcal{W}=\{W_m\}_{m=1}^{M}$ with $M := \bigl|\mathcal{W}\bigr|$ and $M_{+}:=\bigl|\{m \in [M]: n_{W_m}>0\}\bigr|$.
% For each $m \in [M]$, define $c_{im}:=2\mathbb{I}(s_i \in W_m)-1$ and $\bar{c}_m := \frac{1}{n}\sum_{i=1}^{n}c_{im}$, and write $c_i:=(c_{im})_{m \in M_{+}}$ and $\bar{c} := \frac{1}{n}\sum_{i=1}^{n}c_i$.
Let $\mathcal{W}=\{W_1,W_2,...,W_M\}$ with $M := |\mathcal{W}|$,
and let $\mathcal{M}_+ := \{m \in [M]: n_{W_m}>0\}$ with $M_+ := |\mathcal{M}_+|$.
For each $m \in [M]$, define $c_{im}:=2\mathbb{I}(s_i \in W_m)-1$ 
and $\bar{c}_m := \frac{1}{n}\sum_{i=1}^{n}c_{im}$, 
and write $c_i:=(c_{im})_{m \in \mathcal{M}_+}$ 
and $\bar{c} := \frac{1}{n}\sum_{i=1}^{n}c_i$.

Similarly to what we have done in Section \ref{sec:partial}, for hard assignment, define
\begin{equation}\label{eq:CR_W_def}
    \textup{CR}(\mathbf A;\mathcal W)
    :=
    \max_{m \in \mathcal {M}_{+}}
    \max_{\beta\in\mathcal B_\infty}
    \frac{1}{2n}\sum_{i=1}^n
    \left(\sum_{k=1}^K \beta_k \mathbb I(A_i=k)\right)
    (c_{im}-\bar c_m),
\end{equation}
and for soft assignment, replacing \(\mathbb I(A_i=k)\) by \(A_{ik}\) and relaxing the worst-group choice gives
\begin{equation}\label{eq:CRbar_W_def}
    \overline{\textup{CR}}(\mathbf A;\mathcal W)
    :=
    \max_{\mathbf v\in\mathbb S^{M_+}}
    \max_{\beta\in\mathcal B_\infty}
    \frac{1}{2n}\sum_{i=1}^{n}
    \left(\sum_{k=1}^K \beta_k A_{ik}\right)
    \mathbf v^\top(c_i-\bar c),
\end{equation}
% where $c_i := (c_{im})_{m \in [|\mathcal W|]} \in \mathbb R^{|\mathcal W|}$ and $\bar c := \frac{1}{n}\sum_{i=1}^n c_i$.

\begin{theorem}[Equivalence for fairness over $\mathcal W$]\label{thm:Delta_ms_equiv_hard}
For any hard cluster assignment \(\mathbf A\),
\begin{equation}\label{eq:Delta_ms_hard}
\Delta(\mathbf A;\mathcal W)=\textup{CR}(\mathbf A;\mathcal W)=\overline{\textup{CR}}(\mathbf A;\mathcal W).
\end{equation}
% For soft \(\mathbf A\), $\overline{\textup{CR}}(\mathbf A; \mathcal W)$ extends to a smooth surrogate (by replacing $\mathbb I(A_i = k)$ with $A_{ik}$) that recovers the gap at hard solutions.
\end{theorem}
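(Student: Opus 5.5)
The argument follows the proofs of \Cref{thm:Delta_CR_equal} and \Cref{thm:Delta_CRbar_equal}, with a singleton subgroup $\{s\}$ replaced by a general subgroup-subset $W_m$. The point is that none of the algebra uses the specific form of the indicator $\mathbb I(s_i=s)$. It only uses the fact that it is a $\{0,1\}$-valued membership indicator with $n_{W_m}>0$.

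\textbf{Step 1: fix $m\in\mathcal M_+$ and compute the covariance term.} Write $I_i:=\mathbb I(s_i\in W_m)$, so that $c_{im}-\bar c_m = 2(I_i-\pi_{W_m})$. For hard $\mathbf A$,
\begin{equation*}
\frac{1}{2n}\sum_{i=1}^n\Big(\sum_{k}\beta_k\mathbb I(A_i=k)\Big)(c_{im}-\bar c_m)
=\sum_{k=1}^K\beta_k\Big(\frac1n\sum_i \mathbb I(A_i=k)I_i-\pi_{W_m}p_k(\mathbf A)\Big)
=\pi_{W_m}\sum_{k=1}^K\beta_k\big(p_{k\mid W_m}(\mathbf A)-p_k(\mathbf A)\big).
\end{equation*}
This is linear in $\beta$. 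Maximizing over $\mathcal B_\infty$ by taking $\beta_k=\mathrm{sign}(p_{k\mid W_m}-p_k)$ gives the $\ell_1$ norm, i.e.\ $\pi_{W_m}\sum_k|p_k-p_{k\mid W_m}|$. Taking the max over $m\in\mathcal M_+$ then yields $\textup{CR}(\mathbf A;\mathcal W)=\Delta(\mathbf A;\mathcal W)$. Subsets with $n_{W}=0$ are excluded from both sides, so the index sets agree.

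\textbf{Step 2: the relaxed version.} For fixed $\beta$, the objective in \eqref{eq:CRbar_W_def} is linear in $\mathbf v$:
\begin{equation*}
\sum_{m\in\mathcal M_+}v_m\,g_m(\beta), \qquad g_m(\beta):=\pi_{W_m}\sum_k\beta_k(p_{k\mid W_m}-p_k).
\end{equation*}
I swap the two maxima, which is legitimate because iterated suprema commute. For each $\beta$, the max over the simplex of a linear function is attained at a vertex and equals $\max_m g_m(\beta)$. Hence
\begin{equation*}
\overline{\textup{CR}}=\max_\beta\max_m g_m(\beta)=\max_m\max_\beta g_m(\beta)=\textup{CR}(\mathbf A;\mathcal W).
\end{equation*}

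\textbf{Main obstacle.} The only step needing care is Step 2. The nonlinear coupling of $\mathbf v$ and $\beta$ might seem to let a mixture of groups exceed the worst single group. It cannot, because we never need a minimax exchange: both operations are maxima and commute, and linearity in $\mathbf v$ for fixed $\beta$ forces a vertex optimum. Everything else is bookkeeping, namely checking that $\bar c_m=2\pi_{W_m}-1$ and that the centering term produces exactly $\pi_{W_m}p_k$.
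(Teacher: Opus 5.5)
Your proof is correct and follows essentially the same route as the paper: the paper rewrites $\pi_{W_m}(p_k-p_{k\mid W_m})$ as $-\frac{1}{2n}\sum_i\mathbb I(A_i=k)(c_{im}-\bar c_m)$ and applies $\ell_1$--$\ell_\infty$ duality over $\mathcal B_\infty$. It then handles $\overline{\textup{CR}}$ by linearity in $\mathbf v$, attainment at a vertex of the simplex, and exchanging the two maxima. The differences are minor: your direct identity $\frac1n\sum_i\mathbb I(A_i=k)I_i=\pi_{W_m}p_{k\mid W_m}$ avoids the paper's intermediate $q_k(m)$ and its unstated no-empty-cluster assumption, and your argument is self-contained, which matters because the paper derives \Cref{thm:Delta_CR_equal,thm:Delta_CRbar_equal} as corollaries of this theorem, so citing them would be circular.
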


The proofs are deferred to Appendix \ref{sec:appen-proofs}.
\Cref{thm:Delta_ms_equiv_hard} implies that, $\textup{CR}(\mathbf A; \mathcal W)$ and $\overline{\textup{CR}}(\mathbf A; \mathcal W)$ are both equal to the fairness gap over $\mathcal W$ for hard cluster assignment.
When \(\mathcal W\) is the collection of all subgroups, \Cref{thm:Delta_ms_equiv_hard} reduces to the subgroup-only case.

%%%%%%%%%%%%%%%%%%%%%%%%%%%%%%%%%%%%%%
\section{Proposed algorithm: \textsc{Cova-FC}}
\label{sec:algorithm}

Given a collection $\mathcal{W}$ of subgroup-subsets, we learn fair clusters by minimizing
\begin{equation}\label{eq:obj}
    \frac{1}{n}\sum_{i=1}^n L_i(\mathbf \mu, \mathbf A)
    + \lambda\, \overline{\textup{CR}}(\mathbf A; \mathcal W)
\end{equation}
with respect to the centroids \(\mathbf \mu=(\mu_1,\ldots,\mu_K)\) and the soft assignment matrix $\mathbf A=(A_i)_{i=1}^n,$ for $A_i\in\mathbb S^K.$
Here, \(\lambda\ge0\) controls the trade-off between clustering cost and fairness.

We call the resulting algorithm \textit{\textbf{Cova}riance-based \textbf{F}air \textbf{C}lustering} (\textsc{Cova-FC}).
The algorithm alternates between two steps.
First, for fixed \(\mathbf A\), it updates the adversarial variables \((\beta,\mathbf v)\) to identify the most violated subgroup-subset direction in \(\overline{\textup{CR}}\).
Second, for fixed \((\beta,\mathbf v)\), it updates the soft assignment \(\mathbf A\) by gradient descent and updates the centroids \(\mathbf \mu\) as in standard K-means algorithms with soft assignment.
The assignment update has a closed-form gradient and decomposes over instances, which enables efficient parallel updates.
Detailed update formulas and the full pseudocode are provided in \Cref{app:cova_fc_algorithm}.
Below, we explain two computational advantages of \textsc{Cova-FC}.

%%%%%%%%%%%%%%%%%%%
\paragraph{Computational efficiency.}

An additional advantage of using \(\overline{\textup{CR}}\) as the fairness penalty is its computational efficiency, i.e., it offers a lower computational complexity than existing algorithms.
Each iteration of \textsc{Cova-FC} has complexity
\(\mathcal O(nKd+n(K+2^q))\), where \(nKd\) is the cost for the computation of \(K\)-means algorithm and \(n(K+2^q)\) is the additional cost for adversarial updates for fairness.
Table \ref{tab:complexity} compares the computational complexity of \textsc{Cova-FC} with other baselines.
It shows that \textsc{Cova-FC} avoids (i) the cubic dependence in $n$ of LP-based methods and (ii) the large term \(nK2^q\) of VFC.
Further details as well as large-scale experimental results are given in \Cref{app:complexity}.

\begin{table}[h!]
    \centering
    \caption{Comparison of fair clustering methods in terms of computational complexity.}
    \label{tab:complexity}
    \footnotesize
    \setlength{\tabcolsep}{10pt}
    \begin{tabular}{l l}
    \toprule
    Method & Complexity \\
    \midrule
    \textbf{\textsc{Cova-FC}} $\checkmark$ 
    & $n (Kd + K + 2^{q})$ \\
    
    VFC \citep{ziko2021variational}
    & $nK (d + 2^{q})$ \\
    
    FairKM \citep{abraham2020fairkm}
    & $nK(nd + 2^q)$ \\ 
    
    FCA \citep{kim2025fair}
    & $n^2 (n + Kd)$ \\
    
    FCBC \citep{esmaeili2021fcbc}
    & $(nK+2^{q}K)^3$ \\
    
    FRA \citep{bera2019fair}
    & $(nK+2^{q})^3$ 
    \\
    \bottomrule
    \end{tabular}
    \vskip -0.1in
\end{table}

\paragraph{Parallel updates.}
Once the adversarial variables are fixed, the covariance surrogate decomposes over each instance, enabling parallel updates of the cluster assignments, making \textsc{Cova-FC} applicable to large datasets with many subgroups.
In contrast, existing fair clustering algorithms including FCBC \citep{esmaeili2021fcbc}, FCA \citep{kim2025fair}, and FRA \citep{bera2019fair} as well as FairKM \citep{abraham2020fairkm} 
are less suitable for efficient parallel assignment updates because their objectives are not instance-wise decomposable.
% VFC \citep{ziko2021variational} admits parallel updates but has inverse subgroup-cluster mass terms in gradient update term, which make numerical unstability in sparse-subgroup settings (see Appendix \ref{app:vfc_deep_compare}).
VFC \citep{ziko2021variational} admits parallel updates, but its gradient update involves inverse subgroup-cluster masses, which can lead to numerical instability in sparse-subgroup settings (see Appendix~\ref{app:vfc_deep_compare}).
Also \textsc{Cova-FC} runs \(3.2\times\) faster than VFC (see \Cref{app:scalability}) on a large dataset with \(n\) in the millions.
% Runtimes are reported in \Cref{tab:runtime} of \Cref{sec:tradeoff_compare}.

% In contrast, LP-based methods such as FCBC, FCA, and FRA, as well as hard-assignment methods such as FairKM, do not yield the same instance-wise decomposable assignment update.
% Although VFC also admits parallel assignment updates, \textsc{Cova-FC} avoids inverse subgroup-cluster mass terms, which improves numerical stability in sparse-subgroup settings.

Note that our original subgroup-fairness gap is also non-separable in the assignment.
Since both \(p_k(\mathbf A)\) and \(p_{k\mid s}(\mathbf A)\) depend on all assignments, updating a single \(A_i\) changes the gap globally.
In contrast, our covariance surrogate has an instance-wise additive form, enabling parallel updates of the assignment vectors and making \textsc{Cova-FC} more scalable in large subgroup/dataset settings.

%%%%%%%%%%%%%%%%%%%%%%%%%%%%%%%%%%%%
\section{Experiments}\label{sec:exp}

This section presents the results of our experimental analyses.
% Experimental settings such as datasets, baseline methods, and performance metrics are explained in \Cref{sec:exp-setting}.
\Cref{sec:tradeoff_compare} reports cost-fairness trade-offs and runtime on four benchmark datasets (\texttt{Adult}, \texttt{Dutch}, \texttt{Bank}, \texttt{Civilcomments}),
confirming favorable trade-offs and computational efficiency. 
\Cref{sec:large-number-subgroup} validates the subgroup-marginal extension on \texttt{Communities} ($q=18$), where the number of subgroups is large and many subgroups are sparse.
\Cref{sec:comp_comp} studies scalability to large $n$ on \texttt{ACSIncome} ($n \approx 1.66$M) and to high-dimensional features on an image dataset \texttt{CelebA}.
Finally, \Cref{sec:cr-comparison} analyzes the practical effects of the two continuous relaxations introduced in \Cref{sec:cont_relax}.

%%%%%%%%%%%%%%
\subsection{Settings}\label{sec:exp-setting}

\paragraph{Datasets.}

We use seven benchmark datasets, including five tabular datasets (\texttt{Adult}, \texttt{Dutch}, \texttt{Bank}, \texttt{Communities}, and \texttt{ACSIncome}), one text dataset (\texttt{Civilcomments}), and one image dataset (\texttt{CelebA}).
% For tabular datasets, we use numerical (continuous) features only, following prior works \citep{backurs2019scalable,esmaeili2021fcbc,kim2025fair,ziko2021variational}.
% \texttt{Adult} has $16$ subgroups from gender, race, age, and marital status, \texttt{Dutch} has $4$ subgroups from gender and age, and \texttt{Bank} has $3$ subgroups from marital status.
% \texttt{Civilcomments} uses DistilBERT embeddings \citep{Victor2019distillbert} of text comments and has $24$ subgroups.
% \texttt{Communities} \citep{redmond2002data} has $1{,}180$ non-empty subgroups from $18$ binary sensitive attributes, with many small subgroups.
% \texttt{ACSIncome} \citep{10.5555/3540261.3540757} is large-scale ($n=1{,}664{,}500$) and has $16$ subgroups.
% \texttt{CelebA} \citep{liu2015faceattributes} uses $512$-dimensional ResNet-18 image features and has $16$ subgroups.
% All features are standardized to zero mean and unit variance.
See \Cref{tab:datasets} in \Cref{sec:appen-impl} for detailed descriptions of these datasets.

\paragraph{Baselines.}

We compare \textsc{Cova-FC} with well-known existing methods: FCBC \citep{esmaeili2021fcbc}, VFC \citep{ziko2021variational}, FCA \citep{kim2025fair}, FRA \citep{bera2019fair}, and FairKM \citep{abraham2020fairkm}.
Note that FCA faces a bottleneck: it has a step which estimates the joint distributions over subgroups by solving LP, which incurs high computational cost as the number of subgroups increases.
% On large datasets  \texttt{Adult} and \texttt{Civilcomments}, 
% FCA did not complete within a reasonable amount of times, and so we omit FCA from these comparisons and report it only on \texttt{Dutch} and \texttt{Bank}.
On the \texttt{Adult} and \texttt{Civilcomments} datasets, an experiment did not complete within 6 hours, so we omit FCA from the comparisons on these datasets and report it only on \texttt{Dutch} and \texttt{Bank}.

%%%%%%

\paragraph{Performance metrics.}

We use the hard assignments $\hat A_i = \arg\max_k A_{ik}$ for performance evaluation (abbreviated $A_i$ in this section).
Clustering performance is measured by the \(K\)-means clustering cost \(\frac{1}{n}\sum_{i=1}^n L_i(\mathbf \mu,\mathbf A)\).
For fairness, we assess the subgroup fairness gap $\Delta(\mathbf A; \mathcal W)$ defined in \eqref{eq:gap-W}.
We denote the subgroup fairness gap as $\textup{SP} := \Delta(\mathbf A)$, and the 
$l$-th order marginal fairness gap as $\textup{MP}^{(l)} := \Delta_{\textup{marg},l}(\mathbf A)$ defined in \Cref{sec:higher_order_marginal}.
Note that \Cref{sec:partial,sec:marg_sub} define $\Delta$ for binary sensitive attributes ($a \in \{0,1\}$),
 but they can be extended easily for a multi-valued sensitive attribute with $m$ categories by replacing $a \in \{0,1\}$ with $a \in \{0, 1, \ldots, m-1\}.$ 
\medskip

\begin{remark}[Why we use the fairness gap rather than Balance]
    A popularly used fairness measure is the Balance metric \citep{chierichetti2017fair}, however, we do not use it as a primary metric for two reasons:
    \emph{(i)}  Balance is a ratio, so it collapses to zero whenever some cluster receives no instances from a subgroup. It is unavoidable when a subgroup has fewer than $K$ instances (e.g., on \texttt{Civilcomments}).
    \emph{(ii)}  Balance depends on absolute subgroup sizes, so the value is unstable when subgroups have very different sizes.
    The gap remains well-defined under zero cluster counts and reflects group size through the subgroup mass \(\pi_W\), rather than through raw count ratios.
    However, for datasets where Balance is well-defined (i.e., each subgroup is of sufficiently large size), we additionally report subgroup Balance results in \Cref{app:balance-sum-max-gap}.
\end{remark}

\paragraph{Implementation details.}
For the convergence criterion, we terminate the algorithm when the objective in \Cref{eq:obj} changes by less than $2 \times 10^{-5}$ for 15 iterations, with a maximum of $10{,}000$ iterations.
In the main subgroup-fairness experiments, we construct \(\mathcal W\) using all full subgroups only (i.e., $\mathcal W = \{\{s\}: s \in \{0,1\}^q\}$), and fix $K = 10$ for all experiments following prior work \citep{kim2025fair, ziko2021variational}.
Additional details are provided in \Cref{sec:appen-impl}.

%%%%%%%%%%%%%%%%%%%%%
\subsection{Performance comparison}\label{sec:tradeoff_compare}

\paragraph{Cost-fairness trade-off.}

% By examining the cost-fairness trade-off, we quantify the level of both marginal and subgroup fairness that can be achieved for a given clustering cost.
By examining the cost-fairness trade-off, we quantify the level of subgroup fairness achieved for a given clustering cost.
\Cref{fig:gap_trade_offs} shows the Pareto-front trade-offs between clustering cost and subgroup-fairness gap (SP).
\textsc{Cova-FC} attains lower SP than the baselines in most cases. %The corresponding $\mathrm{MP}^{(1)}$ trade-off is reported in \Cref{sec:large-number-subgroup}.
FCBC \citep{esmaeili2021fcbc} and VFC \citep{ziko2021variational} often fail to attain high fairness levels due to numerical overflow, whereas \textsc{Cova-FC} works over a wide range of fairness levels.
As previously explained, FCA \citep{kim2025fair} becomes computationally prohibitive on datasets with many subgroups due to its LP-based updates, so we report its results only on \texttt{Dutch} and \texttt{Bank} (see Appendix \ref{sec:appen-impl}).

\begin{figure*}[h!]
    \vskip -0.1in
    \centering
    \includegraphics[width=0.235\linewidth]{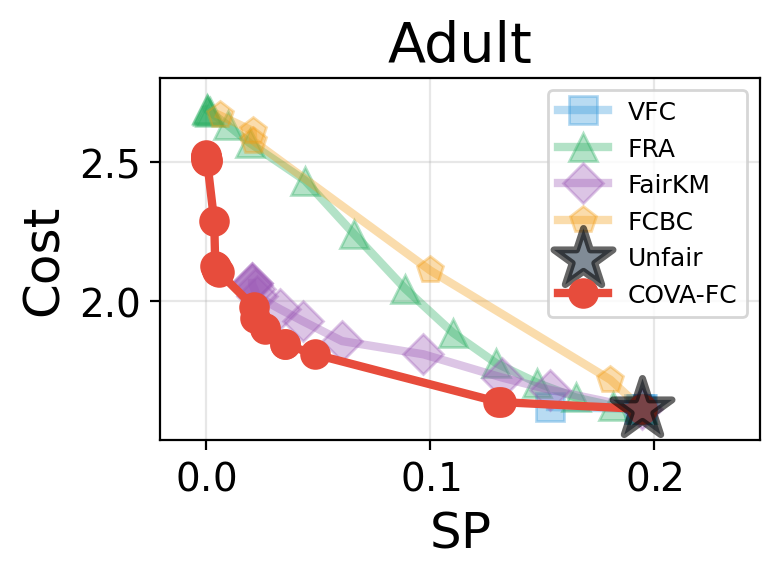}
    \includegraphics[width=0.235\linewidth]{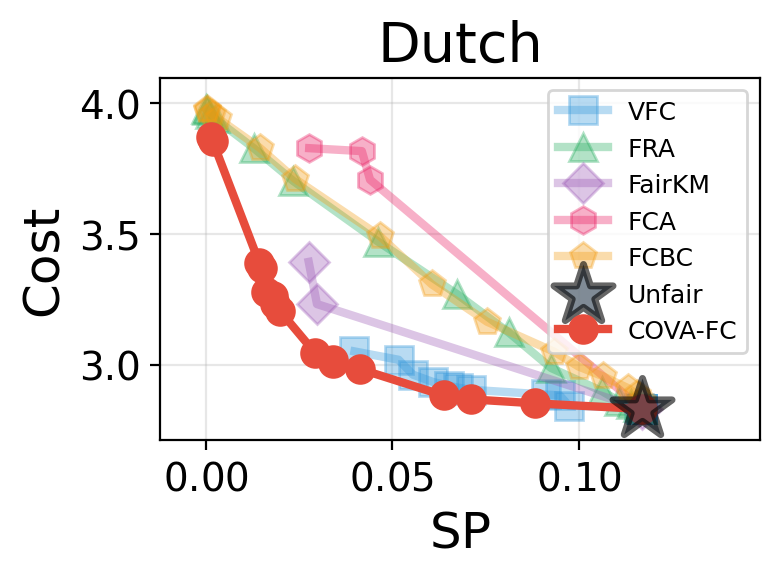}
    \includegraphics[width=0.235\linewidth]{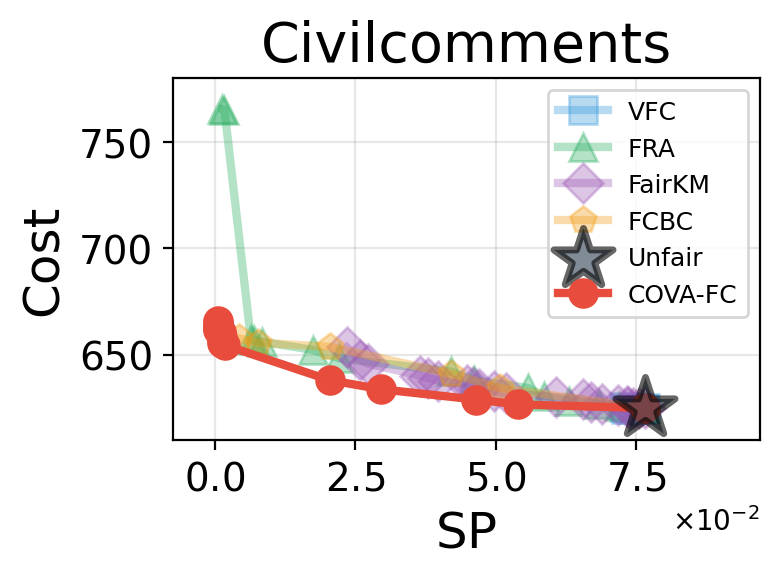}
    \includegraphics[width=0.235\linewidth]{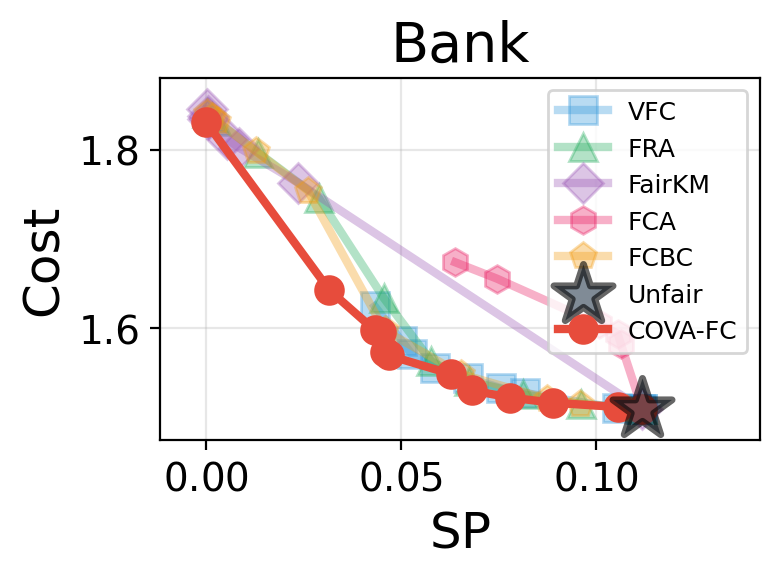}

    \caption{
    Comparison of trade-offs between subgroup-fairness gap (SP) and cost on (left to right) \texttt{Adult}, \texttt{Dutch}, \texttt{Civilcomments}, and \texttt{Bank} datasets.
    }
    \label{fig:gap_trade_offs}
    \vskip -0.1in
\end{figure*}

\newpage
\paragraph{Parallelization and comparison with VFC.}

\begin{wrapfigure}{r}{0.4\textwidth}
    \centering
    \includegraphics[width=0.8\linewidth]{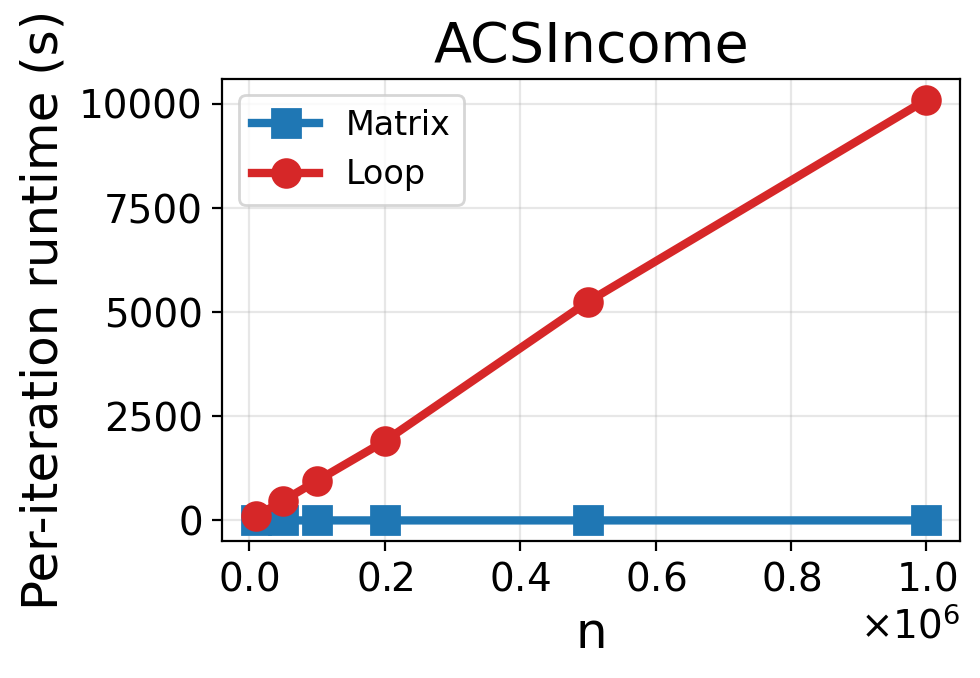}
    \caption{Per-iteration runtime of the loop and matrix implementations of the assignment update on \texttt{ACSIncome}.
    }
    \label{fig:loop_vs_matrix}
\end{wrapfigure}

We compare a per-instance loop implementation of the assignment update against our matrix implementation (and so parallel update) on \texttt{ACSIncome} with varying \(n \in \{10\text{k}, 100\text{k}, 500\text{k}, 1\text{M}\}\).
While the runtime of this loop version grows almost linearly with \(n\), the matrix version grows much more slowly over the same range, confirming that the per-instance assignment updates are effectively parallelized on GPU (\Cref{fig:loop_vs_matrix}).

In contrast, most baselines do not admit the same instance-wise decomposable assignment update: FRA, FCBC, and FCA rely on LP-based optimization, and FairKM uses hard reassignment steps depending on all cluster centroids and sizes of subgroups.
Among the baselines, VFC is the only parallelizable baseline.
With $L_{2}$-normalized data (as used in \cite{ziko2021variational}), \textsc{Cova-FC} still achieves lower $\mathrm{SP}$ at similar cost (\Cref{tab:vfc_cova_l2_sp}).
Further, in \Cref{app:vfc_deep_compare} we discuss numerical instability of VFC in sparse subgroup settings.
That is, VFC involves the inverse of subgroup-cluster masses in the fairness penalty which becomes too large when the sizes of some subgroups are too small.
\textsc{Cova-FC} does not have such a problem.

\begin{table}[h!]
    \centering
    \caption{Comparison between \textsc{Cova-FC} and VFC in terms of the lowest achievable SP.}
    \label{tab:vfc_cova_l2_sp}
    \footnotesize
    \setlength{\tabcolsep}{7pt}
    \begin{tabular}{lcc}
    \toprule
    \multicolumn{3}{c}{SP (Cost)} \\
    \midrule
    Dataset 
    & VFC \citep{ziko2021variational}
    & \textsc{Cova-FC} \\
    \midrule
    \texttt{Adult}   
    & \(0.0158\) \((0.47)\)
    & \(\mathbf{0.0002}\) \((0.50)\) \\
    \texttt{Dutch}   
    & \(0.0146\) \((0.40)\)
    & \(\mathbf{0.0026}\) \((0.41)\) \\
    \texttt{Civilcomments}   
    & \(0.0063\) \((0.86)\)
    & \(\mathbf{0.0009}\) \((0.90)\) \\
    \texttt{Bank}   
    & \(0.0354\) \((0.24)\)
    & \(\mathbf{0.0000}\) \((0.27)\) \\
    \texttt{Communities}  
    & \(0.0171\) \((0.58)\)
    & \(\mathbf{0.0056}\) \((0.86)\) \\
    \bottomrule
    \end{tabular}
    \vskip -0.1in
\end{table}

%%%%%%%%%%%%%%%%%%%%%%%%%%%%%%%%%%%%%%%%%%%%%%%%%%%%%%%%%%
\subsection{Marginal fairness extension}\label{sec:large-number-subgroup}

We evaluate the subgroup-marginal-fairness gap (\Cref{sec:marg_sub}) on the \texttt{Communities} dataset \citep{redmond2002data} with $q=18$ sensitive attributes (1{,}180 subgroups), where many subgroups contain only a few instances.
We compare two variants of \textsc{Cova-FC}: the subgroup-only variant, which uses $\Delta(\mathbf A)$ as the fairness gap, and the subgroup-marginal variant, which uses $\Delta_{\textup{sub-marg},2}(\mathbf A)$ from \Cref{def:Delta_sub_marg_k}.
% That is, the subgroup-marginal variant takes $\mathcal W$ to contain $W_{s_\theta, a}$ for all $\theta \subset [q]$ with $|\theta| \le 2$ and $a \in \{0,1\}^{|\theta|}$, together with all singletons $\{s\}$ for $s \in \{0,1\}^q$, as in \Cref{sec:sub-mar_gap}.
The latter includes first- and second-order marginal groups together with full subgroups.
The subgroup-marginal variant achieves substantially better $\mathrm{MP}^{(1)}$ and $\mathrm{MP}^{(2)}$ than the subgroup-only variant, at the cost of a mild degradation in $\mathrm{SP}$.
\Cref{fig:communities_large_subgroup} reports results averaged over five random splits.
% $\{42, 43, 44, 45, 46\}$.

\begin{figure*}[h!]
    \centering
    \includegraphics[width=0.245\linewidth]{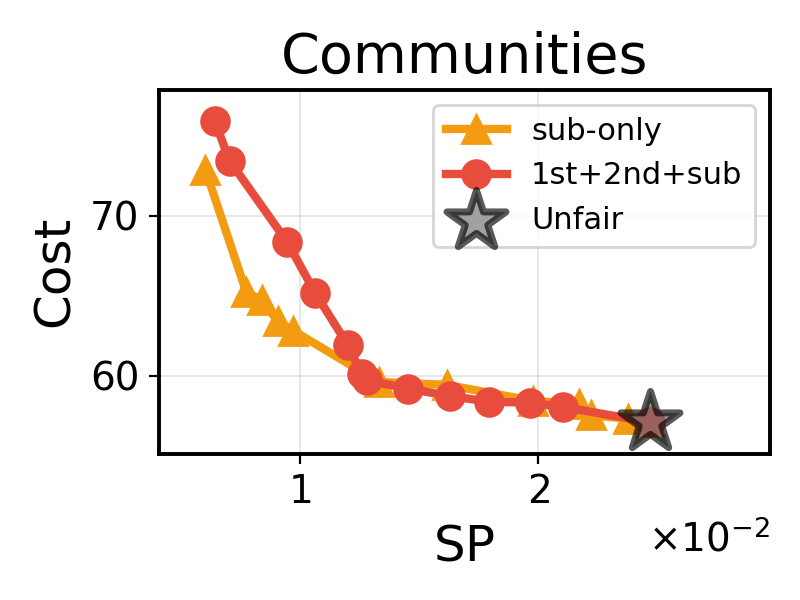}
    \includegraphics[width=0.245\linewidth]{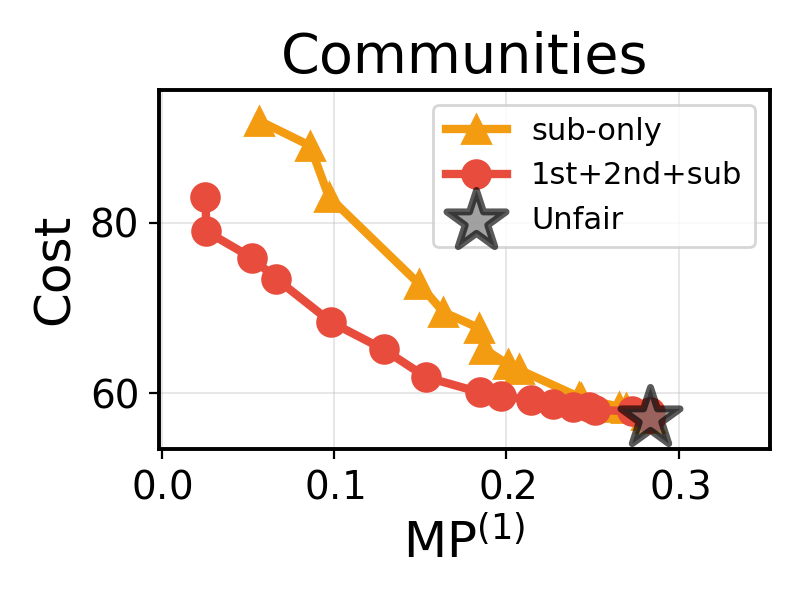}
    \includegraphics[width=0.245\linewidth]{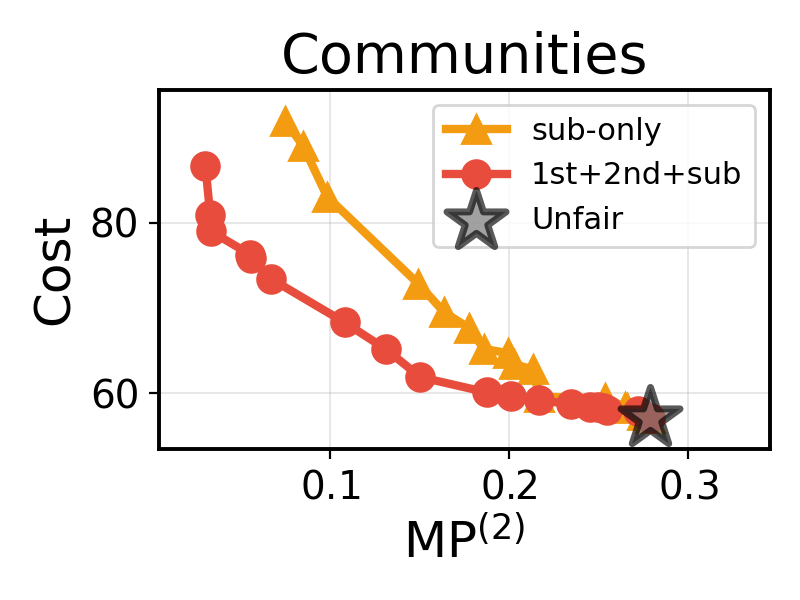}
    \caption{Comparison on \texttt{Communities} between subgroup-only \textsc{Cova-FC} and subgroup-marginal \textsc{Cova-FC} (1st+2nd-order marginal + subgroup):
    (left) $\mathrm{SP}$, (center) $\mathrm{MP}^{(1)}$, (right) $\mathrm{MP}^{(2)}$.}
    \label{fig:communities_large_subgroup}
    \vskip -0.1in
\end{figure*}

We also report the $\mathrm{MP}^{(1)}$ trade-off on the four main benchmark datasets in \Cref{fig:mp_trade_offs}, where the subgroup-marginal variant similarly improves marginal fairness over the subgroup-only variant.

\begin{figure*}[h!]
    \centering
    \includegraphics[width=0.245\linewidth]{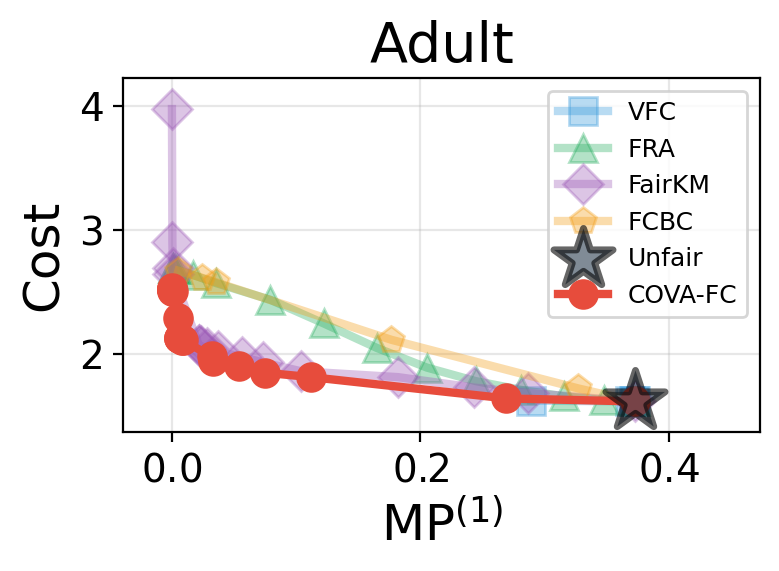} 
    \includegraphics[width=0.245\linewidth]{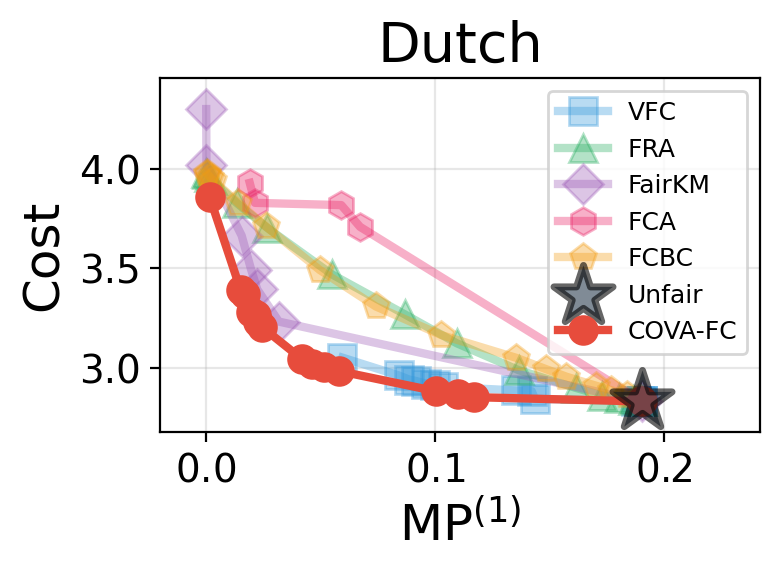}
    \includegraphics[width=0.245\linewidth]{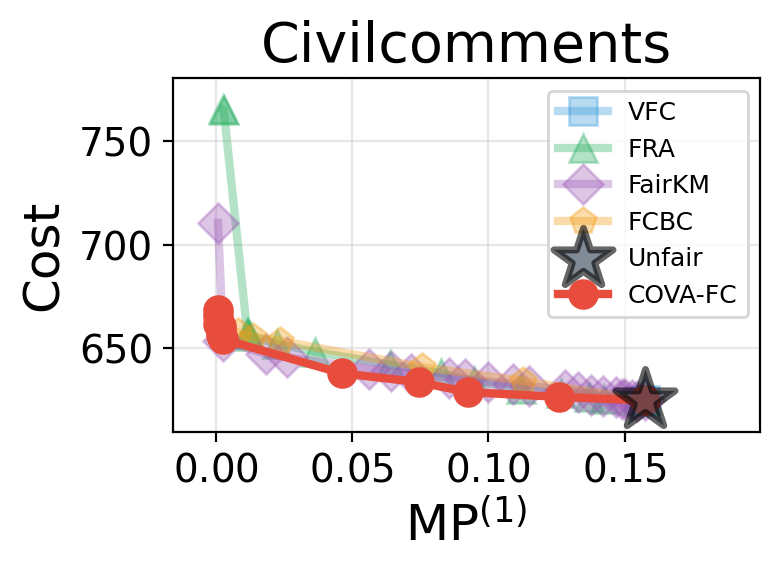}
    \includegraphics[width=0.245\linewidth]{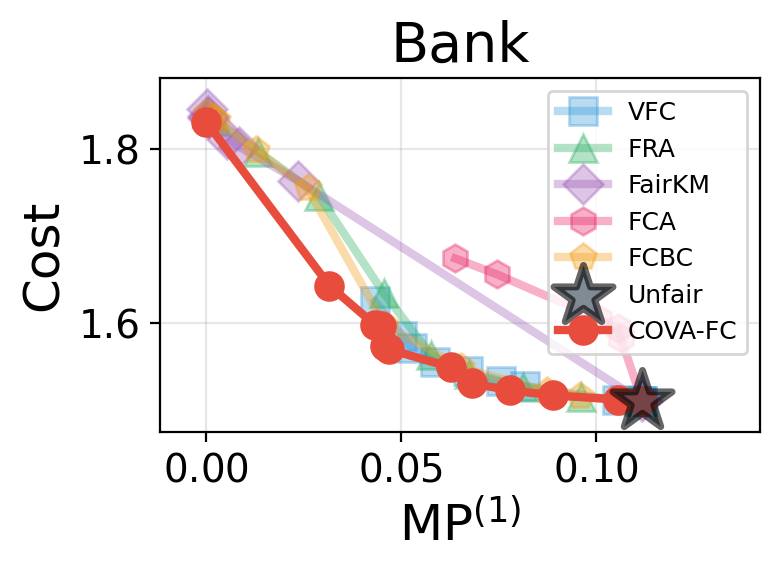}

    \caption{Trade-offs between $\mathrm{MP}^{(1)}$ and cost on (left to right) \texttt{Adult}, \texttt{Dutch}, \texttt{Civilcomments}, and \texttt{Bank} datasets, comparing the subgroup-only and subgroup-marginal variants of \textsc{Cova-FC}.}
    \label{fig:mp_trade_offs}
    % \vskip -0.1in
\end{figure*}

%%%%%%%%%%%%
\subsection{Computational complexity}\label{sec:comp_comp}

\paragraph{Computation time.}

Motivated by the computational complexity analysis in \Cref{tab:complexity}, we compare the runtimes of all methods on \texttt{Dutch} dataset.
As shown in \Cref{tab:runtime}, \textsc{Cova-FC} achieves the lowest runtime among all methods, 
consistent with the theoretical complexity shown in \Cref{tab:complexity}.

% \begin{wraptable}{r}{0.45\textwidth}
\begin{table}[h!]
    \centering
    \caption{Comparison of computation time on \texttt{Dutch} dataset.
    }
    % \vskip 0.1in
    \footnotesize
    \begin{tabular}{lc}
    \toprule
    Method & Total time (sec) \\
    \midrule
    \textbf{\textsc{Cova-FC} $\checkmark$} &  \textbf{30{.}24} \\
    VFC \citep{ziko2021variational} & 48{.}25  \\
    FairKM \citep{abraham2020fairkm} & 292{.}26 \\
    FCA \citep{kim2025fair} & 226{.}42  \\ % 624.21(50 iter)
    FCBC \citep{esmaeili2021fcbc} & 179{.}26 \\
    FRA \citep{bera2019fair} &  64{.}29 \\
    \bottomrule
    \end{tabular}
    \label{tab:runtime}
    \vskip -0.1in
\end{table}
% \end{wraptable}

\paragraph{Scalability to large-scale and image data.}
On \texttt{ACSIncome} \citep{10.5555/3540261.3540757} ($n = 1{,}664{,}500$, 16 subgroups), \textsc{Cova-FC} runs $3.22\times$ faster than VFC, the fastest baseline in \Cref{tab:runtime}. 
% On \texttt{ACSIncome} VFC~\citep{ziko2021variational} also saturates due to numerical overflow (see \Cref{tab:runtime_acs} in Appendix \ref{app:balance-sum-max-gap}). 
A detailed comparison is reported in \Cref{tab:runtime_acs} in Appendix \ref{app:scalability}.
% To verify the applicability of \textsc{Cova-FC} to data with different modalities, we apply \textsc{Cova-FC} to the image dataset \texttt{CelebA} \citep{liu2015faceattributes}.
We use $512$-dimensional features extracted from a pretrained ResNet-18 \citep{he2016deep}, and apply \textsc{Cova-FC} on the fixed features without modification.
As shown in \Cref{fig:celeba_tradeoff}, \textsc{Cova-FC} still improves subgroup fairness as the penalty increases, supporting its applicability across various data modalities.

%%%%%%%%%%%%%%%%%%%%%%%%
\subsection{Effects of the continuous relaxations}\label{sec:cr-comparison}

To examine the effect of each continuous relaxation, we consider four variants of the surrogate depending on which discrete component is continuously relaxed: $\textup{CR}$ (both hard), $\textup{CR}_{\mathbf v}$ (soft $\mathbf v$, hard $\mathbf A$), $\textup{CR}_{\mathbf A}$ (hard $\mathbf v$, soft $\mathbf A$), and $\overline{\textup{CR}}$ (both soft).

We observe that relaxing the assignment \(\mathbf A\) is crucial for stability: hard-assignment variants often fail to reduce the fairness gap, whereas \(\textup{CR}_{\mathbf A}\) achieves near-zero gaps by smoothing the objective $\textup{CR}$ (see \Cref{tab:abl_A_all_tilde}).
This is also reflected in the loss curves, where \(\textup{CR}_{\mathbf v}\) oscillates but \(\overline{\textup{CR}}\) decreases steadily (see \Cref{fig:abl_soft_v_hard_A,fig:app_abl_soft_v_hard_A}).
Relaxing the worst-subgroup selection improves efficiency by replacing \(\textup{CR}_{\mathbf A}\) with \(\overline{\textup{CR}}\), reducing runtime while preserving comparable cost and SP (see \Cref{tab:abl_A_all}).

%%%%%%%%%%%%%%%%%%%%%%%%%%%%%%%%%%%%
%%%%%%%%%%%%%%%%%%%%%%%%%%%%%%%%%%%%
\section{Concluding remarks}

In this paper, we proposed \textsc{Cova-FC}, a novel algorithm for subgroup-fair clustering with multiple sensitive attributes.
Our covariance-based surrogate exactly matches the subgroup-fairness gap and admits efficient gradient-based optimization with parallel assignment updates.
The framework additionally extends to marginal and higher-order subgroups via a user-specified collection $\mathcal W$.
Across benchmark datasets, \textsc{Cova-FC} achieves favorable cost-fairness trade-offs with improved computational efficiency and scalability, when compared with existing baselines.

It would be interesting to extend our proposed framework in several ways. 
First, the framework would be naturally extendable for clustering paradigms other than $K$-means such as hierarchical clustering and mixture models, 
where the cluster assignment matrix $\mathbf A$ admits a similar soft relaxation. 
Second, the covariance-based surrogate can be applied to constraints other than fairness: any constraint expressible as ``low statistical dependence between $\mathbf A$ and a side-information vector''
fits the same template. 
An example would be clustering with size constraints \cite{ZHU2010883, 2008hoppner}.
We leave a systematic study of these extensions as future work.

% Authors are \textbf{required} to include a statement of the potential broader
% impact of their work, including its ethical aspects and future societal
% consequences. This statement should be in an unnumbered section at the end of
% the paper (co-located with Acknowledgements -- the two may appear in either
% order, but both must be before References), and does not count toward the paper
% page limit. In many cases, where the ethical impacts and expected societal
% implications are those that are well established when advancing the field of
% Machine Learning, substantial discussion is not required, and a simple
% statement such as the following will suffice:

% ``This paper presents work whose goal is to advance the field of Machine
% Learning. There are many potential societal consequences of our work, none
% which we feel must be specifically highlighted here.''

% The above statement can be used verbatim in such cases, but we encourage
% authors to think about whether there is content which does warrant further
% discussion, as this statement will be apparent if the paper is later flagged
% for ethics review.

% In the unusual situation where you want a paper to appear in the
% references without citing it in the main text, use \nocite
% \nocite{langley00}

\bibliography{bibs}
\bibliographystyle{plainnat}

%%%%%%%%%%%%%%%%%%%%%%%%%%%%%%%%%%%%%%%%%%%%%%%%%%%%%%%%%%%%%%%%%%%%%%%%%%%%%%%
%%%%%%%%%%%%%%%%%%%%%%%%%%%%%%%%%%%%%%%%%%%%%%%%%%%%%%%%%%%%%%%%%%%%%%%%%%%%%%%
% APPENDIX
%%%%%%%%%%%%%%%%%%%%%%%%%%%%%%%%%%%%%%%%%%%%%%%%%%%%%%%%%%%%%%%%%%%%%%%%%%%%%%%
%%%%%%%%%%%%%%%%%%%%%%%%%%%%%%%%%%%%%%%%%%%%%%%%%%%%%%%%%%%%%%%%%%%%%%%%%%%%%%%
\newpage
\appendix
\onecolumn

\crefalias{section}{appsec}
\crefalias{subsection}{appsubsec}
\crefalias{subsubsection}{appsubsubsec}

%%%%%%%%%%%%%
\section{Two motivating examples}
\subsection{Necessity of the weight $\pi_s$}
\label{sec:weight_necessity}

We illustrate why the subgroup-size weight $\pi_s$ is necessary in the subgroup-fairness gap.
The issue is that, without this weight, a tiny subgroup can dominate the worst-case gap regardless of the overall clustering quality.

\paragraph{Counter example.}
Consider two clusters and suppose that one subgroup $s^\star$ contains only one instance.
Assume that this instance is assigned to cluster 1.
Then the cluster distribution within $s^\star$ is $(1,0)$.

Now suppose that the overall cluster distribution is balanced, namely $(0.5,0.5)$.
The unweighted gap for this subgroup is the total absolute difference between these two distributions:
\[
\sum_{k=1}^{2} \bigl|p_k(\mathbf{A})-p_{k\mid s^{\star}}(\mathbf{A}) \bigr| = |0.5-1|+|0.5-0|=1.
\]
Thus, even a single-instance subgroup can produce a large worst-case gap.

With the weighted gap used in our definition, the same discrepancy is multiplied by the subgroup proportion $\pi_{s^\star}=1/n$.
Hence the contribution of this subgroup becomes
\[
\pi_{s^*} \sum_{k=1}^{2} \bigl|p_k(\mathbf{A})-p_{k\mid s^{\star}}(\mathbf{A}) \bigr| = \frac{1}{n}\times 1=\frac{1}{n}.
\]
Therefore, the weighted gap prevents a tiny subgroup from determining the whole fairness objective.
Instead, the effect of each subgroup is scaled by its empirical size, which makes the objective more stable when many small subgroups exist.

\subsection{Subgroup fairness does not imply marginal fairness}
\label{sec:subgroup_marginal_counterexample}
We give a simple example showing that a small subgroup-fairness gap can coexist with a much larger marginal fairness gap.

Let \(q=8\), \(K=2\), and \(n=2^8=256\). 
Assume that there is exactly one instance in each full subgroup \(s \in \{0,1\}^8\), so \(\pi_s=1/256\) for every \(s\). 
Let \(W=\{s \in \{0,1\}^8 : s^{(1)}=0\}\), which is the marginal subgroup defined by fixing the first sensitive attribute to \(0\). 
Then \(W\) contains \(2^7=128\) subgroups and has proportion \(\pi_W=1/2\). 
Now assign every instance in \(W\) to cluster \(1\), and every instance outside \(W\) to cluster \(2\). 
The overall cluster proportions are therefore \((p_1(\mathbf A),p_2(\mathbf A))=(1/2,1/2)\).

For each full subgroup \(s\), the conditional cluster distribution is either \((1,0)\) or \((0,1)\). 
Hence the subgroup-fairness gap is
\begin{equation}
\Delta(\mathbf A)
=
\frac{1}{256}\left(\left|\frac12-1\right|+\left|\frac12-0\right|\right)
=
\frac{1}{256}.
\end{equation}
By contrast, within the marginal subgroup \(W\), all instances are assigned to cluster \(1\), so \((p_{1\mid W}(\mathbf A),p_{2\mid W}(\mathbf A))=(1,0)\). 
Therefore the marginal fairness gap on \(W\) is
\begin{equation}
\pi_W \sum_{k=1}^2 |p_k(\mathbf A)-p_{k\mid W}(\mathbf A)|
=
\frac12\left(\left|\frac12-1\right|+\left|\frac12-0\right|\right)
=
\frac12.
\end{equation}

Thus the subgroup-fairness gap is only \(1/256\), while the marginal fairness gap is \(1/2\). 
More generally, under the same construction with arbitrary \(q\), the subgroup-fairness gap is \(1/2^q\), which goes to \(0\) as \(q\) increases, whereas the marginal fairness gap remains \(1/2\). 
This shows that subgroup fairness alone does not imply marginal fairness.

\newpage
\section{Theoretical studies}\label{sec:appen-theory}

%%%%%%%%%%%%
\subsection{Proofs}\label{sec:appen-proofs}

\begin{proof}[Proof of \Cref{thm:Delta_CR_equal}]
Take \(\mathcal W=\{\{s\}:s\in\{0,1\}^q,\ n_s>0\}\) in
\Cref{thm:Delta_ms_equiv_hard}.
Then each \(W_m\) is a singleton subgroup, so
\(\Delta(\mathbf A;\mathcal W)=\Delta(\mathbf A)\) and
\(\textup{CR}(\mathbf A;\mathcal W)=\textup{CR}(\mathbf A)\).
\end{proof}

\begin{proof}[Proof of \Cref{thm:Delta_CRbar_equal}]
Take \(\mathcal W=\{\{s\}:s\in\{0,1\}^q,\ n_s>0\}\) in
\Cref{thm:Delta_ms_equiv_hard}.
Then each \(W_m\) is a singleton subgroup, so
\(\Delta(\mathbf A;\mathcal W)=\Delta(\mathbf A)\) and
\(\overline{\textup{CR}}(\mathbf A;\mathcal W)=\overline{\textup{CR}}(\mathbf A)\).
\end{proof}

\begin{proof}[Proof of \Cref{thm:Delta_ms_equiv_hard}]
% Fix $m \in [M]$ and define the subgroup indicator $r_m(i) := \mathbb I(s_i \in W_m),\ i \in [n].$
For each $m \in \mathcal{M}_{+}$, define the subgroup indicator $r_m(i) := \mathbb I(s_i \in W_m),\ i \in [n]$, 
and similarly to $\textup{C}(\mathbf A, s, \beta)$ in \Cref{sec:proposed_pen_cr}, let
\begin{equation}\label{eq:C_Wm_def}
    \textup{C}(\mathbf A, m, \beta) 
    := \frac{1}{2n}\sum_{i=1}^n \left(\sum_{k=1}^K \beta_k \mathbb I(A_i = k)\right)(c_{im} - \bar c_m).
\end{equation}
Let $n_{W_m} = \sum_{i=1}^n r_m(i)$ and recall $\pi_m = n_{W_m}/n.$
For each cluster $k \in [K]$ with $|\{i : A_i = k\}| > 0$ (which holds by our standing assumption that no cluster is empty), define
$
q_k(m) := \frac{1}{|\{i:A_i=k\}|}\sum_{i:A_i=k} r_m(i).
$
A direct calculation gives
$
p_{k \mid W_m}(\mathbf A) = \frac{p_k(\mathbf A) q_k(m)}{\pi_m},
$
so
$
p_k(\mathbf A) - p_{k\mid W_m}(\mathbf A) = \frac{1}{\pi_m} p_k(\mathbf A) (\pi_m - q_k(m)).
$
Multiplying by $\pi_m$ and using $r_m(i) - \pi_m = \tfrac{1}{2}(c_{im} - \bar c_m)$ gives
$$
\pi_m \big(p_k(\mathbf A) - p_{k\mid W_m}(\mathbf A)\big)
= - p_k(\mathbf A) (q_k(m) - \pi_m)
= -\frac{1}{2n}\sum_{i=1}^n \mathbb I(A_i = k)(c_{im} - \bar c_m).
$$
Hence, letting $a_k(m) := \frac{1}{2n}\sum_{i=1}^n \mathbb I(A_i=k)(c_{im}-\bar c_m),$
$$
\pi_m \sum_{k=1}^K |p_k - p_{k\mid W_m}|
= \sum_{k=1}^K |a_k(m)|
= \max_{\beta \in \mathcal B_\infty} \sum_{k=1}^K \beta_k a_k(m)
= \max_{\beta \in \mathcal B_\infty} \textup{C}(\mathbf A, m, \beta),
$$
where the second equality uses the dual representation $\sum_k |a_k| = \max_{\|\beta\|_\infty \le 1} \sum_k \beta_k a_k$ (with maximizer $\beta_k = \operatorname{sign}(a_k(m))$).
Taking $\max_{m \in \mathcal{M}_{+}}$ on both sides and using \Cref{eq:gap-W} and \Cref{eq:CR_W_def} yields $\Delta(\mathbf A; \mathcal W) = \textup{CR}(\mathbf A; \mathcal W).$
\end{proof}

\begin{proof}[Proof of \Cref{thm:Delta_ms_equiv_hard} (soft assignment case)]
The argument mirrors the hard-assignment case with $\mathbb I(A_i = k)$ replaced by $A_{ik}.$
Fix $m \in \mathcal{M}_{+}.$
With $r_m(i) := \mathbb I(s_i \in W_m),$ define
$
q_k(m) := \frac{\sum_i A_{ik} r_m(i)}{\sum_i A_{ik}}
$
(well-defined under the standing assumption $\sum_i A_{ik} > 0$).
Then $p_{k\mid W_m}(\mathbf A) = \frac{p_k(\mathbf A) q_k(m)}{\pi_m},$ so the same algebra used in the hard case gives
$$
\pi_m \sum_{k=1}^K |p_k - p_{k \mid W_m}|
= \max_{\beta \in \mathcal B_\infty} g_m(\beta),
\quad\text{where }
g_m(\beta) := \frac{1}{2n}\sum_{i=1}^n\Big(\sum_{k=1}^K \beta_k A_{ik}\Big)(c_{im} - \bar c_m).
$$
Taking $\max_m$:
\begin{equation}\label{eq:Delta_repr_soft}
    \Delta(\mathbf A; \mathcal W) = \max_{m \in \mathcal{M}_{+}} \max_{\beta \in \mathcal B_\infty} g_m(\beta).
\end{equation}
For the right-hand side of \Cref{eq:Delta_repr_soft}, observe that for any fixed $\beta,$ the function $\mathbf v \mapsto \mathbf v^\top g(\beta)$ is linear on the simplex $\mathbb S^{M_{+}}$ and so attains its maximum at a vertex.
Therefore $\max_{\mathbf v \in \mathbb S^{M_{+}}} \mathbf v^\top g(\beta) = \max_{m \in \mathcal{M}_{+}} g_m(\beta),$ and
$$
\overline{\textup{CR}}(\mathbf A; \mathcal W)
= \max_{\beta \in \mathcal B_\infty} \max_{\mathbf v \in \mathbb S^{M_{+}}} \mathbf v^\top g(\beta)
= \max_{\beta \in \mathcal B_\infty} \max_{m \in \mathcal{M}_{+}} g_m(\beta)
= \max_{m \in \mathcal{M}_{+}} \max_{\beta \in \mathcal B_\infty} g_m(\beta).
$$
Combining with \Cref{eq:Delta_repr_soft} gives $\Delta(\mathbf A; \mathcal W) = \overline{\textup{CR}}(\mathbf A; \mathcal W).$
\end{proof}

% %%%%%%%%
\clearpage
\subsection{Projection of $(A_{i1}, \ldots, A_{iK})$ onto the simplex and the computation of $\tau_i$}\label{sec:appen-simplex-proj}

For each $i\in[n]$, after a gradient step we obtain ${\mathbf A}^{s}_{i}\in\mathbb R^{K}$ and project it onto the probability simplex $\mathbb S^{K}$.
The projection is defined as the unique solution to the strictly convex problem
\begin{equation}\label{eq:simplex_proj_problem}
    \Pi_{\mathbb S^{K}}({\mathbf A}^{s}_{i})
    = \argmin_{v \in \mathbb S^{K}}\frac12\|v-{\mathbf A}^{s}_{i}\|^2.
\end{equation}
By the KKT conditions (see \cite{wang2013projectionprobabilitysimplexefficient}), there exists a scalar threshold $\tau_i\in\mathbb R$ such that the optimizer satisfies
\begin{equation}\label{eq:simplex_proj_threshold}
    v^\star_{ik}=\max\{ A^{s}_{ik}-\tau_i,\,0\},\quad k\in[K],
\end{equation}
and $\tau_i$ is uniquely determined by the constraint $\sum_{k=1}^{K} v^\star_{ik}=1$.

To compute $\tau_i$ efficiently, let $u_1\ge u_2\ge\cdots\ge u_K$ be the sorted entries of ${\mathbf A}^{s}_{i}$, and define the cumulative sums
$U_j:=\sum_{\ell=1}^{j}u_\ell$ for $j\in[K]$.
Let
\begin{equation}\label{eq:tau_derive}
    \rho:=\max\left\{j\in[K]: u_j-\frac{1}{j}(U_j-1)>0\right\},
    \quad
    \tau_i:=\frac{1}{\rho}(U_\rho-1).  
\end{equation}
Then \Cref{eq:simplex_proj_threshold} is equal to $\Pi_{\mathbb S^{K}}({\mathbf A}^{s}_{i})$.
Finally, we project as
$$
A_{ik}^{s} \leftarrow \max \{ A_{ik}^{s} - \tau_{i}, 0 \},
\quad i\in[n],\ k\in[K],
$$
where each $\tau_{i}$ is determined in \Cref{eq:tau_derive}.
The computational complexity of this step is dominated by sorting, i.e., $O(K\log K)$ for each $i\in[n]$, which is negligible compared to the other computations in \textsc{Cova-FC} (see \Cref{tab:complexity}).

%%%%%%%%%%%%%%%%%%%%%%%%%
\newpage

%%%%%%%%%%%%%%%%%%%%%%%%%%%%%%%%%%%%%%
\section{Detailed Updates of \textsc{Cova-FC}}\label{app:cova_fc_algorithm}
Given a collection $\mathcal W$ of subgroup-subsets, we propose to learn fair clusters by minimizing \Cref{eq:obj}: $\frac{1}{n}\sum_{i=1}^n L_i(\mathbf \mu, \mathbf A) + \lambda\, \overline{\textup{CR}}(\mathbf A; \mathcal W)$
% \begin{equation}\label{eq:obj}
%     \frac{1}{n}\sum_{i=1}^n L_i(\mathbf \mu, \mathbf A) + \lambda\, \overline{\textup{CR}}(\mathbf A; \mathcal W)
% \end{equation}
with respect to $\mathbf \mu = (\mu_1, \ldots, \mu_K) \in (\mathbb R^d)^K$ and $\mathbf A = (A_i)_{i=1}^n,\ A_i \in \mathbb S^K,$ where $\lambda \ge 0$ controls the trade-off between clustering cost and fairness level.
We call this algorithm \textit{\textbf{Cova}riance-based \textbf{F}air \textbf{C}lustering (\textsc{Cova-FC})}.
We solve \Cref{eq:obj} via an adversarial learning procedure.
First, we initialize the centroids $\mathbf \mu$ and the assignment $\mathbf A$ using the output of the standard $K$-means algorithm \citep{1056489}.
Define
$
\overline{\textup{C}}(\mathbf A, \mathbf v, \beta) := \frac{1}{2n}\sum_{i=1}^n \left(\sum_{k=1}^K \beta_k A_{ik}\right) \mathbf v^\top (c_{i} - \bar c),
$
so that $\overline{\textup{CR}}(\mathbf A; \mathcal W) = \max_{\mathbf v \in \mathbb S^{M_{+}}} \max_{\beta \in \mathcal B_\infty} \overline{\textup{C}}(\mathbf A, \mathbf v, \beta).$
We iteratively (i) maximize $\overline{\textup{C}}(\mathbf A, \mathbf v, \beta)$ over $(\mathbf v, \beta)$ with $\mathbf A$ fixed, and (ii) update $\mathbf A$ by minimizing $\frac{1}{n}\sum_i L_i(\mathbf \mu, \mathbf A) + \lambda\, \overline{\textup{C}}(\mathbf A, \mathbf v, \beta)$ with $(\beta, \mathbf v, \mathbf \mu)$ fixed, followed by updating $\mathbf \mu$ given $\mathbf A.$

%%%%%%%%%%%%%%%%%%%
\paragraph{Updates of $\beta$ and $\mathbf v$.}

For fixed $\mathbf A,$ we alternately update $\beta$ (with $\mathbf v$ fixed) and $\mathbf v$ (with $\beta$ fixed).
We first maximize $\overline{\textup{C}}(\mathbf A, \mathbf v, \beta)$ over $\beta$ by the closed-form update
$
\beta_k \leftarrow \operatorname{sign}\left(\sum_{i=1}^n A_{ik}\, \mathbf v^\top (c_i - \bar c)\right),
$
which gives
$
\max_{\beta \in \mathcal B_\infty}\sum_i \sum_k \beta_k A_{ik} \mathbf v^\top (c_i - \bar c)
= \sum_k |\sum_i A_{ik}\, \mathbf v^\top (c_i - \bar c)|.
$
With this updated $\beta$ fixed, we update $\mathbf v$ via gradient ascent using the closed-form gradient
$
\frac{1}{2n}\sum_k \beta_k \left(\sum_i A_{ik}\, (c_i - \bar c)\right),
$
and project onto $\mathbb S^{M_{+}}$ using projection for $\mathbf A$ (see \Cref{eq:proj_A}).

%%%%%%%%%%%%%%%%%%%
\paragraph{Updates of $\mathbf A$ and $\mathbf \mu$.}

With $\beta$ and $\mathbf v$ fixed, we update $\mathbf A$ in the direction of minimizing
\begin{equation}\label{eq:adv_min}
    \frac{1}{n}\sum_{i=1}^n L_i(\mathbf \mu, \mathbf A) + \lambda\, \overline{\textup{C}}(\mathbf A, \mathbf v, \beta),
\end{equation}
whose gradient with respect to $A_{ik}$ is
\begin{equation}\label{eq:grad_Aik}
    \frac{\partial \textup{\Cref{eq:adv_min}}}{\partial A_{ik}}
    = \frac{1}{n}\|x_i - \mu_k\|^2 + \frac{\lambda}{2n}\beta_k\, \mathbf v^\top (c_i - \bar c).
\end{equation}
We update $A_{ik}$ by a gradient-descent step using \Cref{eq:grad_Aik}, then project the vector $(A_{i1}, \ldots, A_{iK})^\top$ onto the simplex $\mathbb S^K$ via
\begin{equation}\label{eq:proj_A}
    A_{ik} \leftarrow \max\{A_{ik} - \tau_i, 0\}, \quad k \in [K],
\end{equation}
where $\tau_i$ is chosen so that $\sum_{k=1}^K A_{ik} = 1$ \citep{wang2013projectionprobabilitysimplexefficient}.
The explicit calculation of $\tau_i$ is given in \Cref{sec:appen-simplex-proj}.
Since the gradient in \Cref{eq:grad_Aik} is in closed form, this step is computationally cheap.

We then fix $\mathbf A$ and update centroids as in standard $K$-means: $\mu_k \leftarrow \sum_{i=1}^n A_{ik} x_i / \sum_{i=1}^n A_{ik}.$
The overall algorithm is summarized in \Cref{alg:corefc_fullbatch}.

%%%%%%%%%%%
\begin{algorithm}[h!]
    \caption{\textsc{Cova-FC}}
    \label{alg:corefc_fullbatch}
    \begin{algorithmic}[1]
        \Require $\{(x_i, s_i)\}_{i=1}^n,\ \mathcal W = \{W_m\}_{m=1}^M,\ K,\ \lambda$
        \Require Step sizes $\eta_{\mathbf A}$ and $\eta_{\mathbf v}$
        \State Initialize $\mathbf A$ and $\mathbf \mu$ by the outputs of the standard $K$-means algorithm
        \State Compute $c_{im} = 2\mathbb I(s_i \in W_m) - 1$ for $m \in [M], i \in [n]$,   
        set $c_i = (c_{im})_{m \in \mathcal{M}_+}$, and $\bar c \leftarrow \frac{1}{n}\sum_{i=1}^n c_i$
        \State Randomly initialize $\mathbf v \in \mathbb R^{M_{+}}$ then project it onto the simplex $\mathbb S^{M_+}$
        \While{not converged}
            \State \textbf{Adversarial step}
            \State \quad $\beta_k \leftarrow \operatorname{sign}\left(\sum_{i=1}^n A_{ik}\, \mathbf v^\top (c_i - \bar c)\right),\ k \in [K]$
            \State \quad $\mathbf v \leftarrow \mathbf v + \eta_{\mathbf v} \cdot \frac{1}{2n}\sum_{k=1}^K \beta_k \left(\sum_{i=1}^n A_{ik}\, (c_i - \bar c)\right)$ and Project $\mathbf v$ onto the simplex $\mathbb S^{M_{+}}$
            % \State \quad Project $\mathbf v$ onto the simplex $\mathbb S^M$
            \State \textbf{Objective minimization step}
            \State \quad $A_{ik} \leftarrow A_{ik} - \eta_{\mathbf A}\left(\frac{1}{n}\|x_i - \mu_k\|^2 + \frac{\lambda}{2n}\beta_k\, \mathbf v^\top (c_i - \bar c)\right)$
            \State \quad Project $(A_{ik})_{k=1}^K$ onto the simplex following \Cref{eq:proj_A}
            \State \quad $\mu_k \leftarrow \sum_{i=1}^n A_{ik} x_i / \sum_{i=1}^n A_{ik}$
        \EndWhile
        \State \textbf{Return} $\mathbf A,\ \mathbf \mu$
    \end{algorithmic}
\end{algorithm}

%%%%%%%%%%%%%%%%%%%
\clearpage
\subsection{Computational complexity}\label{app:complexity}

An additional advantage of using \(\overline{\textup{CR}}\) as the fairness penalty is that it offers competitive computational complexity compared to existing fair clustering algorithms.
Let \(M\) be the number of sensitive groups included in the fairness penalty.
Each iteration of \textsc{Cova-FC} has computational complexity \(\mathcal O\bigl(nKd+n(K+M)\bigr)\).
Here, \(nKd\) is the standard \(K\)-means distance-computation term, while \(n(K+M)\) is the additional cost for the adversarial fairness updates.

In the main subgroup-only setting, \(\mathcal W\) is the collection of full subgroups, and hence \(M=2^q\).
The same complexity expression applies to the optional marginal extension by replacing \(M\) with the size of the chosen subgroup-subset collection.
For example, when the extension uses marginal, second-order, and full subgroups, \(M=2q+4\binom{q}{2}+2^{q}=O(2^q)\).

\begin{table}[h!]
    \centering
    \caption{Comparison of fair clustering methods in terms of computational complexity.}
    \label{tab:appen-complexity}
    \footnotesize
    \setlength{\tabcolsep}{10pt}
    \begin{tabular}{l l}
    \toprule
    Method & Complexity \\
    \midrule
    \textbf{\textsc{Cova-FC}} $\checkmark$ 
    & $n (Kd + K + 2^{q})$ \\
    
    VFC \citep{ziko2021variational}
    & $nK (d + 2^{q})$ \\
    
    FairKM \citep{abraham2020fairkm}
    & $nK(nd + 2^q)$ \\ 
    
    FCA \citep{kim2025fair}
    & $n^2 (n + Kd)$ \\
    
    FCBC \citep{esmaeili2021fcbc}
    & $(nK+2^{q}K)^3$ \\
    
    FRA \citep{bera2019fair}
    & $(nK+2^{q})^3$ 
    \\
    \bottomrule
    \end{tabular}
    % \vskip -0.1in
\end{table}

In contrast, most of the fair clustering algorithms have higher computational complexity:
(i) VFC \citep{ziko2021variational} requires computational complexity linear in $n$ similarly to \textsc{Cova-FC}
but the additional term due to the fairness constraint is $n K 2^q$ which becomes much larger than
the additional term in \textsc{Cova-FC} that is $n (K+2^q)$ when $q$ is large.
(ii) FairKM \citep{abraham2020fairkm} includes a quadratic term $\mathcal O(n^2Kd)$, which is substantially larger than first-order updates for large-scale datasets.
(iii) FCA \citep{kim2025fair}, FCBC \citep{esmaeili2021fcbc}, and FRA \citep{bera2019fair} are dominated by solving LPs, yielding cubic complexity with respect to $n.$

Overall, \textsc{Cova-FC} achieves a favorable computational complexity, making it more suitable for large datasets with multiple sensitive attributes.
We also observe that \textsc{Cova-FC} runs faster than the baselines, whose results are provided in \Cref{tab:runtime} of \Cref{sec:tradeoff_compare}.
Furthermore, \textsc{Cova-FC} runs $3.2 \times$ faster than VFC on a large-scale dataset ($n \approx $ millions, see \Cref{app:scalability} for details).

%%%%%%%%%%%%%%%%%%%%%%%%%
\newpage
\section{Experiments}\label{sec:app-exp}

\subsection{Settings}\label{sec:appen-impl}

In this section, we provide more details about datasets, baseline methods, and implementation details.

\paragraph{Datasets.}
We use seven benchmark datasets: \texttt{Adult}, \texttt{Dutch}, \texttt{Bank}, \texttt{Civilcomments}, \texttt{Communities}, \texttt{ACSIncome}, and \texttt{CelebA}.
See \cref{tab:datasets} for a summary of these datasets.
For tabular datasets, we use numerical (continuous) features only.
We standardize the features of all the datasets to zero mean and unit variance.

\begin{itemize}[noitemsep=1pt, topsep=1pt]
    \item
    {\texttt{Adult}}:
    The Adult dataset is extracted from the 1994 U.S. Census database \citep{becker1996adult}.
    We use $d=5$ continuous features: \texttt{age}, \texttt{fnlwgt}, \texttt{education-num}, \texttt{capital-gain}, and \texttt{hours-per-week}.
    We use four binary sensitive attributes: gender (male/female), race (white/non-white), age ($\ge 40$ / $< 40$), and marital status (married/unmarried),
    so $q=4$ and there are $2^q=16$ subgroups.
    The number of instances is $n=32{,}561$.

    \item
{\texttt{Dutch}}:
The Dutch dataset is extracted from the 2001 Netherlands census and is used following the fairness dataset construction in \cite{lequy2022survey}.
We use $d=9$ numeric features (all census attributes except the sensitive ones) which are standardized.
We use two binary sensitive attributes: gender (male/female) and age (above median / at or below median),
so $q=2$ and there are $2^q=4$ subgroups.
The number of instances is $n=60{,}420$.

    \item
{\texttt{Bank}}:
The Bank dataset contains records from a Portuguese bank's direct marketing campaigns \citep{MORO201422}.
We use $d=6$ continuous features: \texttt{age}, \texttt{call-duration}, \texttt{3-month-Euribor-rate},
\texttt{number-of-employees}, \texttt{consumer-price-index}, and \texttt{number-of-contacts}.
We treat marital status as a sensitive attribute with three categories (single/married/divorced) and exclude \texttt{unknown} entries,
so there are $3$ disjoint subgroups.
The number of instances is $n=41{,}108$.

    \item
{\texttt{Civilcomments}}:
The Civilcomments dataset is a toxicity dataset of online comments annotated with identity attributes \citep{borkan2019nuanced}.
We use $d=768$ feature embeddings extracted from the dataset.
We use three multi-valued sensitive attributes: gender (male/female/other), religion (christian/other), and race (black/white/asian/other),
so there are $3\times 2\times 4 = 24$ subgroups.
The number of instances is $n=3{,}365$.

    \item
{\texttt{Communities}}:
The Communities and Crime dataset records community-level demographic and crime statistics across U.S. communities \citep{redmond2002data}.
We use $d=99$ continuous features after removing entries with missing values.
We use $q=18$ binary sensitive attributes derived from race-related demographic ratios, each binarized at its median.
The Cartesian product yields up to $2^{18}$ combinations, of which $1{,}180$ are non-empty in the data.
The number of instances is $n=1{,}994$.

    \item
{\texttt{ACSIncome}}:
ACSIncome is a large-scale tabular dataset extracted from the American Community Survey \citep{10.5555/3540261.3540757}.
We use $d=7$ continuous features.
We use three sensitive attributes: gender (male/female), marital status (married/unmarried), and race (white/black/asian/other),
so $q=3$ and there are $2 \times 2 \times 4 = 16$ subgroups.
The number of instances is $n=1{,}664{,}500$.

    \item
{\texttt{CelebA}}:
The CelebA dataset contains face images annotated with $40$ binary attributes \citep{liu2015faceattributes}.
We extract $d=512$ image features from a pre-trained ResNet-18 \citep{he2016deep} and apply \textsc{Cova-FC} on the fixed features without modification.
We use four binary sensitive attributes (Male, Young, Smiling, Attractive),
so $q=4$ and there are $2^q=16$ subgroups.
The number of instances is $n=202{,}599$.

\end{itemize}

\begin{table*}[h!]
    \centering
    \caption{
    Summary statistics of datasets.
    $n$ is the number of instances, $d$ is the number of features, and $q$ is the number of sensitive attributes.
    ``\# subgroups per attr'' lists the number of categories for each sensitive attribute
    and ``\# Subgroups'' is the number of subgroups formed by the Cartesian product of sensitive attributes.
    }
    \vskip 0.1in
    \footnotesize
    \label{tab:datasets}
    \setlength{\tabcolsep}{10pt}
    \begin{tabular}{l r r r c r}
    \toprule
    Dataset & $n$ & $d$ & $q$ & \# subgroups per attr & \# Subgroups \\
    \midrule
    \texttt{Adult}         & 32{,}561 & 5   & 4  & $2/2/2/2$ & 16   \\
    \texttt{Dutch}         & 60{,}420 & 9   & 2  & $2/2$     & 4    \\
    \texttt{Bank}          & 41{,}108 & 6   & 1  & $3$       & 3    \\
    \texttt{Civilcomments} & 3{,}365  & 768 & 3  & $3/2/4$   & 24   \\
    \texttt{Communities}   & 1{,}994  & 99  & 18 & $2$ (each) & 1{,}180 \\
    \texttt{ACSIncome}     & 1{,}664{,}500 & 7 & 3 & $2/2/4$ & 16 \\
    \texttt{CelebA}        & 202{,}599 & 512 & 4 & $2/2/2/2$ & 16 \\
    \bottomrule
    \end{tabular}
\end{table*}

\paragraph{Baselines.}

FCA faces a bottleneck: its core step estimates joint distributions over subgroups by solving a linear program, which incurs high computational cost as the number of subgroups and instances increases.
On the larger \texttt{Adult} and \texttt{Civilcomments} datasets, an experiment did not complete within 6 hours, so we omit FCA from these comparisons and report it only on \texttt{Dutch} and \texttt{Bank}.

\paragraph{Implementation details.}

To control the cost-fairness trade-off, we sweep $\lambda$ on a grid:
$\lambda \in [10^{-1}, 10^{4}]$ for \texttt{Adult}, \texttt{Dutch}, and \texttt{Bank}, and
$\lambda \in [10^{-1}, 10^{6}]$ for \texttt{Civilcomments}.
We initialize $\mathbf{\mu}$ and $\mathbf{A}$ using the outputs of the standard $K$-means algorithm \citep{1056489}.
For each iteration, we alternate one update for $(\beta, \mathbf{v})$, multiple updates for $\mathbf{A}$, and then update $\mathbf{\mu}$.
For the $\mathbf{A}$ updates: the learning rate and the number of updates are set to
$(0.004, 100)$ for \texttt{Adult},
$(0.005, 100)$ for \texttt{Dutch},
$(0.005, 100)$ for \texttt{Bank}, and
$(0.007, 50)$ for \texttt{Civilcomments}.
All experiments are conducted on a Linux server with four NVIDIA RTX 3090 GPUs (24GB VRAM each) and a 48-core Intel Xeon Silver 4310 CPU.

%%%%%%%%%%%%%%%%%%%%%%%%%%
% \newpage
\subsection{Performance comparison}\label{app:balance-sum-max-gap}

\paragraph{Cost-fairness trade-off: sum-aggregated gap.}

For a more comprehensive analysis, we further examine whether the same conclusions hold under an alternative aggregation rule.
While the gap $\Delta$ in \eqref{eq:gap-W} takes a worst-case (maximum) over subgroups, we also report sum-aggregated variants $\textup{SP}_{\textup{sum}}$ and $\textup{MP}^{(l)}_{\textup{sum}}$ defined in \Cref{tab:abl_fairness_metrics}.

\begin{table*}[hbt!]
    \centering
    \caption{Sum-aggregated variants of the gap. SP and MP are special cases of $\Delta(\mathbf A; \mathcal W)$ in \eqref{eq:gap-W} with appropriate $\mathcal W$. We follow the notation of \Cref{sec:partial,sec:marg_sub}: $\theta \subset [q]$ with $|\theta|=l$ indexes an $l$-th order marginal subgroup, and $W_{s_\theta, a} = \{s \in \{0,1\}^q : s_\theta = a\}$ for $a \in \{0,1\}^l$.}
    \label{tab:abl_fairness_metrics}
    \vskip 0.1in
    \footnotesize
    \setlength{\tabcolsep}{5pt}
    \renewcommand{\arraystretch}{0.95}
    \begin{tabular}{l
        >{\centering\arraybackslash}p{0.53\textwidth}
        >{\centering\arraybackslash}p{0.39\textwidth}}
        \toprule
        Metric & Base & Aggregation \\
        \midrule
        SP$_{\text{sum}}$
        & $\delta_{k,s} := |p_k(\mathbf A) - p_{k \mid s}(\mathbf A)|$
        & $\sum_{s \in \{0,1\}^q} \pi_s \sum_{k \in [K]}\delta_{k,s}$ \\
        MP$^{(l)}_{\text{sum}}$
        & $\delta_{k,\theta,a} := |p_k(\mathbf A) - p_{k \mid W_{s_\theta, a}}(\mathbf A)|$
        & $\sum_{\theta \subset [q],\ |\theta|=l}\sum_{a \in \{0,1\}^l} \pi_{W_{s_\theta, a}} \sum_{k \in [K]}\delta_{k,\theta,a}$ \\
        \bottomrule
    \end{tabular}
\end{table*}

As shown in \Cref{fig:gap_max_trade_offs}, replacing the max with sum aggregation does not change the main messages of our experiments.
That is, \textsc{Cova-FC} still attains competitive cost-fairness trade-offs, and this indicates that \textsc{Cova-FC} is not limited to specific aggregation rules.

\begin{figure}[h]
    \centering
    \includegraphics[width=0.24\linewidth]{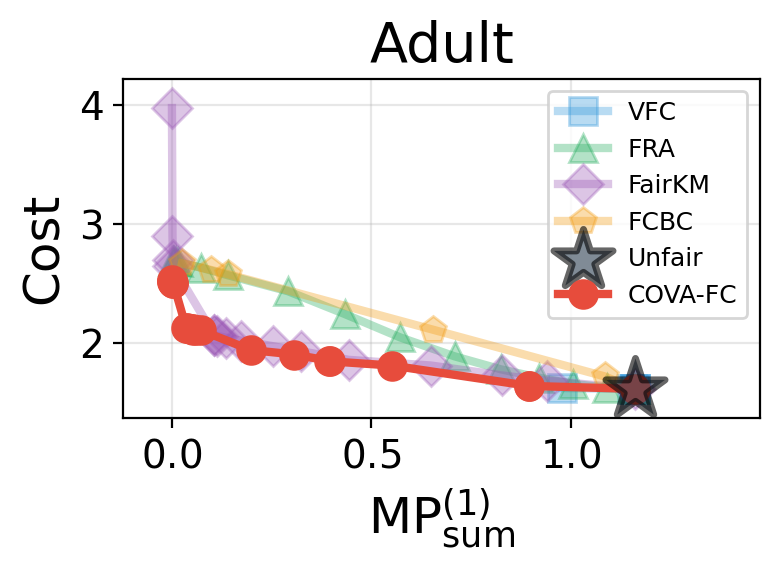}
    \includegraphics[width=0.24\linewidth]{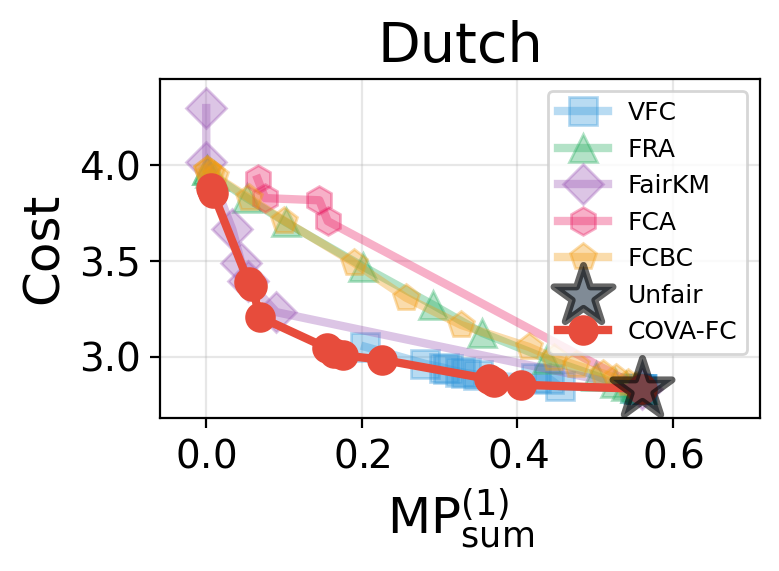}
    \includegraphics[width=0.24\linewidth]{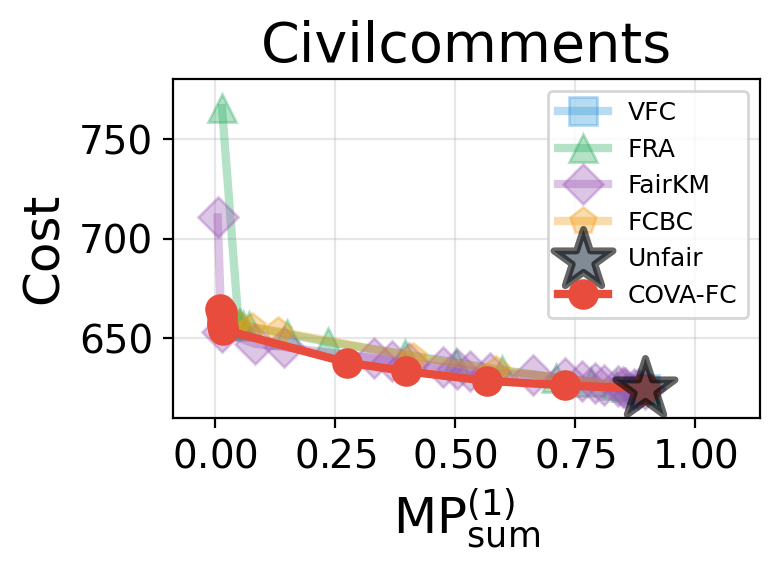}
    \includegraphics[width=0.24\linewidth]{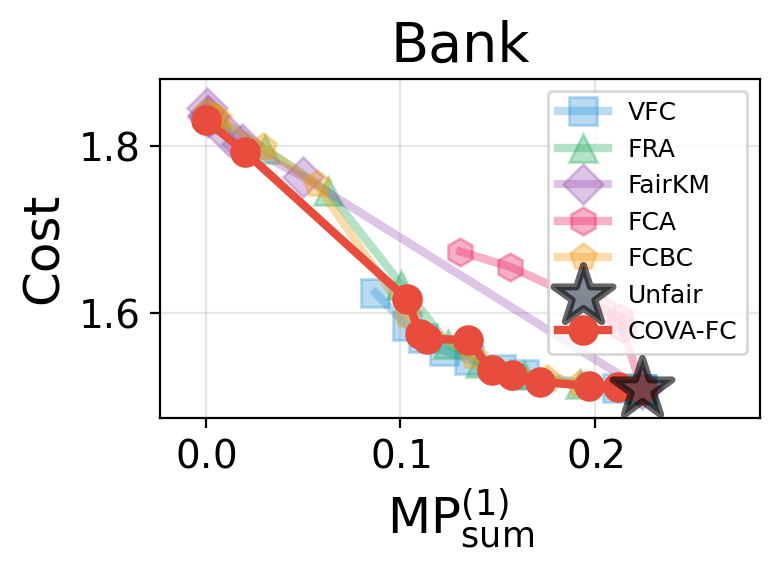}

    \includegraphics[width=0.24\linewidth]{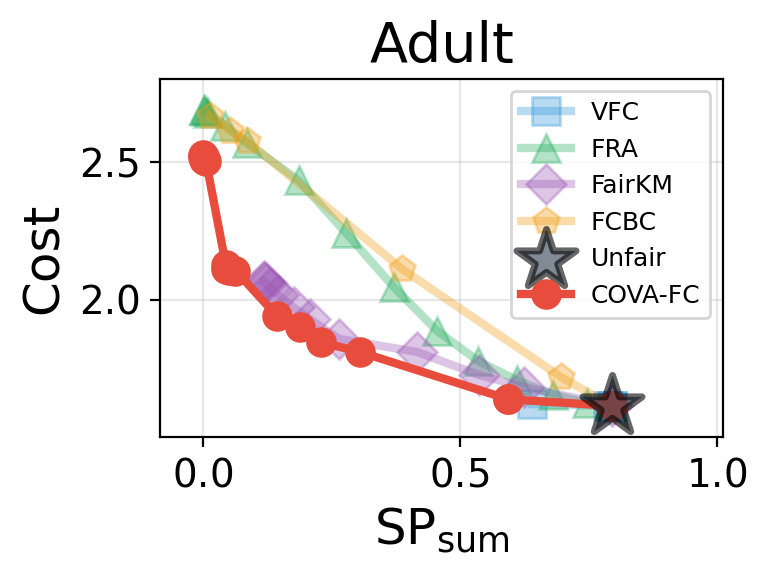}
    \includegraphics[width=0.24\linewidth]{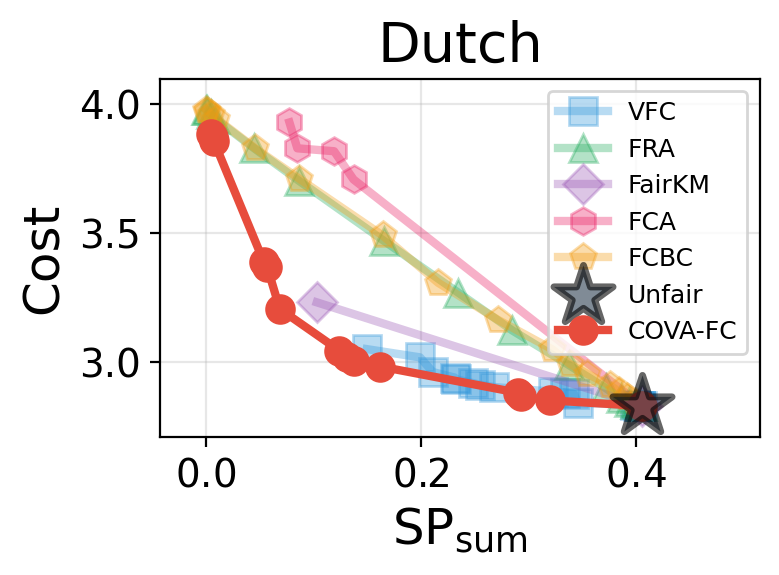}
    \includegraphics[width=0.24\linewidth]{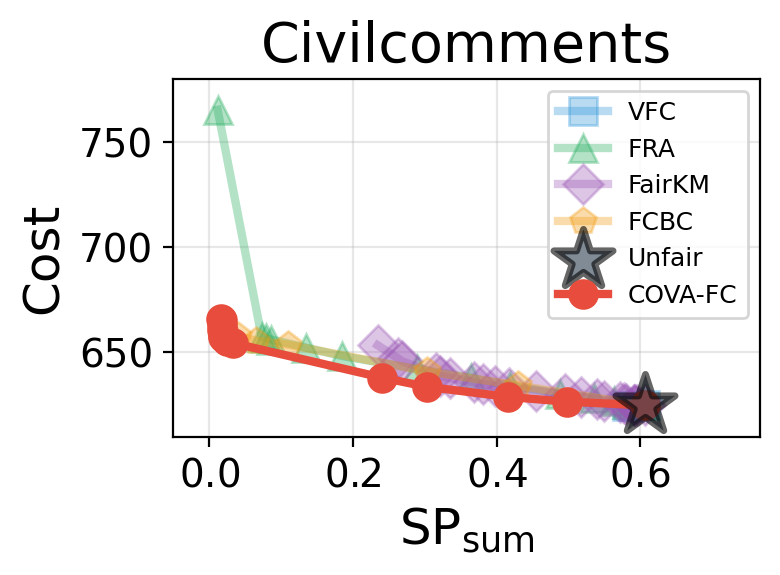}
    \includegraphics[width=0.24\linewidth]{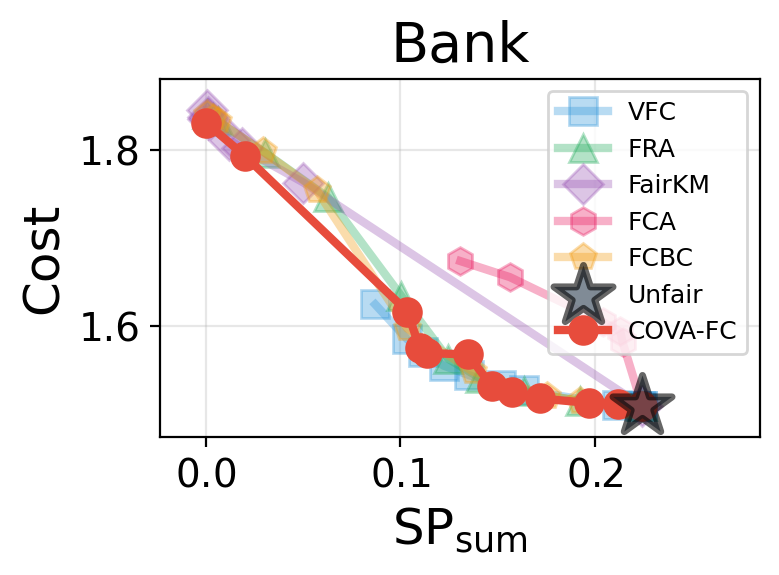}

    \caption{
    Comparison of trade-offs between `sum-aggregated' gap (top: $\textup{MP}^{(1)}_{\textup{sum}}$, bottom: $\textup{SP}_{\textup{sum}}$ ) and cost
    on (left to right) \texttt{Adult}, \texttt{Dutch}, \texttt{Civilcomments}, and \texttt{Bank} datasets.
    Note that $\textup{SP}_{\textup{sum}}$ = $\textup{MP}^{(1)}_{\textup{sum}}$ for \texttt{Bank} dataset since $q=1$.
    }
    \label{fig:gap_max_trade_offs}
    \vskip -0.1in
\end{figure}

%%%%%%%%%%%%%%%
\newpage
\paragraph{Cost-fairness trade-off: subgroup Balance.}

As discussed in the Remark in \Cref{sec:exp-setting}, Balance \citep{chierichetti2017fair} is the standard ratio-based fairness measure but degenerates on datasets where some subgroup has fewer than $K$ instances.
We use a probability-ratio version of Balance, rather than the original count-ratio form, because the latter can depend strongly on subgroup-size ratios even under perfectly balanced cluster proportions.
Using the notation of \Cref{sec:partial,sec:marg_sub}, the subgroup Balance (SB) and the $l$-th order marginal Balance ($\textup{MB}^{(l)}$) are defined as
\[
    \textup{SB}:= \min_{s \neq s'} \min_{k \in [K]} B_{k,s,s'},
    \qquad
    \textup{MB}^{(l)}:= \min_{\theta \subset [q],\, |\theta|=l}\, \min_{a \neq a'}\, \min_{k \in [K]} B_{k,\, W_{s_\theta, a},\, W_{s_\theta, a'}},
\]
where, for any two subgroup-subsets $W, W'$ with $n_W, n_{W'} > 0$ and any $k \in [K]$,
\[
    B_{k,W,W'} := \min\!\left\{
        \frac{p_{k\mid W}(\mathbf A)}{p_{k\mid W'}(\mathbf A)},
        \ \frac{p_{k\mid W'}(\mathbf A)}{p_{k\mid W}(\mathbf A)}
    \right\},
\]
$B_{k,s,s'}$ is the special case $W=\{s\},\ W'=\{s'\}$, and the minimum for $\textup{MB}^{(l)}$ is over all pairs of $l$-th order marginal subgroups.
For datasets where these Balances are well-defined (\texttt{Adult}, \texttt{Dutch}, \texttt{Bank}), we additionally report cost vs.\ SB and $\textup{MB}^{(1)}$ in \Cref{fig:sb_trade_offs}.
\textsc{Cova-FC} attains higher Balance at comparable clustering cost than the baselines in most cases, consistent with the gap-based results.

On \texttt{Adult}, both VFC and FairKM stay at $\textup{SB}=0$, while \textsc{Cova-FC} reaches strictly positive SB. 
Empirically, we observe that FairKM's hard assignments often leave at least one small subgroup empty in some cluster (\(p_{k\mid s}=0\)), and VFC can collapse to the \(K\)-means solution under tiny subgroups due to its inverse subgroup-cluster mass term (\Cref{app:vfc_deep_compare}).
\textsc{Cova-FC}'s soft assignment and inverse-mass-free surrogate keep all $p_{k\mid s}$ strictly positive, so SB stays positive.

\begin{figure}[h]
    \centering
    \includegraphics[width=0.3\linewidth]{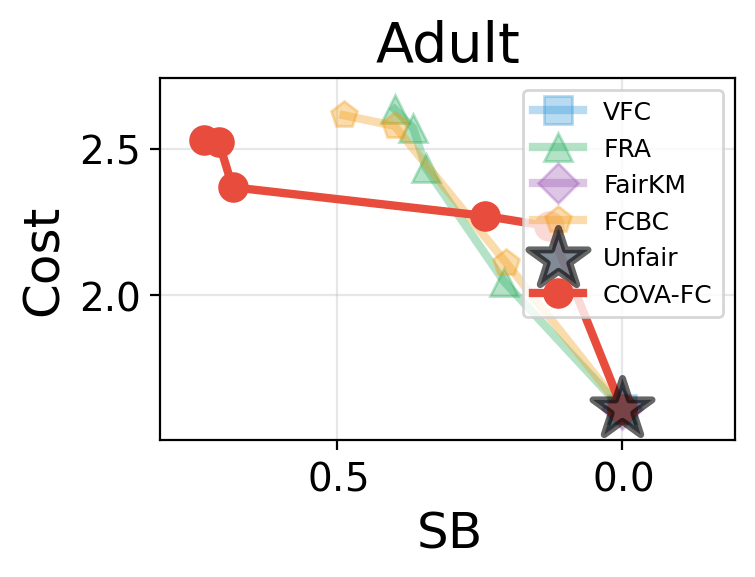}
    \includegraphics[width=0.3\linewidth]{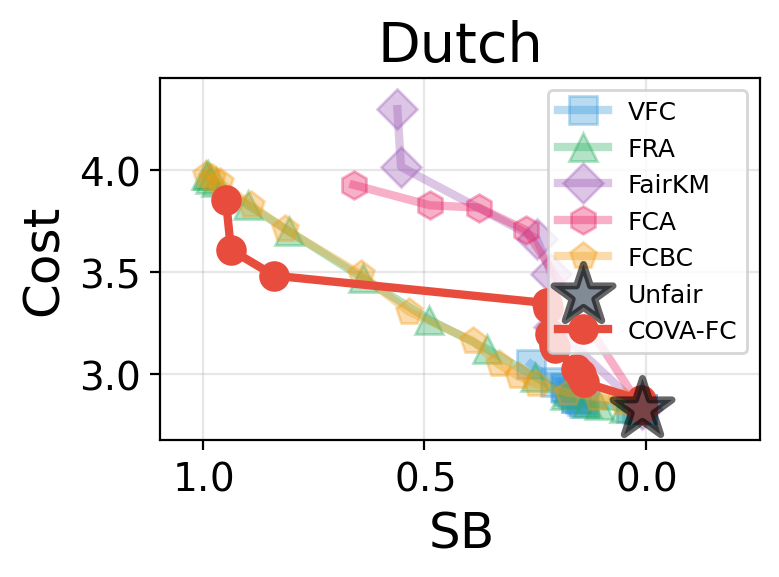}
    \includegraphics[width=0.3\linewidth]{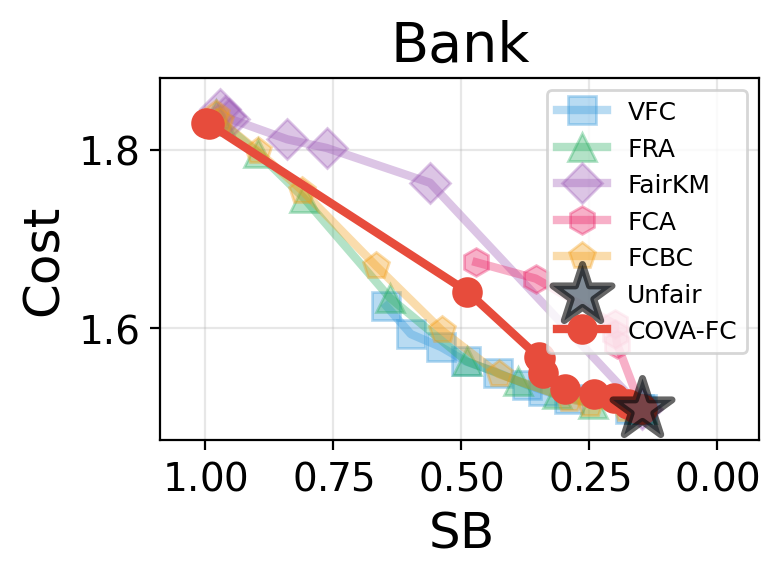}

    \includegraphics[width=0.3\linewidth]{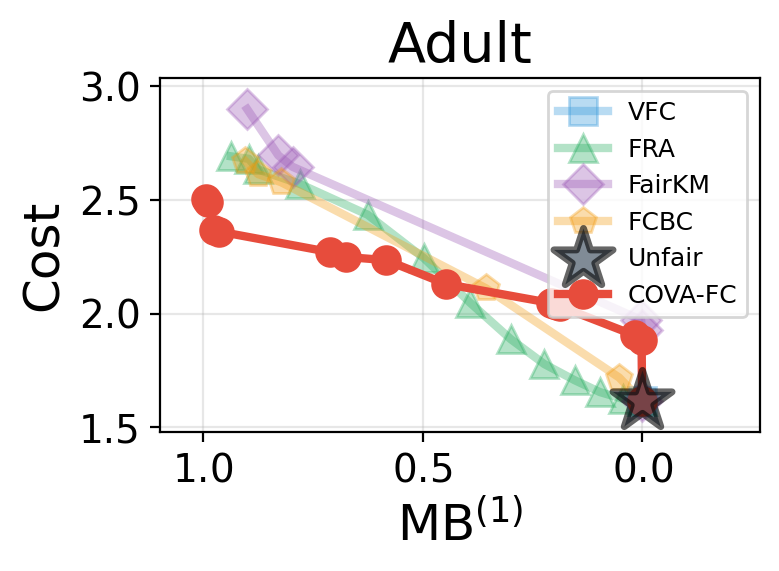}
    \includegraphics[width=0.3\linewidth]{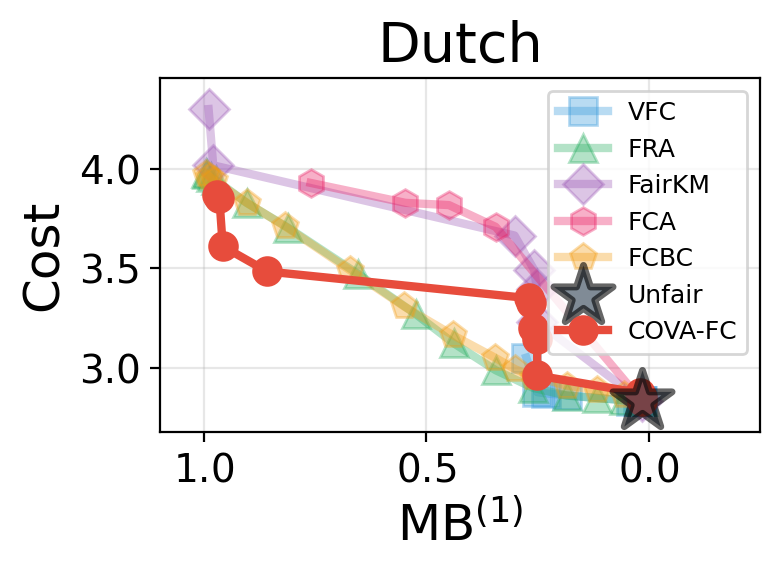}
    \includegraphics[width=0.3\linewidth]{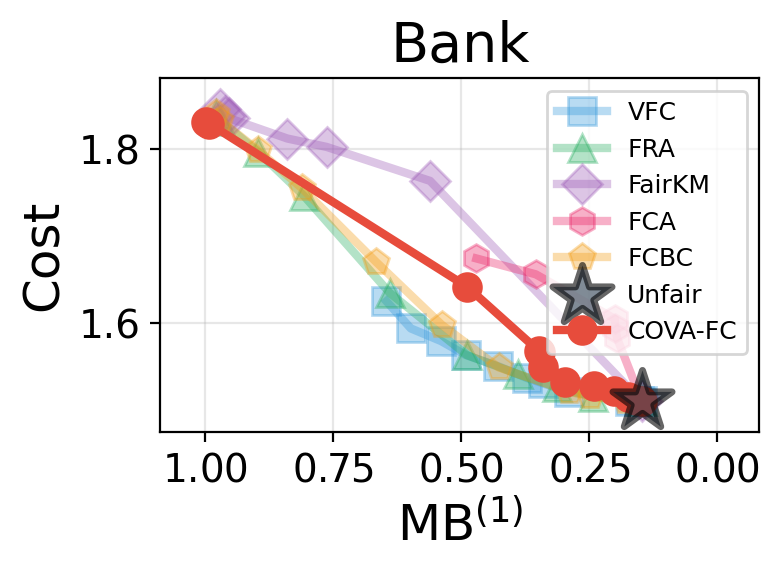}
    \caption{
    Comparison of trade-offs between subgroup Balance ($\textup{SB}$) and cost on (left to right) \texttt{Adult}, \texttt{Dutch}, and \texttt{Bank} datasets. 
    \texttt{Civilcomments} is excluded since its minimum subgroup size is smaller than $K$, which causes Balance to collapse to zero.
    }
    \label{fig:sb_trade_offs}
    \vskip -0.1in
\end{figure}

%%%%%%%%%%%%%%%
\newpage
%%%%%%%%%%%%%%%%%%%%%%%%%%%%%%%%%%%%%%%%%%%%%%%%%%%%%%%%%%%%%%%%%%%%%%%%
\subsection{Scalability.}\label{app:scalability}

To examine the scalability of \textsc{Cova-FC}, we evaluate runtime on a large-scale dataset and compare \textsc{Cova-FC} with VFC \citep{ziko2021variational}, the fastest baseline in \Cref{tab:runtime}.
We use the \texttt{ACSIncome} dataset with $n=1{,}664{,}500$ instances, feature dimension $d=7$, and $q=3$ sensitive attributes (gender, marital status, and race with categories White/Black/Asian/Other), which yields $16$ subgroups.
We run each method to reach a comparable fairness level.
The results in \Cref{tab:runtime_acs} show that \textsc{Cova-FC} runs $3.22\times$ faster than VFC at a comparable $\mathrm{SP}$ on this large-scale dataset.

\begin{table}[h!]
    \centering
    \footnotesize
    \caption{Comparison between \textsc{Cova-FC} and VFC on \texttt{ACSIncome} in terms of fairness level (SP) and runtime.
    }
    \begin{tabular}{lccc}
    \toprule
    Dataset & Method   & SP & Runtime (sec) \\
    \midrule
    
    \multirow{2}{*}{\texttt{ACSIncome}}
    & \textbf{\textsc{Cova-FC}$\checkmark$}  & 0.19 & 179.72  \\
    & VFC \citep{ziko2021variational}       & 0.18  & 578.75  \\
    
    \bottomrule
    \end{tabular}
    \label{tab:runtime_acs}
    \vskip -0.1in
\end{table}

%%%%%%%%%%%%%%%

% \newpage

%%%%%%%%%%%%%%%
\subsection{Relationship between $\overline{\textup{CR}}$ and $\Delta$}\label{sec:cr-gap}

\begin{figure*}[h!]
  \centering
  \includegraphics[width=0.23\linewidth]{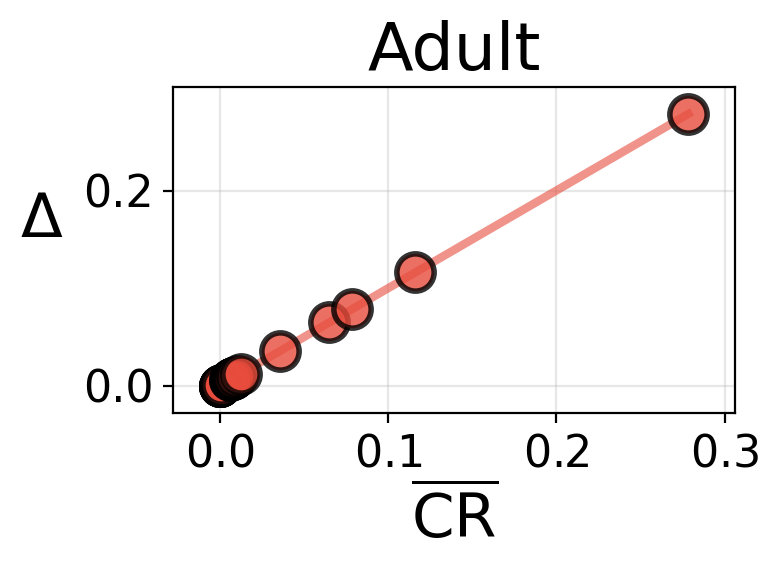}
  \includegraphics[width=0.23\linewidth]{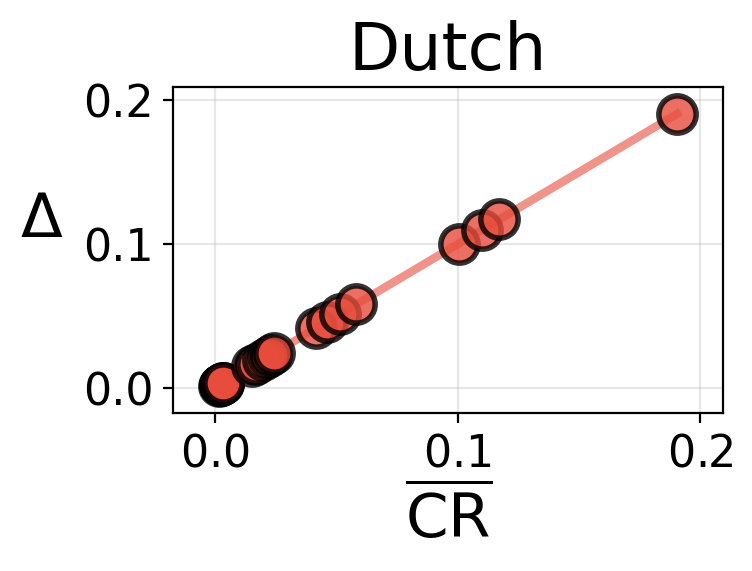}
  \includegraphics[width=0.23\linewidth]{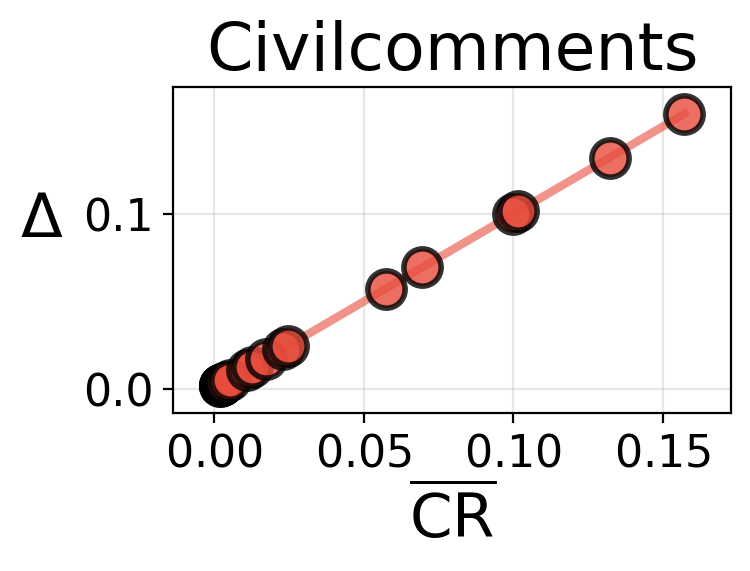}
  \includegraphics[width=0.23\linewidth]{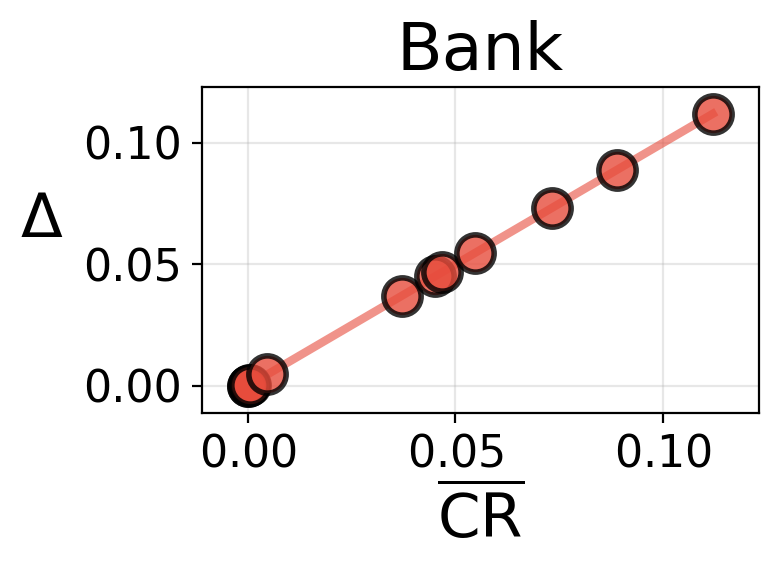}
  \caption{Plots of $\overline{\textup{CR}}$ vs. $\Delta$ on (left to right) \texttt{Adult}, \texttt{Dutch}, \texttt{Civilcomments}, and \texttt{Bank} datasets.
  }
  \label{fig:dr_gap_scatter_all}
  \vskip -0.1in
\end{figure*}

To empirically support \Cref{thm:Delta_ms_equiv_hard}, we visualize the relationship between $\overline{\textup{CR}}$ computed on the learned soft assignment and $\Delta$ computed on the induced hard assignment across different values of $\lambda.$
Although experiments are conducted with soft assignment, the learned assignment is highly concentrated near one-hot vectors in practice (see \Cref{sec:a_hard_converge}), so this comparison is informative for the final hard clustering as well.
\Cref{fig:dr_gap_scatter_all} shows a clear positive relation: smaller values of $\overline{\textup{CR}}$ are associated with smaller values of $\Delta,$ supporting $\overline{\textup{CR}}$ as an effective surrogate for optimizing fair hard clustering in practice.

%%%%%%%%%%%%%%%
\newpage 
\subsection{Deeper comparison with VFC \& FairKM}\label{app:vfc_deep_compare}

\paragraph{Why VFC is the closest baseline.}
Among the baselines, VFC is closest to \textsc{Cova-FC} in computational profile: both use soft assignment and per-instance parallel 
updates. We therefore compare the two methods more closely below.

Let \(\mathbf S^{(t)}\in[0,1]^{N\times K}\) denote the soft assignment matrix at iteration \(t\), and let \(S_k^{(t)}\) be its \(k\)-th column. Let \(V_j\in\{0,1\}^N\) be the indicator vector of subgroup \(j\), and let \(v_{j,p}\) be its \(p\)-th entry. Writing \(\mu_j\) for the target proportion of subgroup \(j\), \(a_{p,k}\) for the clustering term, and \(L\) for the bound-update scaling parameter, VFC assignment update is
\begin{equation}
S_{p,k}^{(t+1)}
\propto
S_{p,k}^{(t)}
\exp\!\left(
\frac{-a_{p,k}-\lambda b_{p,k}(\mathbf S^{(t)})}{L}
\right),
\end{equation}
followed by row normalization over \(k\), where
\begin{equation}
b_{p,k}^{(t)}
=
\sum_j
\left(
\frac{\mu_j}{\mathbf 1^\top S_k^{(t)}}
-
\frac{\mu_j v_{j,p}}{V_j^\top S_k^{(t)}}
\right).
\end{equation}

\paragraph{High-\(\lambda\) breakdown in VFC.}
The term \(V_j^\top S_k^{(t)}\) is the soft mass of subgroup \(j\) in cluster \(k\). 
When some subgroup-cluster masses are small, the inverse-mass term in \(b_{p,k}^{(t)}\) can become large. 
Since this term is multiplied by \(\lambda\) inside an exponential update, increasing \(\lambda\) can make the assignment update numerically extreme. 
In \Cref{tab:vfc_gradient_sp}, VFC initially improves SP on \texttt{Communities}. 
However, at slightly larger $\lambda$, the soft update becomes non-finite and the implementation falls back to the K-means solution. \textsc{Cova-FC} avoids this inverse subgroup-cluster mass term.

\begin{table}[h!]
\centering
\caption{VFC behavior on Communities with $K=10$, $L=2$, as $\lambda$ increases. 
}
\label{tab:vfc_gradient_sp}
\footnotesize
\setlength{\tabcolsep}{8pt}
\begin{tabular}{cccc}
\toprule
\(\lambda\) & Cost & SP & \(\max_{p,k}|\lambda b_{p,k}|\) \\
\midrule
\(0\) & \textbf{56.99} & \textbf{0.0251} & -- \\
\(1{\times}10^{-11}\) & 57.06 & 0.0240 & \(2.56{\times}10^2\) \\
\(2{\times}10^{-11}\) & 57.34 & 0.0216 & \(5.12{\times}10^2\) \\
\(4{\times}10^{-11}\) & 60.57 & 0.0141 & \(1.02{\times}10^3\) \\
\(6{\times}10^{-11}\) & \textbf{56.99} & \textbf{0.0251} & \(1.54{\times}10^3\) \\
\bottomrule
\end{tabular}
\end{table}

\paragraph{Why FairKM is hard to extend to soft assignment.}
FairKM \citep{abraham2020fairkm} updates cluster assignment by hard
reassignment, which can be viewed as solving
\[
\arg\min_{A_i \in [K]} 
\left(
\|x_i - \mu_{A_i}\|^2 
+ \lambda \cdot
\textsc{FairnessPenalty}(A_i; \text{others})
\right)
\]
for each instance \(i\) under the current assignment of the other instances.
The fairness penalty is evaluated from hard membership counts, so this
combinatorial update (i) requires evaluating a per-cluster fairness penalty for
each candidate \(A_i\), whose cost increases with both \(K\) and \(M\), where
\(M=2^q\); (ii) is difficult to parallelize across instances because each update
changes the global subgroup proportions; and (iii) is non-differentiable in
\(A_i\), ruling out direct gradient-based scaling.
A soft-assignment relaxation, replacing \(\mathbb I(A_i=k)\) with \(A_{ik}\),
does not preserve the original hard-count-based update structure of FairKM and
would require re-deriving the optimization.
In contrast, our continuous soft-assignment formulation directly optimizes a
differentiable subgroup-fairness surrogate, allowing fine-grained control over
small subgroups within a scalable optimization framework.

%%%%%%%%%%%%%%%
\newpage

\subsection{Effects of the continuous relaxation}\label{app:effect_relax}

\paragraph{Four variants of the surrogate.}
Recall that $\textup{CR}(\mathbf A)$ has two discrete components: the worst-case subgroup membership $s$ (relaxed by $\mathbf v \in \mathbb S^{M_+}$) and the hard cluster assignment $\mathbf A$ (relaxed by replacing $\mathbb I(A_i = k)$ with $A_{ik}$).
Depending on which component is relaxed, we have four variants:
\begin{align*}
    \textup{CR}(\mathbf A)
    &:= \max_{s \in \mathcal S_+} \max_{\beta \in \mathcal B_\infty}
       \tfrac{1}{2n} \textstyle\sum_{i=1}^n \left(\sum_{k=1}^K \beta_k \mathbb I(A_i = k)\right)(c_{is} - \bar c_s), \\
    \textup{CR}_{\mathbf v}(\mathbf A)
    &:= \max_{\mathbf v \in \mathbb S^{M_+}} \max_{\beta \in \mathcal B_\infty}
       \tfrac{1}{2n} \textstyle\sum_{i=1}^n \left(\sum_{k=1}^K \beta_k \mathbb I(A_i = k)\right)
       \mathbf v^\top (c_i - \bar c), \\
    \textup{CR}_{\mathbf A}(\mathbf A)
    &:= \max_{s \in \mathcal S_+} \max_{\beta \in \mathcal B_\infty}
       \tfrac{1}{2n} \textstyle\sum_{i=1}^n \left(\sum_{k=1}^K \beta_k A_{ik}\right)(c_{is} - \bar c_s), \\
    \overline{\textup{CR}}(\mathbf A)
    &:= \max_{\mathbf v \in \mathbb S^{M_+}} \max_{\beta \in \mathcal B_\infty}
       \tfrac{1}{2n} \textstyle\sum_{i=1}^n \left(\sum_{k=1}^K \beta_k A_{ik}\right)
       \mathbf v^\top (c_i - \bar c).
\end{align*}
Here $\textup{CR}$ is the original (both hard), $\textup{CR}_{\mathbf v}$ relaxes only $\mathbf v$, $\textup{CR}_{\mathbf A}$ relaxes only $\mathbf A$, and $\overline{\textup{CR}}$ relaxes both. The same naming applies to the general $\mathcal W$-version $\textup{CR}(\mathbf A; \mathcal W)$ by replacing the max over $s$ with a max over $W \in \mathcal W_+$ and using the corresponding $c_{i,W}, \bar c_W$.
$\overline{\textup{CR}}$ is the final surrogate used in \textsc{Cova-FC}; the other three serve as ablations below.

%%%%%%%%%%%%%%%
\paragraph{$\textup{CR}_{\mathbf v}$ vs.\ $\textup{CR}_{\mathbf A}$.}
We compare $\textup{CR}_{\mathbf v}$ (hard $\mathbf A$) with $\textup{CR}_{\mathbf A}$ (soft $\mathbf A$) on the four benchmark datasets.
\Cref{tab:abl_A_all_tilde} shows that $\textup{CR}_{\mathbf A}$ produces fair clusters, whereas $\textup{CR}_{\mathbf v}$ often fails to reduce the fairness gap due to the non-smooth indicator $\mathbb I(A_i=k)$.

\begin{table}[hbt!]
\centering
\footnotesize
\caption{Comparison between $\textup{CR}_{\mathbf v}$ and $\textup{CR}_{\mathbf A}$ in terms of Cost and SP.}
\label{tab:abl_A_all_tilde}
\begin{tabular}{lccc}
\toprule
Dataset & Variant & Cost & SP \\
\midrule
\multirow{2}{*}{\texttt{Adult}}
& $\textup{CR}_{\mathbf v}$ & 1.70 & 0.1046 \\
& $\textup{CR}_{\mathbf A}$ & 2.52 & \textbf{0.0002} \\
\midrule

\multirow{2}{*}{\texttt{Dutch}}
& $\textup{CR}_{\mathbf v}$ & 2.85 & 0.0888 \\
& $\textup{CR}_{\mathbf A}$ & 3.94 & \textbf{0.0001} \\
\midrule

\multirow{2}{*}{\texttt{Civilcomments}}
& $\textup{CR}_{\mathbf v}$ & 624.99 & 0.0711 \\
& $\textup{CR}_{\mathbf A}$ & 657.84 & \textbf{0.0012} \\
\midrule

\multirow{2}{*}{\texttt{Bank}}
& $\textup{CR}_{\mathbf v}$ & 1.52 & 0.0877 \\
& $\textup{CR}_{\mathbf A}$ & 1.83 & \textbf{0.0001} \\
\bottomrule
\end{tabular}
\vskip -0.1in
\end{table}

%%%%%%%%%%%%%%%
\paragraph{$\textup{CR}_{\mathbf A}$ vs.\ $\overline{\textup{CR}}$.}
We compare $\textup{CR}_{\mathbf A}$ (hard $\mathbf v$, requiring an explicit max over all subgroups) with $\overline{\textup{CR}}$ (soft $\mathbf v$, optimizing a linear combination via $\mathbf v$).
$\overline{\textup{CR}}$ avoids the linear scan and admits closed-form gradients in $\mathbf v$.
\Cref{tab:abl_A_all} shows that $\overline{\textup{CR}}$ matches the cost and SP of $\textup{CR}_{\mathbf A}$ at substantially lower runtime, with larger gains when $|\mathcal W|$ is larger (\texttt{Adult}: 16 subgroups, \texttt{Civilcomments}: 24).

\begin{table}[hbt!]
\centering
\footnotesize
\caption{Comparison between $\textup{CR}_{\mathbf A}$ and $\overline{\textup{CR}}$ in terms of Cost, SP, and Runtime.}
\begin{tabular}{lcccc}
\toprule
Dataset (\# Subgroups) & Variant & Cost & SP & Runtime (sec) \\
\midrule

\multirow{2}{*}{\texttt{Adult} (16)}
& $\textup{CR}_{\mathbf A}$ & 2.53 & 0.0002 & 201.66 \\
& $\overline{\textup{CR}}$ & 2.52 & 0.0002 & \textbf{132.80} ($\times 1.52$ faster) \\
\midrule

\multirow{2}{*}{\texttt{Dutch} (4)}
& $\textup{CR}_{\mathbf A}$ & 3.94 & 0.0001 & 34.60 \\
& $\overline{\textup{CR}}$ & 3.94 & 0.0001 & \textbf{32.00} ($\times 1.08$ faster) \\
\midrule

\multirow{2}{*}{\texttt{Civilcomments} (24)}
& $\textup{CR}_{\mathbf A}$ & 657.84 & 0.0012 & 224.80 \\
& $\overline{\textup{CR}}$ & 657.84 & 0.0012 & \textbf{89.40} ($\times 2.51$ faster) \\
\midrule

\multirow{2}{*}{\texttt{Bank} (3)}
& $\textup{CR}_{\mathbf A}$ & 1.83 & 0.0001 & 42.12 \\
& $\overline{\textup{CR}}$ & 1.83 & 0.0001 & \textbf{39.43} ($\times 1.07$ faster) \\

\bottomrule
\end{tabular}
\label{tab:abl_A_all}
\vskip -0.1in
\end{table}

%%%%%%
\newpage
\paragraph{$\textup{CR}_{\mathbf v}$ vs.\ $\overline{\textup{CR}}$.}
We further isolate the role of relaxing $\mathbf A$ by comparing $\overline{\textup{CR}}$ with $\textup{CR}_{\mathbf v}$, which keeps $\mathbf v$ relaxed but uses the hard indicator on $\mathbf A$.
With $(\mathbf \mu, \mathbf v, \beta)$ fixed, $\textup{CR}_{\mathbf v}$ admits the closed-form update
$
A_i \leftarrow \arg\min_{k \in [K]}\left(\|x_i - \mu_k\|^2 + \lambda \beta_k \mathbf v^\top (c_i - \bar c)\right),\ i \in [n].
$

\Cref{fig:abl_soft_v_hard_A} shows that $\overline{\textup{CR}}$ yields a stable decrease and clear convergence, whereas $\textup{CR}_{\mathbf v}$ exhibits oscillations and fails to converge, because the non-smooth indicator $\mathbb I(A_i = k)$ makes the objective discontinuous in $\mathbf A$ and induces frequent switching of the maximizers $(\mathbf v, \beta)$.

\begin{figure}[h!]
  \centering
  \includegraphics[width=0.3\linewidth]{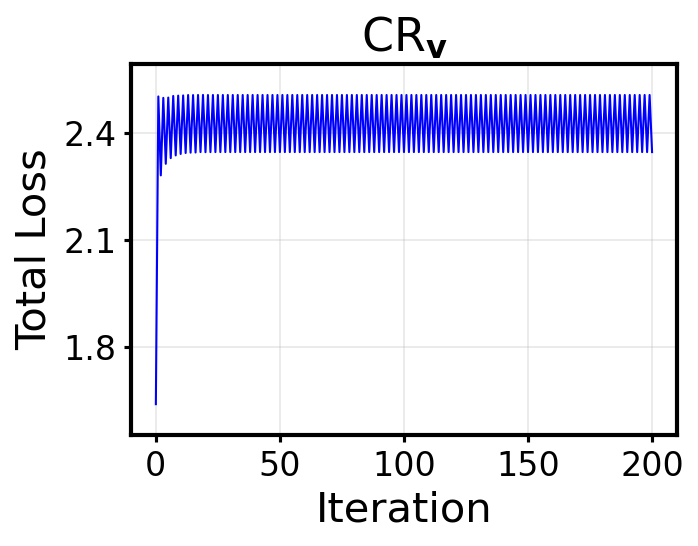}
  \includegraphics[width=0.3\linewidth]{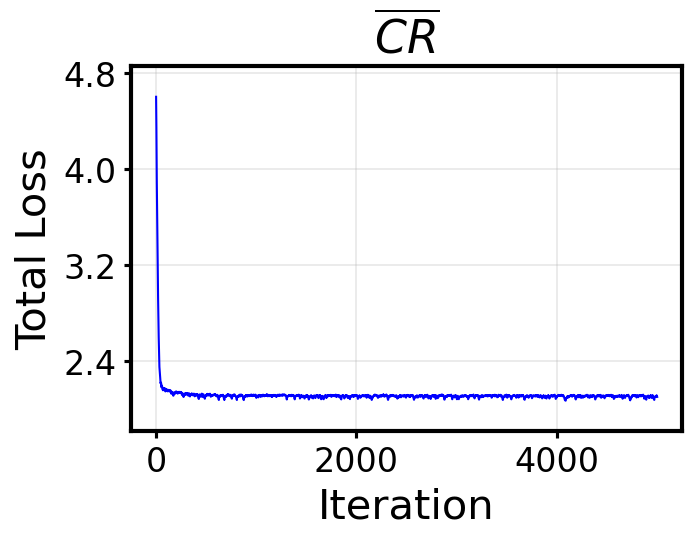}
    \caption{Plots of the total loss (clustering cost $+\lambda$ fairness penalty) across epochs on \texttt{Adult}, using $\textup{CR}_{\mathbf v}$ (left) and $\overline{\textup{CR}}$ (right) as the fairness penalty.}
  \label{fig:abl_soft_v_hard_A}
  \vskip -0.1in
\end{figure}

In addition to the total loss, we also track the fairness gap  $\Delta(\mathbf A; \mathcal W)$ over iterations.
\Cref{fig:app_abl_soft_v_hard_A} presents the results across multiple values of $\lambda:$
(top) $\textup{CR}_{\mathbf v}$ with hard cluster assignment as the fairness penalty.
(bottom) $\overline{\textup{CR}}$ with soft cluster assignment as the fairness penalty.
The number of iterations can differ across settings because each run is terminated once the objective in \Cref{eq:obj} satisfies the convergence criterion (see \Cref{sec:exp-setting}).
We observe that $\textup{CR}_{\mathbf v}$ (top) (i) exhibits oscillations, (ii) attains substantially larger fairness gap than $\overline{\textup{CR}},$ and (iii) does not show a clear improvement in the gap even when using a large $\lambda.$
In contrast, $\overline{\textup{CR}}$ (bottom) reduces the gap clearly and produces fair clusters.

\begin{figure}[h!]
  \centering
  \includegraphics[width=0.24\linewidth]{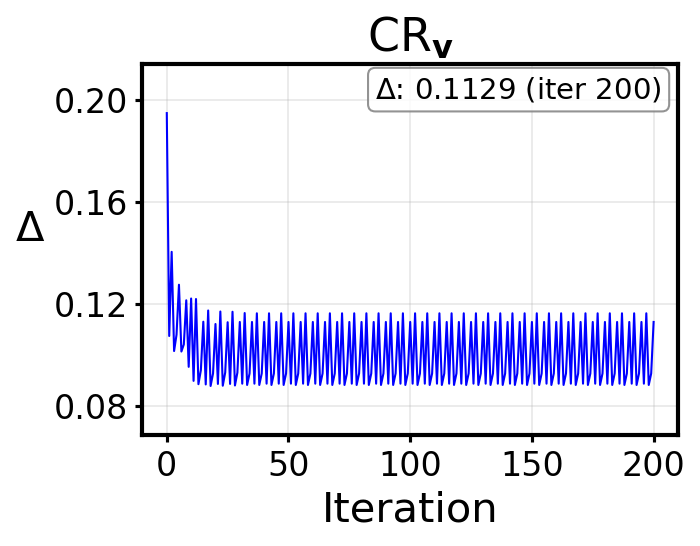}
  \includegraphics[width=0.24\linewidth]{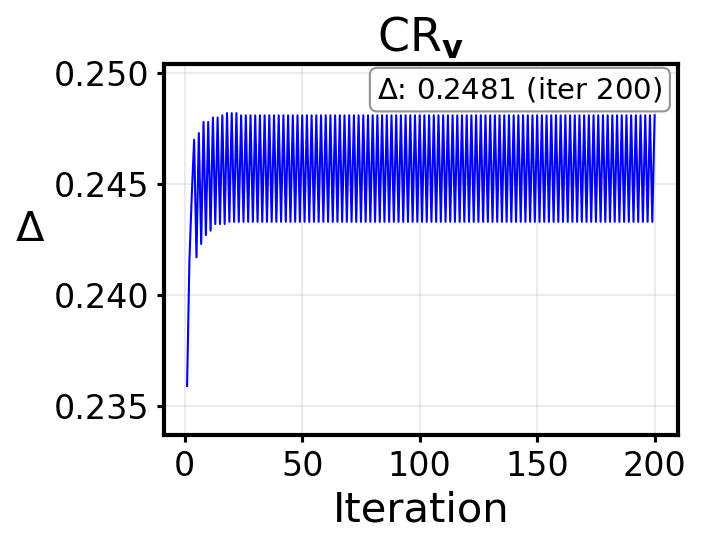}
  \includegraphics[width=0.24\linewidth]{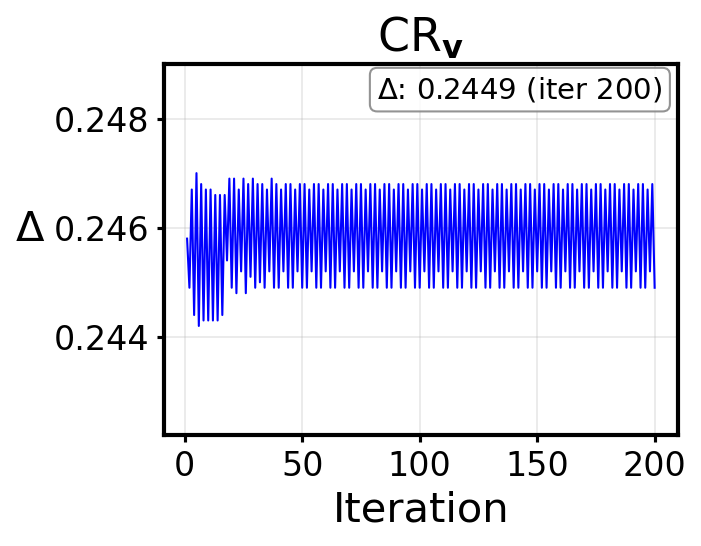}
  \includegraphics[width=0.24\linewidth]{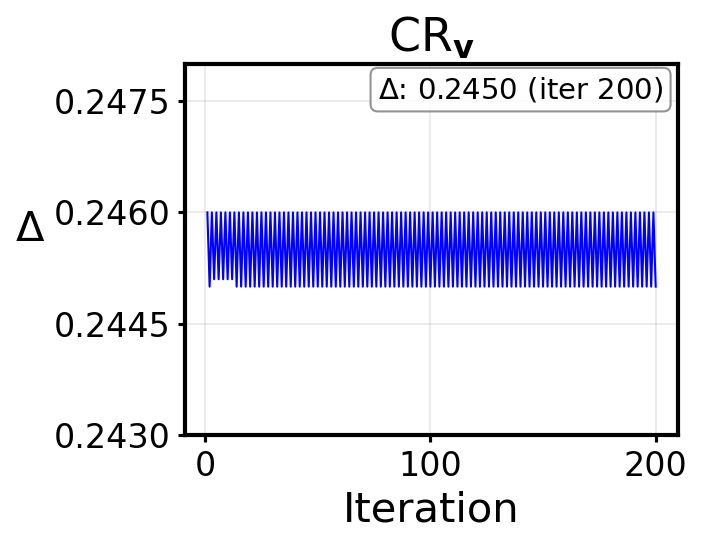}
  
  \includegraphics[width=0.24\linewidth]{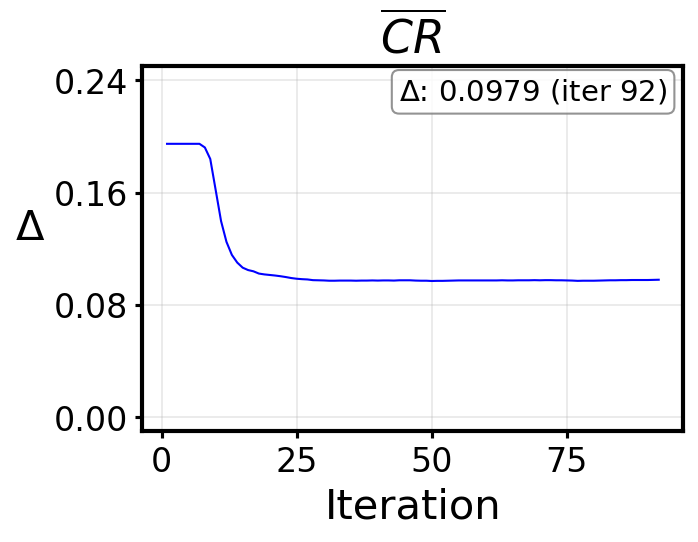}
  \includegraphics[width=0.24\linewidth]{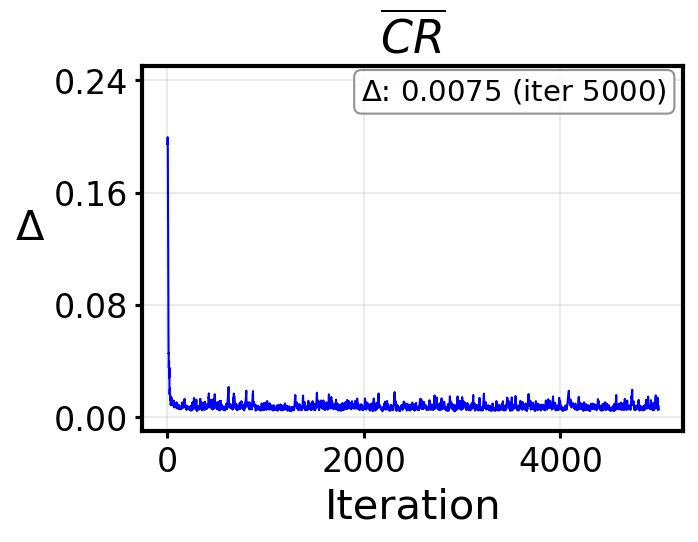} 
  \includegraphics[width=0.24\linewidth]{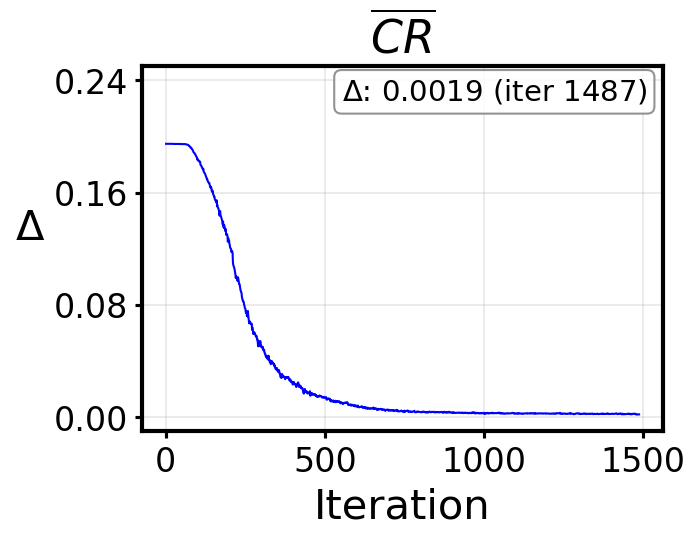}
  \includegraphics[width=0.24\linewidth]{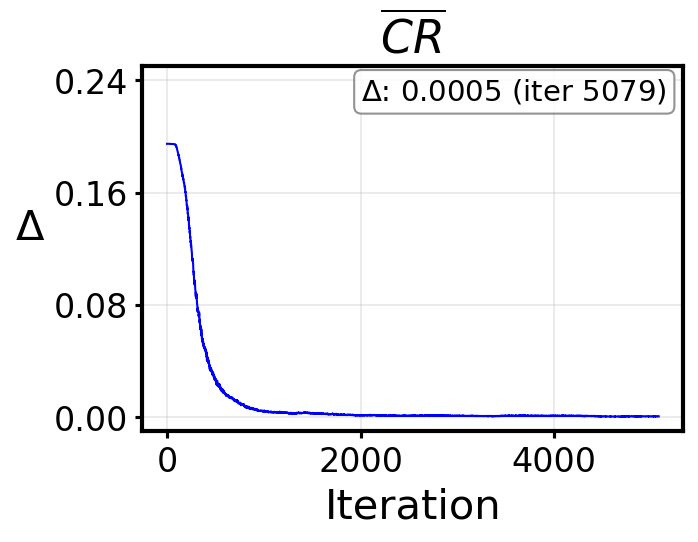}
  
  \caption{Plots of $\Delta(\mathbf A; \mathcal W)$ across iterations on \texttt{Adult} dataset when using $\textup{CR}_{\mathbf v}$ (top) and $\overline{\textup{CR}}$ (bottom) as the fairness penalty.
  (left to right) $\lambda$ ranges from $0.3$ to $50$.}
  % $\lambda = 0.02, 7.00, 15.00, 30.00.$}
  \label{fig:app_abl_soft_v_hard_A}
  \vskip -0.1in
\end{figure}

Overall, these results indicate that using \textit{hard cluster assignment harms optimization stability even when the most-violated selection is relaxed} by the use of $\mathbf v$ instead of $m,$ and often fails to produce fair clusters in practice.

%%%%%%
\newpage
\paragraph{Concentration of soft cluster assignment.}\label{sec:a_hard_converge}

We examine how concentrated each soft cluster assignment $A_{i}$ becomes during optimization.
Specifically, for each $i\in[n]$ we compute the maximum probability $\max_{k\in[K]} A_{ik}$, and plot histograms of these values across all instances at 100, 300, 500, and 700 iterations (\Cref{fig:assignment_max_hist}).
As optimization proceeds, the distribution of $\max_{k\in[K]} A_{ik}$ increasingly concentrates near $1$, indicating that each $A_i$ becomes sharply peaked and thus close to a hard cluster assignment.
This result justifies the use of soft assignment even if our goal is to find a fair hard assignment.

\begin{figure*}[h!]
  \centering
  \includegraphics[width=0.24\linewidth]{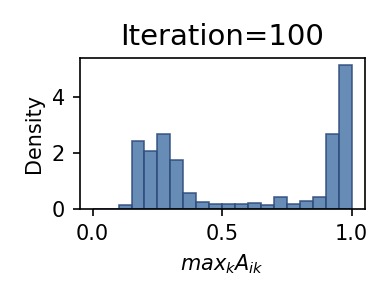}
  \includegraphics[width=0.24\linewidth]{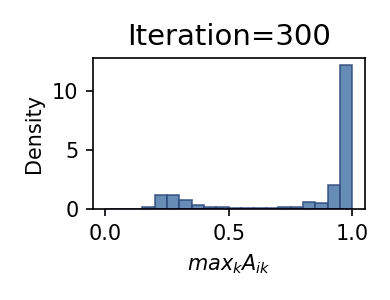}
  \includegraphics[width=0.24\linewidth]{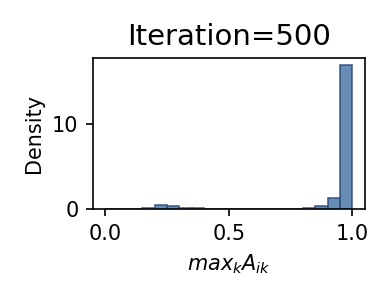}
  \includegraphics[width=0.24\linewidth]{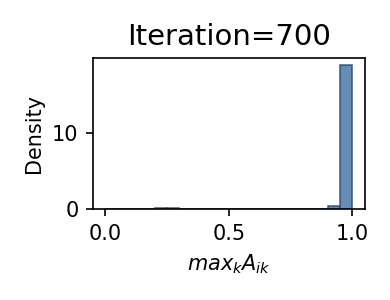}
  \caption{
    Histograms of the maximum soft cluster assignment, $\max_{k\in[K]} A_{ik}$, over all instances $i\in[n]$ at 100, 300, 500, and 700 iterations.}
  \label{fig:assignment_max_hist}
  \vskip -0.1in
\end{figure*}

%%%%%%%%%
% \newpage
\subsection{Additional image experiment on CelebA}

We additionally evaluate \textsc{Cova-FC} on the \texttt{CelebA} dataset to test its applicability in an image-feature setting. 
We use four binary sensitive attributes, which induce 16 subgroups, and extract feature vectors using a pre-trained ResNet-18 model. 
We then apply \textsc{Cova-FC} to these extracted features in the same way as for the other datasets.
The results in \Cref{fig:celeba_tradeoff} show that \textsc{Cova-FC} can be applied without modification to image-derived feature representations and still improves subgroup fairness as the fairness penalty increases. 

\begin{figure*}[h!]
  \centering
  \includegraphics[width=0.3\linewidth]{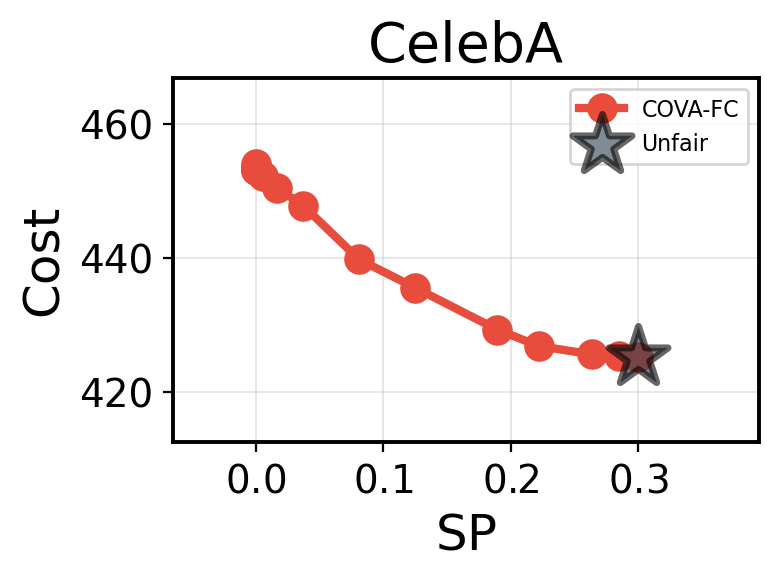}
  \includegraphics[width=0.3\linewidth]{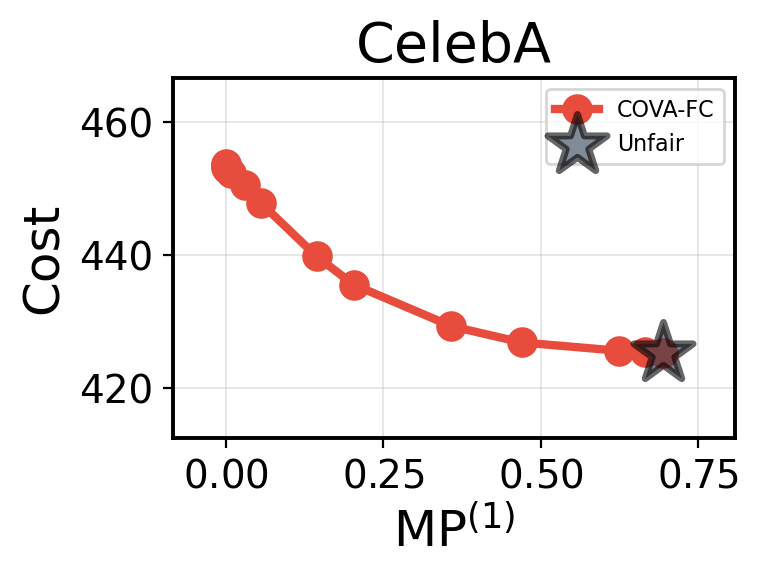}

  \caption{Cost-fairness trade-off on the \texttt{CelebA} dataset with four binary sensitive attributes (16 subgroups), using image features extracted by a pre-trained ResNet-18 model.
  }
  \label{fig:celeba_tradeoff}
  % \vskip -0.1in
\end{figure*}

\section{Broader impact}\label{app:broader_impact}

\textsc{Cova-FC} targets subgroup unfairness under multiple sensitive attributes, a setting common in real applications such as customer segmentation, resource allocation, and large-scale demographic analysis, where unfair treatment is often driven by intersections of attributes rather than a single one.
By controlling subgroup and marginal fairness simultaneously and admitting parallel updates that scale to millions of instances, \textsc{Cova-FC} broadens the deployability of fair clustering in practice.
At the same time, \textsc{Cova-FC} enforces only a statistical fairness criterion at the clustering level: a balanced cluster assignment does not by itself guarantee fair outcomes in downstream tasks (e.g., recommendation, screening, or pricing), and the choice of sensitive attributes inevitably reflects pre-existing social categories with their own simplifications.
Practitioners should therefore audit fairness in the deployment context, document the chosen attributes and groupings, and avoid relying on a single fairness measure as a guarantee of equitable treatment.

\newpage
\section*{NeurIPS Paper Checklist}

\begin{enumerate}

\item {\bf Claims}
    \item[] Question: Do the main claims made in the abstract and introduction accurately reflect the paper's contributions and scope?
    \item[] Answer: \answerYes{}
    \item[] Justification: The abstract and \Cref{sec:exp} state our contributions: a covariance-based fairness gap, its continuous relaxation, the resulting algorithm \textsc{Cova-FC}, an extension to subgroup-marginal fairness, and empirical evaluation against existing baselines.
    \item[] Guidelines:
    \begin{itemize}
        \item The answer \answerNA{} means that the abstract and introduction do not include the claims made in the paper.
        \item The abstract and/or introduction should clearly state the claims made, including the contributions made in the paper and important assumptions and limitations. A \answerNo{} or \answerNA{} answer to this question will not be perceived well by the reviewers. 
        \item The claims made should match theoretical and experimental results, and reflect how much the results can be expected to generalize to other settings. 
        \item It is fine to include aspirational goals as motivation as long as it is clear that these goals are not attained by the paper. 
    \end{itemize}

\item {\bf Limitations}
    \item[] Question: Does the paper discuss the limitations of the work performed by the authors?
    \item[] Answer: \answerYes{}
    \item[] Justification: We discuss limitations and future directions in the concluding remarks (\Cref{sec:cr-comparison} and the Future directions paragraph), including the binary-attribute assumption (extended in a Remark in \Cref{sec:exp-setting}) and possible extensions to other clustering paradigms.
    \item[] Guidelines:
    \begin{itemize}
        \item The answer \answerNA{} means that the paper has no limitation while the answer \answerNo{} means that the paper has limitations, but those are not discussed in the paper. 
        \item The authors are encouraged to create a separate ``Limitations'' section in their paper.
        \item The paper should point out any strong assumptions and how robust the results are to violations of these assumptions (e.g., independence assumptions, noiseless settings, model well-specification, asymptotic approximations only holding locally). The authors should reflect on how these assumptions might be violated in practice and what the implications would be.
        \item The authors should reflect on the scope of the claims made, e.g., if the approach was only tested on a few datasets or with a few runs. In general, empirical results often depend on implicit assumptions, which should be articulated.
        \item The authors should reflect on the factors that influence the performance of the approach. For example, a facial recognition algorithm may perform poorly when image resolution is low or images are taken in low lighting. Or a speech-to-text system might not be used reliably to provide closed captions for online lectures because it fails to handle technical jargon.
        \item The authors should discuss the computational efficiency of the proposed algorithms and how they scale with dataset size.
        \item If applicable, the authors should discuss possible limitations of their approach to address problems of privacy and fairness.
        \item While the authors might fear that complete honesty about limitations might be used by reviewers as grounds for rejection, a worse outcome might be that reviewers discover limitations that aren't acknowledged in the paper. The authors should use their best judgment and recognize that individual actions in favor of transparency play an important role in developing norms that preserve the integrity of the community. Reviewers will be specifically instructed to not penalize honesty concerning limitations.
    \end{itemize}

\item {\bf Theory assumptions and proofs}
    \item[] Question: For each theoretical result, does the paper provide the full set of assumptions and a complete (and correct) proof?
    \item[] Answer: \answerYes{}
    \item[] Justification: All theorems (\Cref{thm:Delta_CR_equal,thm:Delta_CRbar_equal,thm:Delta_ms_equiv_hard}) are stated with explicit assumptions in the main paper, and full proofs are provided in \Cref{sec:appen-proofs}.
    \item[] Guidelines:
    \begin{itemize}
        \item The answer \answerNA{} means that the paper does not include theoretical results. 
        \item All the theorems, formulas, and proofs in the paper should be numbered and cross-referenced.
        \item All assumptions should be clearly stated or referenced in the statement of any theorems.
        \item The proofs can either appear in the main paper or the supplemental material, but if they appear in the supplemental material, the authors are encouraged to provide a short proof sketch to provide intuition. 
        \item Inversely, any informal proof provided in the core of the paper should be complemented by formal proofs provided in appendix or supplemental material.
        \item Theorems and Lemmas that the proof relies upon should be properly referenced. 
    \end{itemize}

    \item {\bf Experimental result reproducibility}
    \item[] Question: Does the paper fully disclose all the information needed to reproduce the main experimental results of the paper to the extent that it affects the main claims and/or conclusions of the paper (regardless of whether the code and data are provided or not)?
    \item[] Answer: \answerYes{}
    \item[] Justification: Datasets, baselines, sensitive attributes, hyperparameters, and convergence criteria are described in \Cref{sec:exp-setting} and \Cref{sec:appen-impl}; the algorithm pseudocode and update derivations are in \Cref{app:cova_fc_algorithm}.
    \item[] Guidelines:
    \begin{itemize}
        \item The answer \answerNA{} means that the paper does not include experiments.
        \item If the paper includes experiments, a \answerNo{} answer to this question will not be perceived well by the reviewers: Making the paper reproducible is important, regardless of whether the code and data are provided or not.
        \item If the contribution is a dataset and\slash or model, the authors should describe the steps taken to make their results reproducible or verifiable. 
        \item Depending on the contribution, reproducibility can be accomplished in various ways. For example, if the contribution is a novel architecture, describing the architecture fully might suffice, or if the contribution is a specific model and empirical evaluation, it may be necessary to either make it possible for others to replicate the model with the same dataset, or provide access to the model. In general. releasing code and data is often one good way to accomplish this, but reproducibility can also be provided via detailed instructions for how to replicate the results, access to a hosted model (e.g., in the case of a large language model), releasing of a model checkpoint, or other means that are appropriate to the research performed.
        \item While NeurIPS does not require releasing code, the conference does require all submissions to provide some reasonable avenue for reproducibility, which may depend on the nature of the contribution. For example
        \begin{enumerate}
            \item If the contribution is primarily a new algorithm, the paper should make it clear how to reproduce that algorithm.
            \item If the contribution is primarily a new model architecture, the paper should describe the architecture clearly and fully.
            \item If the contribution is a new model (e.g., a large language model), then there should either be a way to access this model for reproducing the results or a way to reproduce the model (e.g., with an open-source dataset or instructions for how to construct the dataset).
            \item We recognize that reproducibility may be tricky in some cases, in which case authors are welcome to describe the particular way they provide for reproducibility. In the case of closed-source models, it may be that access to the model is limited in some way (e.g., to registered users), but it should be possible for other researchers to have some path to reproducing or verifying the results.
        \end{enumerate}
    \end{itemize}

\item {\bf Open access to data and code}
    \item[] Question: Does the paper provide open access to the data and code, with sufficient instructions to faithfully reproduce the main experimental results, as described in supplemental material?
    \item[] Answer: \answerYes{}
    \item[] Justification: All datasets used are publicly available and properly cited (\Cref{sec:appen-impl}); we will release anonymized code with the camera-ready version, and detailed implementation specifications are provided in \Cref{app:cova_fc_algorithm,sec:appen-impl}.
    \item[] Guidelines:
    \begin{itemize}
        \item The answer \answerNA{} means that paper does not include experiments requiring code.
        \item Please see the NeurIPS code and data submission guidelines (\url{https://neurips.cc/public/guides/CodeSubmissionPolicy}) for more details.
        \item While we encourage the release of code and data, we understand that this might not be possible, so \answerNo{} is an acceptable answer. Papers cannot be rejected simply for not including code, unless this is central to the contribution (e.g., for a new open-source benchmark).
        \item The instructions should contain the exact command and environment needed to run to reproduce the results. See the NeurIPS code and data submission guidelines (\url{https://neurips.cc/public/guides/CodeSubmissionPolicy}) for more details.
        \item The authors should provide instructions on data access and preparation, including how to access the raw data, preprocessed data, intermediate data, and generated data, etc.
        \item The authors should provide scripts to reproduce all experimental results for the new proposed method and baselines. If only a subset of experiments are reproducible, they should state which ones are omitted from the script and why.
        \item At submission time, to preserve anonymity, the authors should release anonymized versions (if applicable).
        \item Providing as much information as possible in supplemental material (appended to the paper) is recommended, but including URLs to data and code is permitted.
    \end{itemize}

\item {\bf Experimental setting/details}
    \item[] Question: Does the paper specify all the training and test details (e.g., data splits, hyperparameters, how they were chosen, type of optimizer) necessary to understand the results?
    \item[] Answer: \answerYes{}
    \item[] Justification: Optimizer choice, learning rate schedule, the fairness penalty $\lambda$, initialization, and convergence criteria are specified in \Cref{sec:exp-setting} and \Cref{sec:appen-impl}.
    \item[] Guidelines:
    \begin{itemize}
        \item The answer \answerNA{} means that the paper does not include experiments.
        \item The experimental setting should be presented in the core of the paper to a level of detail that is necessary to appreciate the results and make sense of them.
        \item The full details can be provided either with the code, in appendix, or as supplemental material.
    \end{itemize}

\item {\bf Experiment statistical significance}
    \item[] Question: Does the paper report error bars suitably and correctly defined or other appropriate information about the statistical significance of the experiments?
    \item[] Answer: \answerYes{}
    \item[] Justification: In \Cref{sec:large-number-subgroup}, we report results averaged over five random seeds. For the remaining experiments, we follow a shared protocol across baselines and report cost-fairness trade-off curves across multiple penalty levels and datasets; the observed trends are consistent across \Cref{sec:exp,sec:app-exp}.
    \item[] Guidelines:
    \begin{itemize}
        \item The answer \answerNA{} means that the paper does not include experiments.
        \item The authors should answer \answerYes{} if the results are accompanied by error bars, confidence intervals, or statistical significance tests, at least for the experiments that support the main claims of the paper.
        \item The factors of variability that the error bars are capturing should be clearly stated (for example, train/test split, initialization, random drawing of some parameter, or overall run with given experimental conditions).
        \item The method for calculating the error bars should be explained (closed form formula, call to a library function, bootstrap, etc.)
        \item The assumptions made should be given (e.g., Normally distributed errors).
        \item It should be clear whether the error bar is the standard deviation or the standard error of the mean.
        \item It is OK to report 1-sigma error bars, but one should state it. The authors should preferably report a 2-sigma error bar than state that they have a 96\% CI, if the hypothesis of Normality of errors is not verified.
        \item For asymmetric distributions, the authors should be careful not to show in tables or figures symmetric error bars that would yield results that are out of range (e.g., negative error rates).
        \item If error bars are reported in tables or plots, the authors should explain in the text how they were calculated and reference the corresponding figures or tables in the text.
    \end{itemize}

\item {\bf Experiments compute resources}
    \item[] Question: For each experiment, does the paper provide sufficient information on the computer resources (type of compute workers, memory, time of execution) needed to reproduce the experiments?
    \item[] Answer: \answerYes{}
    \item[] Justification: The hardware setup and per-experiment runtimes are reported in \Cref{sec:appen-impl} and the runtime tables (\Cref{tab:runtime}).
    \item[] Guidelines:
    \begin{itemize}
        \item The answer \answerNA{} means that the paper does not include experiments.
        \item The paper should indicate the type of compute workers CPU or GPU, internal cluster, or cloud provider, including relevant memory and storage.
        \item The paper should provide the amount of compute required for each of the individual experimental runs as well as estimate the total compute. 
        \item The paper should disclose whether the full research project required more compute than the experiments reported in the paper (e.g., preliminary or failed experiments that didn't make it into the paper). 
    \end{itemize}
    
\item {\bf Code of ethics}
    \item[] Question: Does the research conducted in the paper conform, in every respect, with the NeurIPS Code of Ethics \url{https://neurips.cc/public/EthicsGuidelines}?
    \item[] Answer: \answerYes{}
    \item[] Justification: We have reviewed the NeurIPS Code of Ethics, and the research conforms in all respects, including anonymity in submission and the use of publicly available datasets.
    \item[] Guidelines:
    \begin{itemize}
        \item The answer \answerNA{} means that the authors have not reviewed the NeurIPS Code of Ethics.
        \item If the authors answer \answerNo, they should explain the special circumstances that require a deviation from the Code of Ethics.
        \item The authors should make sure to preserve anonymity (e.g., if there is a special consideration due to laws or regulations in their jurisdiction).
    \end{itemize}

\item {\bf Broader impacts}
    \item[] Question: Does the paper discuss both potential positive societal impacts and negative societal impacts of the work performed?
    \item[] Answer: \answerYes{}
    \item[] Justification: \Cref{app:broader_impact} discusses positive impacts (mitigating subgroup unfairness in real applications, with scalability that broadens deployability), 
    negative impacts (only a statistical criterion at the clustering level, no guarantee of downstream fairness), and practitioner guidelines.
    \item[] Guidelines:
    \begin{itemize}
        \item The answer \answerNA{} means that there is no societal impact of the work performed.
        \item If the authors answer \answerNA{} or \answerNo, they should explain why their work has no societal impact or why the paper does not address societal impact.
        \item Examples of negative societal impacts include potential malicious or unintended uses (e.g., disinformation, generating fake profiles, surveillance), fairness considerations (e.g., deployment of technologies that could make decisions that unfairly impact specific groups), privacy considerations, and security considerations.
        \item The conference expects that many papers will be foundational research and not tied to particular applications, let alone deployments. However, if there is a direct path to any negative applications, the authors should point it out. For example, it is legitimate to point out that an improvement in the quality of generative models could be used to generate Deepfakes for disinformation. On the other hand, it is not needed to point out that a generic algorithm for optimizing neural networks could enable people to train models that generate Deepfakes faster.
        \item The authors should consider possible harms that could arise when the technology is being used as intended and functioning correctly, harms that could arise when the technology is being used as intended but gives incorrect results, and harms following from (intentional or unintentional) misuse of the technology.
        \item If there are negative societal impacts, the authors could also discuss possible mitigation strategies (e.g., gated release of models, providing defenses in addition to attacks, mechanisms for monitoring misuse, mechanisms to monitor how a system learns from feedback over time, improving the efficiency and accessibility of ML).
    \end{itemize}
    
\item {\bf Safeguards}
    \item[] Question: Does the paper describe safeguards that have been put in place for responsible release of data or models that have a high risk for misuse (e.g., pre-trained language models, image generators, or scraped datasets)?
    \item[] Answer: \answerNA{}
    \item[] Justification: The paper does not release any pre-trained generative model or scraped dataset; the contribution is a fair clustering algorithm with no high risk for misuse.
    \item[] Guidelines:
    \begin{itemize}
        \item The answer \answerNA{} means that the paper poses no such risks.
        \item Released models that have a high risk for misuse or dual-use should be released with necessary safeguards to allow for controlled use of the model, for example by requiring that users adhere to usage guidelines or restrictions to access the model or implementing safety filters. 
        \item Datasets that have been scraped from the Internet could pose safety risks. The authors should describe how they avoided releasing unsafe images.
        \item We recognize that providing effective safeguards is challenging, and many papers do not require this, but we encourage authors to take this into account and make a best faith effort.
    \end{itemize}

\item {\bf Licenses for existing assets}
    \item[] Question: Are the creators or original owners of assets (e.g., code, data, models), used in the paper, properly credited and are the license and terms of use explicitly mentioned and properly respected?
    \item[] Answer: \answerYes{}
    \item[] Justification: All datasets and baseline algorithms used are publicly available and cited at the original sources (\Cref{sec:appen-impl,sec:exp-setting}); we use them under their respective licenses and terms of use.
    \item[] Guidelines:
    \begin{itemize}
        \item The answer \answerNA{} means that the paper does not use existing assets.
        \item The authors should cite the original paper that produced the code package or dataset.
        \item The authors should state which version of the asset is used and, if possible, include a URL.
        \item The name of the license (e.g., CC-BY 4.0) should be included for each asset.
        \item For scraped data from a particular source (e.g., website), the copyright and terms of service of that source should be provided.
        \item If assets are released, the license, copyright information, and terms of use in the package should be provided. For popular datasets, \url{paperswithcode.com/datasets} has curated licenses for some datasets. Their licensing guide can help determine the license of a dataset.
        \item For existing datasets that are re-packaged, both the original license and the license of the derived asset (if it has changed) should be provided.
        \item If this information is not available online, the authors are encouraged to reach out to the asset's creators.
    \end{itemize}

\item {\bf New assets}
    \item[] Question: Are new assets introduced in the paper well documented and is the documentation provided alongside the assets?
    \item[] Answer: \answerNA{}
    \item[] Justification: The paper does not release any new datasets or pre-trained models; we will release anonymized code with the camera-ready version, with documentation following standard open-source practice.
    \item[] Guidelines:
    \begin{itemize}
        \item The answer \answerNA{} means that the paper does not release new assets.
        \item Researchers should communicate the details of the dataset\slash code\slash model as part of their submissions via structured templates. This includes details about training, license, limitations, etc. 
        \item The paper should discuss whether and how consent was obtained from people whose asset is used.
        \item At submission time, remember to anonymize your assets (if applicable). You can either create an anonymized URL or include an anonymized zip file.
    \end{itemize}

\item {\bf Crowdsourcing and research with human subjects}
    \item[] Question: For crowdsourcing experiments and research with human subjects, does the paper include the full text of instructions given to participants and screenshots, if applicable, as well as details about compensation (if any)?
    \item[] Answer: \answerNA{}
    \item[] Justification: The paper does not involve crowdsourcing or research with human subjects.
    \item[] Guidelines:
    \begin{itemize}
        \item The answer \answerNA{} means that the paper does not involve crowdsourcing nor research with human subjects.
        \item Including this information in the supplemental material is fine, but if the main contribution of the paper involves human subjects, then as much detail as possible should be included in the main paper. 
        \item According to the NeurIPS Code of Ethics, workers involved in data collection, curation, or other labor should be paid at least the minimum wage in the country of the data collector. 
    \end{itemize}

\item {\bf Institutional review board (IRB) approvals or equivalent for research with human subjects}
    \item[] Question: Does the paper describe potential risks incurred by study participants, whether such risks were disclosed to the subjects, and whether Institutional Review Board (IRB) approvals (or an equivalent approval/review based on the requirements of your country or institution) were obtained?
    \item[] Answer: \answerNA{}
    \item[] Justification: The paper does not involve research with human subjects.
    \item[] Guidelines:
    \begin{itemize}
        \item The answer \answerNA{} means that the paper does not involve crowdsourcing nor research with human subjects.
        \item Depending on the country in which research is conducted, IRB approval (or equivalent) may be required for any human subjects research. If you obtained IRB approval, you should clearly state this in the paper. 
        \item We recognize that the procedures for this may vary significantly between institutions and locations, and we expect authors to adhere to the NeurIPS Code of Ethics and the guidelines for their institution. 
        \item For initial submissions, do not include any information that would break anonymity (if applicable), such as the institution conducting the review.
    \end{itemize}

\item {\bf Declaration of LLM usage}
    \item[] Question: Does the paper describe the usage of LLMs if it is an important, original, or non-standard component of the core methods in this research? Note that if the LLM is used only for writing, editing, or formatting purposes and does \emph{not} impact the core methodology, scientific rigor, or originality of the research, declaration is not required.
    %this research?
    \item[] Answer: \answerNA{}
    \item[] Justification: LLMs were not used as part of the core methodology of this research; any use was limited to writing assistance and does not affect the scientific content.
    \item[] Guidelines:
    \begin{itemize}
        \item The answer \answerNA{} means that the core method development in this research does not involve LLMs as any important, original, or non-standard components.
        \item Please refer to our LLM policy in the NeurIPS handbook for what should or should not be described.
    \end{itemize}

\end{enumerate}

\end{document}